\theoremstyle{definition}
\newtheorem{definition}{Definition}[section]
\newtheorem{theorem}{Theorem}[section]
\theoremstyle{remark}
\theoremstyle{lemma}
\newtheorem{lemma}[theorem]{Lemma}
\newcommand{\pra}{\mathcal{P}^\mathcal{R}_{\mathcal{A},\mathcal{H}}}
\newcommand{\pha}{\mathcal{P}_{\mathcal{A},\mathcal{H}}}
\newcommand{\oh}[1]{\delta^{#1}}
\DeclarePairedDelimiter{\normone}{\lVert}{\rVert^1_1}
\DeclarePairedDelimiter{\normtwo}{\lVert}{\rVert^2_2}
\DeclarePairedDelimiter{\norminf}{\lVert}{\rVert_\infty}
\DeclareMathOperator*{\argmax}{\arg\max}
\DeclareMathOperator*{\argmin}{\arg\min}
\icmltitlerunning{Minimax Pareto Fairness: A Multi Objective Perspective}
\begin{document}

\twocolumn[
\icmltitle{Minimax Pareto Fairness: A Multi Objective Perspective}

% It is OKAY to include author information, even for blind
% submissions: the style file will automatically remove it for you
% unless you've provided the [accepted] option to the icml2020
% package.

% List of affiliations: The first argument should be a (short)
% identifier you will use later to specify author affiliations
% Academic affiliations should list Department, University, City, Region, Country
% Industry affiliations should list Company, City, Region, Country

% You can specify symbols, otherwise they are numbered in order.
% Ideally, you should not use this facility. Affiliations will be numbered
% in order of appearance and this is the preferred way.
\icmlsetsymbol{equal}{*}

\begin{icmlauthorlist}
\icmlauthor{Natalia Martinez}{equal,to}
\icmlauthor{Martin Bertran}{equal,to}
\icmlauthor{Guillermo Sapiro}{to}
\end{icmlauthorlist}

\icmlaffiliation{to}{Department of Electircal and Computer Engineering, Duke University}

\icmlcorrespondingauthor{Natalia Martinez}{natalia.martinez@duke.edu}

% You may provide any keywords that you
% find helpful for describing your paper; these are used to populate
% the "keywords" metadata in the PDF but will not be shown in the document
\icmlkeywords{Machine Learning, ICML}

\vskip 0.3in
]

% this must go after the closing bracket ] following \twocolumn[ ...

% This command actually creates the footnote in the first column
% listing the affiliations and the copyright notice.
% The command takes one argument, which is text to display at the start of the footnote.
% The \icmlEqualContribution command is standard text for equal contribution.
% Remove it (just {}) if you do not need this facility.

%\printAffiliationsAndNotice{}  % leave blank if no need to mention equal contribution
\printAffiliationsAndNotice{\icmlEqualContribution} % otherwise use the standard text.

% meaning all other solutions will produce unnecessary harm,

\begin{abstract}
In this work we formulate and formally characterize group fairness as a multi-objective optimization problem, where each sensitive group risk is a separate objective. We propose a fairness criterion where a classifier achieves minimax risk and is Pareto-efficient w.r.t. all groups, avoiding unnecessary harm, and can lead to the best zero-gap model if policy dictates so. We provide a simple optimization algorithm compatible with deep neural networks to satisfy these constraints. Since our method does not require test-time access to sensitive attributes, it can be applied to reduce worst-case classification errors between outcomes in unbalanced classification problems. We test the proposed methodology on real case-studies of predicting income, ICU patient mortality, skin lesions classification, and assessing credit risk, demonstrating how our framework compares favorably to other approaches.
\end{abstract}

\section{Introduction}
\label{sec:Introduction}

Machine learning algorithms play an important role in decision making in society. When these are used to make high-impact decisions such as hiring, credit-lending, predicting mortality for intensive care unit patients, or classifying skin lesions, it is paramount to guarantee that the prediction is both accurate and unbiased with respect to sensitive attributes such as gender or ethnicity. A model that is trained naively may not have these properties by default; see, for example \cite{barocas2016big}.

In these critical applications, it is desirable to impose some fairness criteria. Some well-known definitions of group fairness in the machine learning literature attempt to make algorithms whose predictions are independent of the sensitive populations (e.g., Demographic Parity, \cite{louizos2015variational,zemel2013learning,feldman2015certifying}); or algorithms whose outputs are independent of the sensitive attribute given the objective's ground truth (e.g., Equality of Odds, Equality of Opportunity, \cite{hardt2016equality, woodworth2017learning}). Notions of Individual Fairness have also been proposed \cite{dwork2012fairness,joseph2016rawlsian, zemel2013learning}. These can be appropriate in many scenarios, but in domains where quality of service is paramount, such as healthcare, we argue that it is necessary to strive for models that are as close to fair as possible without introducing unnecessary harm \cite{ustun2019fairness}. Additionally, a model satisfying these characteristics can be post-processed to introduce a controlled performance degradation that results in a perfectly fair, albeit harmful classifier. This is a decision beyond algorithmic design and is left to the policymaker, but machine-learning should inform fairness policy and provide the necessary tools to implement it.

Here we focus on group fairness in terms of predictive risk disparities, a metric that has been explored in recent works such as \cite{calders2010three,dwork2012fairness,feldman2015certifying,chen2018my,ustun2019fairness}. We formulate fairness as a Multi-Objective Optimization Problem and use Pareto optimality \cite{mas1995microeconomic} to define the set of all efficient classifiers, meaning that the increase in predictive risk on one group is due to a decrease in the risk of another (no unnecessary harm). We consider problems where target labels available for training are trustworthy (not affected by discrimination), and tackle fairness as a minimax problem where the goal is to find the classifier with the smallest maximum group risk among all efficient models. As a design choice, this implies that a system's risk is as good as its worst group performance, but we do not enforce zero risk disparity if the disadvantaged groups do not benefit directly. When perfect fairness is achievable, this reduces to finding the efficient classifier in our hypothesis class that has the same risks among all groups. Our approach differs from post-hoc correction methods like the ones proposed in \cite{hardt2016equality,woodworth2017learning}, where zero-disparity is enforced by design, and test-time access to sensitive attributes is needed. Since our proposed methodology does not require the latter, and is not restricted to binary sensitive and target variables, it can also be used to reduce worst-case classification error between outcomes in imbalanced classification scenarions.
\clearpage

\paragraph{Main Contributions.}  We formulate group fairness as a Multi-Objective Optimization Problem (MOOP), where each objective function is the sensitive group conditional risk of the model. We formalize \textit{no unnecessary harm} fairness using Pareto optimality \cite{mas1995microeconomic}; and characterize the space of Pareto-efficient classifiers for convex models and risk functions, which include deep neural networks (DNNs) and standard classifier losses. We show that all efficient classifiers under these conditions can be recovered with a simple modification of the overall risk function. We introduce and discuss minimax Pareto fairness (MMPF), where we select the efficient classifier with the smallest worst group conditional risk, and provide a simple and efficient algorithm to recover this classifier using standard (Stochastic) Gradient Descent on top of an adaptive loss. Critical to numerous applications in fairness and privacy, the proposed methodology does not require test-time access to the sensitive attributes. We also show that if the policy mandate is to obtain a zero-gap classifier, we can add harmful post-hoc corrections to the MMPF model, which ensures the lowest risk levels across all groups under certain conditions. In addition to this, we demonstrate how our methodology performs on real tasks such as inferring income status in the Adult dataset \cite{Dua:2019}, predicting ICU mortality rates in the MIMIC-III dataset from hospital notes \cite{johnson2016mimic}, classifying skin lesions in the HAM10000 dataset \cite{tschandl2018ham10000}, and assessing credit risk on the German Credit dataset \cite{Dua:20192}. Finally, since our methodology does not require access test-time sensitive attributes, it can be used to reduce worst-case classification error between outcomes in unbalanced classification problems. Code is available at \href{https://github.com/natalialmg/MMPF}{github.com/natalialmg/MMPF}.

\section{Related Work}
\label{sec:RelatedWork}
There is a growing body of work on group fairness in machine learning. Following \cite{friedler2019comparative}, we empirically compare our methodology against the works of \cite{feldman2015certifying,kamishima2012fairness,zafar2015fairness}. Our method shares conceptual similarities with \cite{zafar2017parity, woodworth2017learning,agarwal2018reductions, oneto2019taking}, but differs on the fairness objective and how it is adapted to work with standard neural networks. Although optimality is often discussed in the fairness literature, it is usually in the context of error-unfairness tradeoffs \cite{kearns2019ethical, kearns2017preventing}, and not between sensitive groups as studied here. The conflict between perfect fairness and optimality has been previously studied in \cite{kaplow1999conflict}, we acknowledge this impossibility and formally characterize what is achievable in the context of machine learning and classification.

The work presented in \cite{hashimoto2018fairness} discusses decoupled classifiers (one per sensitive group) as a way of minimizing group-risk disparity, but simultaneously cautions against this methodology when presented with insufficiently large datasets. The works of \cite{chen2018my,ustun2019fairness} also empirically report the disadvantages of decoupled classifiers as a way to mitigate risk disparity. Here we argue for the use of a single classifier since it does not require access to sensitive group membership during test time, and might allow transfer learning between diverse groups when possible. If access to group membership during test time is available, this can be naturally incorporated as part of our observation features; with a sufficiently rich hypothesis set, this is equivalent to training separate classifiers, with the added benefit of positive transfer on samples were groups share optimal decision boundaries \cite{ustun2019fairness,wang2020split}.

The work of \cite{chen2018my} uses the unified bias-variance decomposition advanced in \cite{domingos2000unified} to identify that noise levels across different sub-populations may differ, making perfect fairness parity impossible without explicitly degrading performance on one group. Their methodology attempts to bridge the disparity gap by collecting additional samples from high-risk sub-populations. Here we modify the classifier loss to improve worst-case group performance without inducing unnecessary harm, which could be considered synergistic with their methodology.

\section{Minimax Pareto Fairness: Formulation and Basic Properties}
\label{sec:ProblemStatement}
Consider we have access to a dataset $\mathcal{D} = \{(x_i,y_i,a_i)\}_{i=1}^n$ containing $n$ independent triplet samples drawn from a joint distribution $(x_i,y_i,a_i) \sim P(X,Y,A)$, where $x_i \in \mathcal{X}$ are our input features (e.g., images, tabular data), $y_i \in \mathcal{Y}$ is our target variable, and $a_i \in \mathcal{A}$ indicates group membership or sensitive status (e.g., ethnicity, gender); our input features $X$ may or may not explicitly contain $A$, meaning sensitive attributes need not be available at deployment.

Let $h \in \mathcal{H}$ be a classifier from a hypothesis class $\mathcal{H}$ trained to infer $y$ from $x$, $h:\mathcal{X} \rightarrow [0,1]^{|\mathcal{Y}|}$. We use $\oh{Y} \in \{0,1\}^{|\mathcal{Y}|}: \oh{Y}_i = \mathds{1}(Y=y_i), i=1,...,|\mathcal{Y}|,$ to denote the one-hot representation of $Y$. Given a loss function $\ell: [0,1]^{|\mathcal{Y}|}\times [0,1]^{|\mathcal{Y}|} \rightarrow \mathbbm{R}^{+}$ the group-specific risk of classifier $h$ on group $a$ is $r_a(h) = E_{X,Y|A=a}[\ell(h(X),\delta^Y)]$. We approach fairness as a Multi-Objective Optimization Problem (MOOP), where the classifier $h$ is our decision variable, the group-specific risks $\{r_a(h)\}_{a=1}^{|\mathcal{A}|}$ are our objective functions, and they conform a risk vector $\bm{r}({h})=\{r_a(h)\}_{a=1}^{|\mathcal{A}|}$. The MOOP can be stated as
\begin{equation}
\min\limits_{h \in \mathcal{H}} \;(r_1(h),r_2(h), \dots, r_{|\mathcal{A}|}(h)).
    \label{eq:MOOP}
\end{equation}
We use dominance \cite{miettinen2008introduction} to define optimality for a MOOP, namely, Pareto optimality, the definitions are given below. We later formally characterize the space of optimal multi-objective classifiers $h$ for well-known losses, and argue a fairness criteria where a fair classifier is both Pareto optimal and has the smallest maximum group risk. Lemmas and theorems are stated without proof throughout the main text, proofs are provided in Section \ref{sec:AppendixProofs}.

\begin{definition}{Dominant vector:}
A vector $\bm{r}' \in \mathbbm{R}^k$ is said to dominate $\bm{r} \in \mathbbm{R}^k$, noted as $\bm{r}' \prec \bm{r}$, if $ r'_i \le r_i, \forall i = 1,\dots,k$ and $\exists j: r'_{j} < r_{j}$ (i.e., strict inequality on at least one component). Likewise, we denote $\bm{r}' \preceq \bm{r}$ if $\bm{r} \not\prec \bm{r'}$
\label{def:dominanceVectorMain}
\end{definition}
\begin{definition}{Dominant classifier:}
Given a set of group-specific risk functions $\bm{r}(h)$, a classifier $h'$ is said to dominate $h^{''}$, noted as $h' \prec h^{''}$, if $\bm{r}({h'})\prec \bm{r}({h^{''}})$. Similarily, we denote $h' \preceq h^{''}$ if $\bm{r}({h'})\preceq \bm{r}({h^{''}})$.
\label{def:dominance}
\end{definition}
\begin{definition}{Pareto front and Pareto optimality:} Given a family of classifiers $\mathcal{H}$, and a set of group-specific risk functions $\bm{r}(h)$, the set of Pareto front classifiers is $\pha = \{ h \in \mathcal{H}: \nexists h' \in \mathcal{H}| h' \prec h\}  =\{ h \in \mathcal{H}: h\preceq h'\, \forall h' \in \mathcal{H}\}$. The corresponding achievable risks are denoted as $\pra = \{\bm{r} \in \mathbbm{R}^{+|\mathcal{A}|}:  \exists h \in \pha, \bm{r}=\bm{r}({h}) \}$. A classifier $h$ is a Pareto optimal solution to the MOOP in Eq.(\ref{eq:MOOP}) iff $h \in \pha$.
\label{def:paretofront2}
\end{definition}

\vspace{-1em}
{\bf No unnecessary harm fairness.} The Pareto front defines the best achievable trade-offs between population risks $r_a(h)$. This is already suited for classification and regression tasks where the sensitive attributes are categorical. Constraining the classifier to be in the Pareto front disallows laziness, there exists no other classifier in the hypothesis class $\mathcal{H}$ that is at least as good on all group-specific risks and strictly better in one of them. In this sense, we say that a classifier in the Pareto front does \textit{no unnecessary harm}.

Literature on fairness has focused on putting constraints on the norm of discrimination gaps \cite{zafar2017parity,zafar2015fairness,creager2019flexibly, woodworth2017learning}. Here we focus on minimizing the risk on the worst performing group (Definition \ref{def:noharm}), these two criteria often yield similar results, and can be shown to be identical for Pareto optimal classifiers when $|\mathcal{A}|=2$. For more than $2$ sensitive groups, there may be situations where minimum risk discrepancy leads to higher minimax risk (e.g., a classifier that increases both the minimum and maximum risk but decreases their gap is still optimal if a third group sees their risk diminished). Constraining solutions to be Pareto optimal and minimizing the maximum risk preserves the overall idea of reducing risk disparities while avoiding some potentially undesirable tradeoffs. We formalize this next:

\begin{definition}{Minimax Pareto fair classifier and Minimax Pareto fair vector:}
A classifier $h^*$ is a Minimax Pareto fair classifier if it minimizes the worst group-specific risk among all Pareto front classifiers, $h^* \in \argmin\limits_{h \in \pha} \max\limits_{a\in \mathcal{A}}r_{a}(h) = \argmin\limits_{h \in \pha} \norminf{\bm{r}(h)}$, with corresponding Minimax Pareto-fair risk vector $\bm{r}^* =\bm{r}({h^*})$. 
\label{def:noharm}
\end{definition}
\vspace{-.8em}
An important consequence of this formulation is that when the hypothesis class $\mathcal{H}$ contains a classifier that is both Pareto optimal and has zero risk disparity, this classifier is also Minimax Pareto fair.

\begin{lemma} If $\exists h^* \in \pha:  r_a(h^*)=r_{a'}(h^*),\forall a,a' \in \mathcal{A}$ then $ \bm{r}(h^*) = \argmin_{\bm{r} \in \pra} \norminf{\bm{r}}$.
\label{lemma:perfectfairnessminmax}
\end{lemma}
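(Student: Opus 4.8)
The plan is to show that a Pareto-optimal classifier with zero risk disparity simultaneously attains the minimax value, which pins it down as the minimizer of $\norminf{\bm{r}}$ over the Pareto front risks. Let $h^* \in \pha$ satisfy $r_a(h^*) = r_{a'}(h^*)$ for all $a, a' \in \mathcal{A}$, and write $c$ for this common value, so that $\norminf{\bm{r}(h^*)} = c$. I would argue by contradiction: suppose there exists some $\bm{r}' \in \pra$ with $\norminf{\bm{r}'} < \norminf{\bm{r}(h^*)} = c$. By definition of the supremum norm on the risk vector, this means $r'_a < c$ for \emph{every} group $a \in \mathcal{A}$ simultaneously.

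The key step is to turn this componentwise strict inequality into a dominance relation. Since $r'_a < c = r_a(h^*)$ for all $a$, we have in particular $r'_a \le r_a(h^*)$ for all $a$ and strict inequality on (indeed every, but at least one) component, which is precisely the condition $\bm{r}' \prec \bm{r}(h^*)$ from Definition \ref{def:dominanceVectorMain}. Let $h'$ be a classifier in $\pha$ achieving $\bm{r}(h') = \bm{r}'$, which exists by the definition of $\pra$. Then $h' \prec h^*$ in the sense of Definition \ref{def:dominance}, contradicting the fact that $h^* \in \pha$ is Pareto optimal, since no classifier in $\mathcal{H}$ (and in particular none in $\pha \subseteq \mathcal{H}$) may dominate a Pareto front classifier.

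It remains to conclude that $\bm{r}(h^*)$ is the \emph{argmin}, which follows because no $\bm{r}' \in \pra$ can have strictly smaller infinity norm, so $\bm{r}(h^*)$ attains the minimum of $\norminf{\cdot}$ over $\pra$; this is exactly the characterization of the Minimax Pareto-fair vector, matching Definition \ref{def:noharm}. I expect the only genuinely delicate point to be the translation between the strict inequality in the norm and the dominance relation: the equal-risk hypothesis is doing real work here, because it guarantees that beating the worst-group risk $c$ forces an improvement on \emph{all} coordinates at once, which is what manufactures the dominance. Without the zero-disparity assumption, a smaller infinity norm need only improve the maximal coordinates and could worsen others, so no dominance (and hence no contradiction) would arise — this is precisely why equalized risk is the hypothesis that makes minimax and Pareto optimality coincide.
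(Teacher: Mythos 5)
Your argument is correct and is essentially the paper's own proof: both assume for contradiction a Pareto-front risk vector with strictly smaller infinity norm, use the chain $r'_a \le \norminf{\bm{r}'} < \norminf{\bm{r}(h^*)} = r_a(h^*)$ (valid for all $a$ precisely because of the equal-risk hypothesis) to obtain dominance, and contradict $h^* \in \pha$. Your closing remark on why the zero-disparity assumption is what converts a norm inequality into componentwise dominance is a nice articulation of the same mechanism the paper relies on implicitly.
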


Even when perfect equality of risk is desirable, Pareto classifiers still serve as useful intermediaries. To this end, Lemma \ref{lemma:eopareto}, shows that any classifier in $\mathcal{H}$ that attains equality of risk has worse performance on all groups than the Minimax Pareto fair classifier. Furthermore, we can post-process the Pareto fair classifier to be perfectly fair by increasing risk on over-performing groups, this procedure is still no worse than obtaining an equal risk classifier in our hypothesis class.

\begin{lemma} Let $h_{ER} \in \mathcal{H}$ be an equal risk classifier such that $r_a(h_{ER})=r_{a'}(h_{ER}) \forall\, a, a'$, and let $h^*$ be the Pareto fair classifier. Additionally, define the Pareto fair post-processed equal risk classifier $h^*_{ER} : r_{a}(h^{*}_{ER}) = ||\bm{r}(h^*)||_{\infty} \forall \,a \in \mathcal{A}$, then we have
\vspace{-.2em}
\begin{equation*}
\begin{array}{rl}
    r_a(h_{ER}) &\ge r_a(h_{ER}^*) \ge  r_a(h^*)\; \forall a \in \mathcal{A}.
    %\\
    %r_a(h^{ER}) &\ge r_a(h^{*,ER})\; \forall a \in \mathcal{A}.
\end{array}
\end{equation*}
\label{lemma:eopareto}
\end{lemma}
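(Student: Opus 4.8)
The plan is to prove the two inequalities separately, with the rightmost one ($r_a(h^*_{ER}) \ge r_a(h^*)$) being essentially immediate from the definitions and the leftmost one ($r_a(h_{ER}) \ge r_a(h^*_{ER})$) requiring a short argument about the minimax value. For the rightmost inequality, I would simply unwind the definition of $h^*_{ER}$: by construction $r_a(h^*_{ER}) = \norminf{\bm{r}(h^*)} = \max_{a' \in \mathcal{A}} r_{a'}(h^*) \ge r_a(h^*)$ for every $a \in \mathcal{A}$, since the maximum over groups dominates any single group's risk. This step is purely definitional and needs no further work.

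For the leftmost inequality, I would first exploit the two ``flat'' structures: $h_{ER}$ is equal-risk, so $r_a(h_{ER}) = \norminf{\bm{r}(h_{ER})}$ for all $a$, and $h^*_{ER}$ is constant across groups by definition with $r_a(h^*_{ER}) = \norminf{\bm{r}(h^*)}$. Hence the claim $r_a(h_{ER}) \ge r_a(h^*_{ER})$ for all $a$ is equivalent to the single scalar inequality $\norminf{\bm{r}(h_{ER})} \ge \norminf{\bm{r}(h^*)}$. The heart of the argument is then to show that the minimax value attained by $h^*$ over the Pareto front $\pha$ coincides with the minimax value over the entire hypothesis class, i.e. $\norminf{\bm{r}(h^*)} = \min_{h \in \mathcal{H}} \norminf{\bm{r}(h)}$. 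Given this, because $h_{ER} \in \mathcal{H}$, we immediately obtain $\norminf{\bm{r}(h_{ER})} \ge \norminf{\bm{r}(h^*)}$, closing the chain.

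To establish $\norminf{\bm{r}(h^*)} = \min_{h \in \mathcal{H}} \norminf{\bm{r}(h)}$, I would argue that $\pha$ is externally stable: every $h \in \mathcal{H}$ is weakly dominated by some Pareto-optimal $h' \in \pha$, meaning $r_a(h') \le r_a(h)$ for all $a$ (if $h$ is already Pareto-optimal, take $h'=h$; otherwise a dominating classifier exists and one passes to a dominating Pareto point). For any such $h'$ we have $\norminf{\bm{r}(h')} \le \norminf{\bm{r}(h)}$ by monotonicity of the infinity norm under componentwise inequalities. Consequently the infimum of $\norminf{\bm{r}(\cdot)}$ over $\mathcal{H}$ is already attained within $\pha$, and by definition of the Minimax Pareto fair classifier this value equals $\norminf{\bm{r}(h^*)}$.

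I expect the main obstacle to be the external-stability claim, namely producing a Pareto-optimal classifier that weakly dominates an arbitrary $h \in \mathcal{H}$. This is where a regularity hypothesis is needed: closedness of the achievable risk set $\{\bm{r}(h) : h \in \mathcal{H}\}$ (together with boundedness below, which holds since risks are nonnegative) guarantees that chains of dominating classifiers terminate at a Pareto point and that the relevant minima are attained. Under the convexity assumptions used elsewhere in the paper this holds, so I would state it explicitly and then invoke it. The remaining pieces, monotonicity of $\norminf{\cdot}$ and the reduction via the two flat risk profiles, are routine bookkeeping.
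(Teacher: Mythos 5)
Your proposal is correct and follows the same overall route as the paper: both reduce the claim to the single scalar inequality $\norminf{\bm{r}(h_{ER})} \ge \norminf{\bm{r}(h^*)}$ (the right-hand inequality being definitional, since $r_a(h^*_{ER}) = \norminf{\bm{r}(h^*)} = \max_{a'} r_{a'}(h^*) \ge r_a(h^*)$), and both derive that inequality from the minimax optimality of $h^*$. Where you differ is in how that key inequality is justified. The paper disposes of it in one line: ``otherwise $\norminf{\bm{r}(h_{ER})} < \norminf{\bm{r}(h^*)}$ and also $h_{ER} \preceq h^*$, which contradicts the definition of $h^*$.'' This is loose: $h^*$ is defined as a minimizer of $\norminf{\bm{r}(\cdot)}$ over $\pha$ only, and $h_{ER}$ need not lie in $\pha$, so to reach a contradiction one must produce a \emph{Pareto-optimal} classifier whose minimax risk is below that of $h^*$ --- exactly the external-stability step you make explicit. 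Your argument (every $h \in \mathcal{H}$ is weakly dominated by some element of $\pha$, hence the Pareto-restricted minimax value equals the unrestricted one) closes this gap, and your observation that some regularity is needed is accurate: closedness of the achievable risk set together with nonnegativity of the risks makes the sections $\{\bm{r}' : \bm{0} \le \bm{r}' \le \bm{r}(h)\}$ compact, so minimizing a strictly positive linear scalarization over such a section yields the required dominating Pareto point; without such a condition one can build decreasing dominance chains of equal-risk classifiers whose infimum is not attained on $\pha$, and the lemma as stated fails. So your version is not a different proof so much as a rigorous completion of the paper's, at the cost of an explicit (and, under the paper's convexity setting, harmless) regularity hypothesis.
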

\vspace{-2em}
We provide a fair optimal classifier that improves the worst group risk. It also serves as an intermediate step to get perfect fairness; the decision between the two is left to the policymaker.
To exemplify these notions graphically, Figure \ref{fig:ParetoCurve2d} shows a scenario with binary sensitive attributes $a$ where none of the Pareto classifiers achieve equality of risk. Here the noise level differs between groups, and the Pareto fair risk $\bm{r}^*$ is not achieved by either a Naive classifier (minimizes expected global risk), or a classifier where groups are re-sampled to appear with equal probability (Balanced classifier). We observe how the discrimination gap along the Pareto front is closed by trading off performance from one group to another. The gap can be further closed by moving outside the Pareto front, but this discrimination reduction is a result of performance degradation on the privileged group, with no tangible upside to the underprivileged one.

\begin{figure}[ht]
 \centering
 % include first image
 \includegraphics[width=1\linewidth]{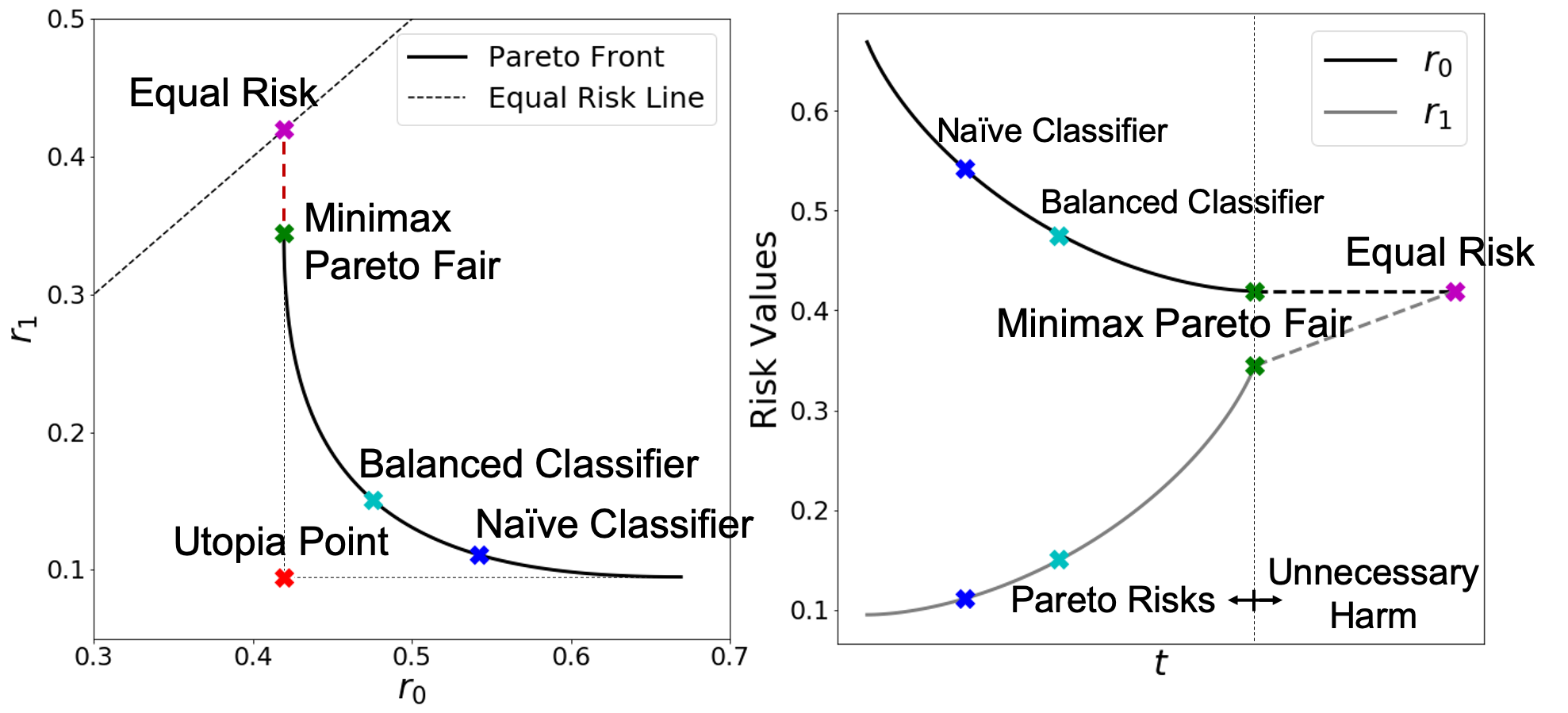}
 \vspace{-2em}
  \caption{Achievable risk trade-offs for a binary classification problem $Y\in\{0,1\}$ with two unbalanced groups $A\in\{0,1\}$, and covariates $X|A\sim N(\mu_A,1)$ (parameters provided in Supplementary Material). \it{Left:} Pareto front risks and the equal risk line; the minimax Pareto fair point (green) does not achieve equality of risk; the trade-offs attained by standard (Naive, blue) classifier and a class-rebalanced (Balanced, cyan) classifier are also shown. The Utopia point (red) corresponds to the minimum achievable risk for each group. \it{Right:} Parametrization of group risks along the Pareto front line, and on the minimax Pareto fair to Equal Risk line (Unnecessary harm). All points in the Pareto front efficiently trade-off performance between groups; the trajectory outside of the Pareto front, however, does not improve performance on the worst performing group $r_0$, it only degrades performance on $r_1$.}
 \label{fig:ParetoCurve2d}
\end{figure}

Section \ref{sec:OptimizationMethods} provides a method to recover the minimax Pareto fair classifier (MMPF) from training samples. Before that, Section \ref{sec:AnalysisPOSols} shows important properties of Pareto-efficient classifiers for convex hypothesis classes, including DNNs, and risk functions.

\section{Analysis of Pareto Optimal Solutions}
\label{sec:AnalysisPOSols}
In this section we characterize the Pareto front for convex hypothesis classes and convex risk functions. under these conditions the models have attractive regularity properties. 
% \vspace{-2em}
\begin{definition}{Convex hypothesis class and risk function:}
A hypothesis class $ \mathcal{H} \subseteq \{ h: \mathcal{X} \rightarrow [0,1]^{|\mathcal{Y}|}\}$ is convex iff $\forall\, h',h'' \in \mathcal{H},\; \lambda \in [0,1], \rightarrow \lambda h' + (1-\lambda)h'' \in \mathcal{H}$. 

A risk function $r:\mathcal{H} \rightarrow \mathbbm{R}_+$ is convex iff $\forall \lambda \in [0,1] \; r(\lambda h' + (1-\lambda)h'')  \le \lambda r(h') + (1-\lambda)r(h'')$
\label{def:ConvexHypothesisClass}
\end{definition}
\begin{definition}{Convex Pareto front:}
A Pareto front $\pra \subseteq \mathbbm{R}^{|\mathcal{A}|}$ is convex if $\forall \bm{r}, \bm{r}' \in \pra, \lambda \in [0,1], \exists \bm{r}^\lambda \in \pra$ such that $\bm{r}^\lambda\preceq \lambda \bm{r} + (1-\lambda) \bm{r}'$
\label{def:ConvexParetoFront}
\end{definition}
Convexity of the hypothesis class for DNNs can be seen as a natural consequence of the Universal Function Approximation Theorem, shown for fully connected neural networks in \cite{hornik1989multilayer}, and more recently for convolutional NNs (CNNs) in \cite{zhou2020universality}. Note that this convexity is w.r.t. its function output space (i.e., for any two classifiers in the hypothesis class $h_1, h_2$, there exists a classifier in this same family $h_\lambda : h_\lambda(x) = (1-\lambda)h_1(x) + \lambda h_2(x) \forall x, \lambda \in [0,1]$), the parameter space itself may be highly non-convex. As for risk functions, many standard classification losses such as Brier Score ($r_a^{BS}(h) = E_{X,Y\mid a} [\normtwo{\oh{Y} - h(X)}]$) and Cross Entropy ($r_a^{CE}(h) = E_{X,Y\mid a} [\langle\oh{Y},  \ln (h(X))\rangle]$) are convex w.r.t. the classifier output. 

The following theorem (Theorem \ref{theo:linearweightconvex}) shows that under these conditions, the Pareto front can be fully characterized by solving the linear weighting problem: 
\begin{equation}
\begin{array}{l}
    \hat{h}= \argmin\limits_{h \in \mathcal{H}} \sum_{a=1}^{|\mathcal{A}|}\mu_a r_a(h); \\
    \normone{\bm{\mu}}=1, \hspace{0.05in}  \mu_a>0,\hspace{0.05in} a=1,...,|\mathcal{A}|.
\end{array}
\label{eq:linearweightproblem}
\end{equation}
% \vspace{-0.2em}
We use the shorthand notation $h^{\bm{\mu}}$ to describe a classifier that solves Problem \ref{eq:linearweightproblem}; likewise, we denote $\bm{r}(\bm{\mu}) = \bm{r}_{h^{\bm{\mu}}}$. We utilize the results derived in \cite{geoffrion1968proper} to show that when both the hypothesis class $\mathcal{H}$ and the risk functions are convex, any optimal classifier $h \in \pha$ is a solution to Problem \ref{eq:linearweightproblem} for some choice of weights $\bm{\mu}$. Note that even when the risk functions and hypothesis class are non-convex, solutions to Problem \ref{eq:linearweightproblem} still belong to the Pareto front.

\begin{theorem}
Given $\mathcal{H}$ a convex hypothesis class and $\{r_a(h)\}_{a \in \mathcal{A}}$ convex risk functions then:
\begin{enumerate}
    \item The Pareto front is convex: $\forall \bm{r}, \bm{r}' \in \pra,\, \lambda \in [0,1], \; \exists \bm{r}'' \in \pra: \bm{r}'' \preceq \lambda \bm{r} + (1-\lambda) \bm{r}' $.
    \item Every Pareto solution is a solution to Problem \ref{eq:linearweightproblem}: $\forall \bm{\hat{r}} \in \pra, \exists \bm{\mu}: \bm{\hat{r}} = \bm{r}(\bm{\mu})$.
\end{enumerate}
\label{theo:linearweightconvex}
\end{theorem}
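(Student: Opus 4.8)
The plan is to prove the two claims in turn. For Claim 1 (convexity of the Pareto front) I would argue directly from the two convexity hypotheses. Take any $\bm{r},\bm{r}' \in \pra$ with Pareto-optimal witnesses $h,h' \in \pha$ (so $\bm{r}=\bm{r}(h)$, $\bm{r}'=\bm{r}(h')$), and fix $\lambda \in [0,1]$. Convexity of $\mathcal{H}$ gives $h^\lambda := \lambda h + (1-\lambda)h' \in \mathcal{H}$, and convexity of each $r_a$ gives, componentwise, $r_a(h^\lambda) \le \lambda r_a(h) + (1-\lambda) r_a(h')$, that is $\bm{r}(h^\lambda) \le \lambda\bm{r} + (1-\lambda)\bm{r}'$. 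If $h^\lambda$ is itself Pareto optimal I take $\bm{r}'' = \bm{r}(h^\lambda)$; otherwise it is dominated by some $h'' \in \pha$, and then $\bm{r}'' := \bm{r}(h'') \prec \bm{r}(h^\lambda) \preceq \lambda\bm{r} + (1-\lambda)\bm{r}'$ gives the required $\bm{r}'' \in \pra$. The only point to check carefully is that a dominating Pareto point always exists, which I would obtain from the existence of the Pareto front over the achievable risk set.

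For Claim 2 I would reduce to a separating-hyperplane argument on the upward closure of the achievable risk set. Define $\mathcal{B} = \{\bm{z} \in \mathbbm{R}^{|\mathcal{A}|} : \exists h \in \mathcal{H},\ \bm{r}(h) \le \bm{z}\}$. Using exactly the convex-combination computation from Claim 1, $\mathcal{B}$ is convex and upward closed. Fix a Pareto point $\bm{\hat{r}} \in \pra$; Pareto optimality means no achievable risk strictly dominates $\bm{\hat{r}}$, so $\mathcal{B}$ is disjoint from the open convex cone $\bm{\hat{r}} - \mathbbm{R}_{++}^{|\mathcal{A}|}$. Separating these two convex sets yields a nonzero $\bm{\mu}$ with $\bm{\mu}^{\top}\bm{z} \ge \bm{\mu}^{\top}\bm{\hat{r}}$ for all $\bm{z} \in \mathcal{B}$, and upward closedness of $\mathcal{B}$ forces $\bm{\mu} \ge 0$. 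Normalizing so that $\normone{\bm{\mu}}=1$ and noting $\bm{r}(h) \in \mathcal{B}$ for every $h$, this says precisely that $\hat{h}$ minimizes $\sum_a \mu_a r_a(h)$, i.e. $\bm{\hat{r}} = \bm{r}(\bm{\mu})$.

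The main obstacle is upgrading the weights from $\bm{\mu} \ge 0$ to the strictly positive $\mu_a > 0$ demanded by Problem \ref{eq:linearweightproblem}: a bare separating hyperplane only certifies weak Pareto optimality and may assign zero weight to some group (a supporting hyperplane that is ``vertical'' in one coordinate). This is exactly the gap that Geoffrion's notion of proper efficiency closes. I would invoke \cite{geoffrion1968proper}, which for convex objectives over a convex feasible set characterizes the properly efficient points as exactly the minimizers of $\sum_a \mu_a r_a$ with $\bm{\mu} > 0$, and which (using the boundedness of the bounded-loss risks here, so that trade-off ratios stay finite) identifies every efficient point as properly efficient. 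Combining that characterization with the convexity established above delivers strictly positive weights for every $\bm{\hat{r}} \in \pra$. I would close by remarking that the converse inclusion is immediate and needs no convexity: any minimizer $h^{\bm{\mu}}$ of a strictly positively weighted sum must be Pareto optimal, since a dominating classifier would strictly decrease $\sum_a \mu_a r_a$.
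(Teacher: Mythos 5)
Your proof follows essentially the same route as the paper's: Claim 1 is argued identically (form the convex combination $h^\lambda$, then take either $\bm{r}(h^\lambda)$ itself or a Pareto point dominating it), and Claim 2 is delegated to Geoffrion's characterization of efficiency for convex problems, which is exactly what the paper does by citation --- your separating-hyperplane sketch simply unpacks what that reference contains. The subtlety you flag, that a bare supporting hyperplane may assign zero weight to some group and that one must argue every efficient point here is \emph{properly} efficient before Geoffrion's theorem delivers strictly positive weights, is a real one, but the paper's one-line appeal to \cite{geoffrion1968proper} passes over it in the same way, so you are no worse off than the original.
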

\vspace{-1em}
\begin{comment} {\color{red} Maybe somewhere later add that more losses in supp??}\end{comment}
We can then characterize the Pareto optimal classifiers for Brier score (BS) and Cross-Entropy (CE) in the infinite samples and unbounded hypothesis class regime. The following provides an expression for the optimal classifiers and risks in terms of the probability densities and weights.

\begin{theorem} 
Given input features $X \in \mathcal{X}$, categorical target $Y \in \mathcal{Y}$ and sensitive group  $A \in \mathcal{A}$, with joint distribution $p(X,Y,A)$, and weights $\bm{\mu} = \{\mu_a\}_{a \in \mathcal{A}}$, the optimal predictor to the linear weighting problem $h(\bm{\mu})$ for both Brier score and Cross-Entropy is
\vspace{-.51em}
\begin{equation*}
 \begin{array}{cc}
h^{\bm{\mu}}(x)  = \frac{\sum_{a\in\mathcal{A}}\mu_a p(x|a) p(y|x,a)}{\sum_{a\in\mathcal{A}}\mu_a p(x|a)},
\end{array}
\end{equation*}
\noindent with corresponding risks
\begin{equation*}
 \begin{array}{ll}
 r^{BS}_a(\bm{\mu}) = E_{X,Y|a}[||\oh{Y}-p(y|X,a)||^2_2] +\\
        \hspace{0.6in} E_{X|a}\Big[||p(y|X,a) - h^{\bm{\mu}}(X)||^2_2\Big], \\
r^{CE}_a(\bm{\mu}) =H(Y|X,a)+\\ \hspace{0.6in} E_{X|a}\Big[D_{KL}\Big(p(y|X,a)\big|\big|h^{\bm{\mu}}(X)\Big)\Big], 
        
\end{array}
\end{equation*}
where $p(y|X,a)=\{p(Y=y_i|X,A=a)\}_{i=1}^{|\mathcal{Y}|}$ is the probability mass vector of $Y$ given $X$ and $A=a$. $H(Y|X,a)$ is the conditional entropy $H(Y|X,A=a) = E_{X|A=a}[H(Y|X=X,A=a)]$.
\label{theo:briercrosspareto}
\end{theorem}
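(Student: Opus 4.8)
The plan is to exploit the infinite-sample, unbounded-hypothesis-class regime, which lets us solve Problem \ref{eq:linearweightproblem} by pointwise minimization in $x$. Writing $E_{X,Y|a}[\cdot] = \int p(x|a)\sum_y p(y|x,a)(\cdot)\,dx$ and collecting the groups, the weighted objective $\sum_a \mu_a r_a(h)$ becomes a single integral $\int \big(\sum_a \mu_a p(x|a)\sum_y p(y|x,a)\,\ell(h(x),\oh{y})\big)\,dx$. Since $\mathcal{H}$ is unbounded we may select the value $v := h(x)$ independently for each $x$, so it suffices to minimize the integrand $g_x(v) = \sum_a \mu_a p(x|a)\sum_y p(y|x,a)\,\ell(v,\oh{y})$ over admissible $v$, for $x$ in the support of the mixture $\sum_a \mu_a p(x|a)$.

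For the Brier score, $\ell(v,\oh{y}) = \|\oh{y}-v\|_2^2$, so $g_x$ is a strictly convex quadratic in $v$ (the coefficients $\mu_a p(x|a)$ are nonnegative with positive sum on the support). Setting $\nabla_v g_x = 0$ gives $v\sum_a \mu_a p(x|a) = \sum_a \mu_a p(x|a)\sum_y p(y|x,a)\oh{y}$; using $\sum_y p(y|x,a)\oh{y} = p(y|x,a)$ and $\sum_y p(y|x,a)=1$ yields exactly $h^{\bm{\mu}}(x)$. For cross-entropy, $\ell(v,\oh{y}) = -\langle \oh{y},\ln v\rangle = -\ln v_y$, and the admissible set is now the probability simplex, so I would minimize $g_x(v) = -\sum_y w_y \ln v_y$ with $w_y := \sum_a \mu_a p(x|a)p(y|x,a)$ subject to $\sum_y v_y = 1$. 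A Lagrange multiplier gives $v_y \propto w_y$, and normalizing (again via $\sum_y p(y|x,a)=1$) recovers the same $h^{\bm{\mu}}(x)$; convexity of $-\ln$ certifies this stationary point as the global minimizer.

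The risk expressions then follow from add-and-subtract decompositions against the group conditional $p(y|X,a)$. For BS, expanding $\|\oh{Y}-h^{\bm{\mu}}(X)\|_2^2$ around $p(y|X,a)$ produces a squared-error term $\|\oh{Y}-p(y|X,a)\|_2^2$, an approximation term $\|p(y|X,a)-h^{\bm{\mu}}(X)\|_2^2$, and a cross term; taking $E_{X,Y|a}$, the cross term vanishes by the tower rule since $E_{Y|X,a}[\oh{Y}]=p(y|X,a)$ while the other factor is $X$-measurable, leaving the stated $r_a^{BS}(\bm{\mu})$. For CE, writing $-\ln h^{\bm{\mu}}(X)_Y = -\ln p(Y|X,a) + \ln\big(p(Y|X,a)/h^{\bm{\mu}}(X)_Y\big)$ and taking $E_{X,Y|a}$ identifies the first term as the conditional entropy $H(Y|X,a)$ and the second as $E_{X|a}[D_{KL}(p(y|X,a)\,\|\,h^{\bm{\mu}}(X))]$, giving $r_a^{CE}(\bm{\mu})$.

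The main obstacle is justifying the pointwise reduction rigorously — that minimizing the integral over $h\in\mathcal{H}$ coincides with minimizing the integrand for almost every $x$ — which is exactly where the unbounded-hypothesis-class assumption does the work and calls for a measurability/interchange argument, together with the extra care cross-entropy demands in minimizing over the simplex rather than over $[0,1]^{|\mathcal{Y}|}$ unconstrained (and in confirming the stationary point is a genuine minimizer). The remaining algebra is routine.
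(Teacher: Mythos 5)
Your proposal is correct and follows essentially the same route as the paper: both reduce the weighted objective to a pointwise minimization of a reweighted expected loss over $x$ (the paper phrases this via the tilted posterior $P^{\mu}(Y|X)$ and cites the proper-scoring-rule fact that the Bayes act for BS and CE is the predictive distribution itself, where you carry out the first-order/Lagrangian calculation explicitly), and the risk formulas are obtained by the same add-and-subtract decompositions with the cross term killed by $E_{Y|X,a}[\oh{Y}]=p(y|X,a)$. Your closing remark about rigorously justifying the pointwise interchange is a fair point of care that the paper's proof also glosses over, but it does not change the substance of the argument.
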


The optimal risks for BS and CE are decomposed as the sum of two non-negative terms. The first term corresponds to the minimum achievable group risk, attained with the group-specific optimal classifier $p(y|X,a)$, this is independent of $h^{\bm{\mu}}$. The second term measures the discrepancy between $p(y|X,a)$ and the optimal predictor $h^{\bm{\mu}}$. Since both risk functions are convex, this is a full characterization of all asymptotically optimal multi-objective classifiers; this includes our proposed minimax Pareto fair classifier.

\begin{comment} {\color{red} ref on separability}\end{comment}
From the expressions in Theorem \ref{theo:briercrosspareto} we observe that if the separability condition $Y \perp A|X$ is satisfied, the minimum risk for each subgroup is attained. Here the Pareto front only contains the Utopia point (see Figure \ref{fig:ParetoCurve2d}). Additionally, if the entropy of the sensitive attribute given our features is small ($A$ is well predicted from $X$), the Pareto front also tends to the Utopia point. This is formalized in the following lemma.

\begin{lemma} In the conditions of Theorem \ref{theo:briercrosspareto} we observe that if $Y \perp A|X$ then
\begin{equation*}
 \begin{array}{ll}
 r^{BS}_a(\bm{\mu}) = E_{X,Y|a}[||\oh{Y}-p(y|X)||^2_2] \;\forall \bm{\mu}, \\
r^{CE}_a(\bm{\mu}) =H(Y|X) \; \forall \bm{\mu}. 
\end{array}
\end{equation*}
\vspace{-0.2em}
Likewise, if $H(A|X) \rightarrow 0$ then 
\vspace{-0.5em}
\begin{equation*}
 \begin{array}{ll}
 r^{BS}_a(\bm{\mu}) \rightarrow E_{X,Y|a}[||\oh{Y}-p(y|X,a)||^2_2] \;\forall \bm{\mu}, \\
r^{CE}_a(\bm{\mu})  \rightarrow H(Y|X,a) \; \forall \bm{\mu}. 
\end{array}
\end{equation*}
\label{lemma:limitingCases}
\end{lemma}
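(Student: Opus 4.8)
The plan is to work directly from the closed-form expressions in Theorem~\ref{theo:briercrosspareto} and track what happens to the optimal predictor $h^{\bm{\mu}}$ and to the two residual terms in each limiting regime. Recall that both risks decompose as an irreducible (Utopia) term plus a non-negative discrepancy term; so in each case it suffices to show that $h^{\bm{\mu}}$ collapses to the relevant group-optimal predictor and that the discrepancy term vanishes, which will also establish the independence from $\bm{\mu}$ claimed in the statement.

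For the first claim ($Y \perp A \mid X$) the argument is a direct substitution. Independence gives $p(y\mid x,a)=p(y\mid x)$ for every $a$, so in the formula for $h^{\bm{\mu}}$ the common factor $p(y\mid x)$ pulls out of both numerator and denominator and cancels the $\mu_a\,p(x\mid a)$ weights, leaving $h^{\bm{\mu}}(x)=p(y\mid x)$ for every choice of $\bm{\mu}$. Substituting $p(y\mid X,a)=p(y\mid X)=h^{\bm{\mu}}(X)$ into the Brier-score and cross-entropy decompositions makes the second term identically zero, since $\normtwo{p(y\mid X)-p(y\mid X)}=0$ and $D_{KL}\big(p(y\mid X)\,||\,p(y\mid X)\big)=0$, and reduces the first term to the stated expressions. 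As nothing depends on $\bm{\mu}$, the whole Pareto front collapses to the Utopia point.

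For the second claim ($H(A\mid X)\to 0$) I would first rewrite $h^{\bm{\mu}}$ as a convex combination of the group-conditional predictors. Using Bayes' rule $p(x\mid a)=p(a\mid x)\,p(x)/p(a)$ and setting $w_a=\mu_a/p(a)$, the predictor becomes
\[
h^{\bm{\mu}}(x)=\sum_{a\in\mathcal{A}}\pi_a(x)\,p(y\mid x,a),\qquad \pi_a(x)=\frac{w_a\,p(a\mid x)}{\sum_{a'}w_{a'}\,p(a'\mid x)},
\]
a posterior-weighted average of the $p(y\mid x,a)$'s. The key observation is that $H(A\mid X)=E_X\big[-\sum_a p(a\mid x)\log p(a\mid x)\big]\to 0$ forces the posterior $p(a\mid x)$ to concentrate on a single group for almost every $x$; in particular, for $x$ in the support of group $a$ (the region relevant to $E_{X\mid a}[\cdot]$) one has $p(a\mid x)\to 1$. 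On that region $\pi_a(x)\to 1$, so $h^{\bm{\mu}}(x)\to p(y\mid x,a)$, the discrepancy integrands $\normtwo{p(y\mid X,a)-h^{\bm{\mu}}(X)}$ and $D_{KL}\big(p(y\mid X,a)\,||\,h^{\bm{\mu}}(X)\big)$ vanish, and the risks converge to $E_{X,Y\mid a}[\normtwo{\oh{Y}-p(y\mid X,a)}]$ and $H(Y\mid X,a)$ respectively.

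The routine part is the substitution and cancellation; the main obstacle is making the limit in the second claim rigorous. I would control it quantitatively rather than pointwise. The discrepancy integrand is uniformly bounded (all arguments lie in $[0,1]^{|\mathcal{Y}|}$ with $|\mathcal{Y}|$ finite), and since the denominator of $\pi_{a'}$ is at least $w_a\,p(a\mid x)$ we get $\sum_{a'\neq a}\pi_{a'}(x)\le \frac{\max_{a'}w_{a'}}{w_a\,p(a\mid x)}\,(1-p(a\mid x))$. Hence on the set $\{x:p(a\mid x)>1-\epsilon\}$ the discrepancy is $O(\epsilon)$, while the nonnegativity of the conditional entropy together with Markov's inequality drives the $p(x\mid a)$-measure of the complementary set to $0$ as $H(A\mid X)\to 0$. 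Splitting $E_{X\mid a}[\cdot]$ over these two sets and letting $\epsilon\to 0$ yields the claimed convergence; the care needed is in bounding the denominator of $\pi_a$ below on $\{p(a\mid x)>1-\epsilon\}$ for fixed positive weights, and in tying the almost-everywhere concentration of the posterior to the scalar hypothesis $H(A\mid X)\to 0$.
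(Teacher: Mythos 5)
Your overall route is the same as the paper's: substitute into the closed forms of Theorem \ref{theo:briercrosspareto}, show $h^{\bm{\mu}}$ collapses to the relevant group-optimal predictor, and conclude that the non-negative discrepancy term vanishes. The first part ($Y\perp A\mid X$) is identical to the paper's argument and is correct. For the second part the paper is actually less careful than you are: it simply asserts $p(a\mid x)=1$ for all $x$ with $p(x\mid a)>0$ (which is literally true only at $H(A\mid X)=0$, not along a sequence with $H(A\mid X)\to 0$) and substitutes. Your attempt to make the limit quantitative is a genuine strengthening, and the Brier-score half of it goes through: the integrand $\normtwo{p(y\mid X,a)-h^{\bm{\mu}}(X)}$ is bounded by $2|\mathcal{Y}|$, the posterior-weight bound on $\sum_{a'\neq a}\pi_{a'}(x)$ is correct, and the measure of the bad set under $p(x\mid a)$ is controlled via the change of measure $p(x\mid a)=p(a\mid x)p(x)/p(a)$ together with Markov's inequality on $H(A\mid X)$ (note you do need both halves of this: Markov handles points where the posterior is genuinely spread out, while the density ratio handles points where it concentrates on a group other than $a$ — your phrase ``for $x$ in the support of group $a$ one has $p(a\mid x)\to 1$'' is not literally implied by $H(A\mid X)\to 0$, but your subsequent splitting absorbs this).

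The one step that fails as written is the claim that the discrepancy integrand is \emph{uniformly bounded} in the cross-entropy case. $D_{KL}\big(p(y\mid x,a)\,\|\,h^{\bm{\mu}}(x)\big)$ is not bounded on the probability simplex (it diverges when $h^{\bm{\mu}}(x)$ puts vanishing mass on an outcome that $p(y\mid x,a)$ charges), so ``bounded integrand times vanishing measure'' does not close the CE case on the bad set. The fix is short: since $h^{\bm{\mu}}(x)=\sum_{a'}\pi_{a'}(x)p(y\mid x,a')\ge \pi_a(x)\,p(y\mid x,a)$ componentwise, one has $D_{KL}\big(p(y\mid x,a)\,\|\,h^{\bm{\mu}}(x)\big)\le -\log \pi_a(x)\le -\log p(a\mid x)+C(\bm{\mu})$ for a constant depending only on the (fixed, positive) weights, and then
\begin{equation*}
E_{X\mid a}\big[-\log p(a\mid X)\big]=\tfrac{1}{p(a)}\,E_{X}\big[-p(a\mid X)\log p(a\mid X)\big]\le \tfrac{1}{p(a)}\,H(A\mid X)\rightarrow 0,
\end{equation*}
which controls the KL term globally without any boundedness assumption. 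With that replacement your argument is complete and strictly more rigorous than the one in the paper.
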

\vspace{-1.5em}
Note that even on these ideal cases ($Y\perp A\mid X$ or $H(A|X) \rightarrow 0$), the baseline risks between groups might differ. Therefore, perfect equality of risk may only be achieved by selecting sub-optimal classifiers (unnecessary harm), or by improving the input features $X$; this observation concurs with previous analysis on classifiers' bias-variance tradeoffs done in \cite{chen2018my, domingos2000unified}.

\section{Minimax Pareto Fair Optimization}
\label{sec:OptimizationMethods}
Our goal is to find the Pareto classifier $h^*$ that minimizes the risk of the worst performing sensitive groups (i.e., $h^* \in \argmin\limits_{h \in \pha} \norminf{\bm{r}(h)}$). It is important to note that the classifier $h^*$ is not necessarily unique, nor is its corresponding risk vector $\bm{r}^* =\bm{r}(h^*)$. Throughout this section, we assume that the hypothesis for Theorem \ref{theo:linearweightconvex} are satisfied, therefore, for every minimax vector  $\bm{r}^*$ there is a set of weights $\bm{\mu}^*$ such that $\bm{r^*}$ is a unique solution for Problem \ref{eq:linearweightproblem} ($\bm{r}^* =\bm{r}(\bm{\mu}^*)$). 

Computing $\bm{\mu}^*$ directly can be challenging, even when closed form solutions for the classifiers and risks are available, as shown in Theorem \ref{theo:briercrosspareto} for Brier Score and Cross Entropy. A potential approach to estimate $\bm{\mu}^*$ would be to perform sub-gradient descent on $\norminf{\bm{r}(\bm{\mu})}$. This approach suffers from two main setbacks. First, closed form formulas for the Jacobian $\nabla_{\bm{\mu}} \bm{r}(\bm{\mu})$ require accurate estimates of the conditional distributions $p(x|a)$ and $p(y|x,a)$; or of $p(a),p(a|x)$ and $p(y|x,a)$. Secondly, $\norminf{\bm{r}(\bm{\mu})}$ can potentially have local minima on $\bm{\mu}$.

We propose a simple optimization method to recover $\bm{\mu}^*$ that only requires access to function evaluations of $\bm{r}(\bm{\mu})$. Note that given $\bm{\mu}$, the risk vector $\bm{r}(\bm{\mu})$ can be obtained by estimating the optimal classifier with the expression derived in Theorem \ref{theo:briercrosspareto} (plug-in estimation), which requires the conditional densities $p(a),p(a|x)$ and $p(y|x,a)$. Another option is to directly minimize Problem, \ref{eq:linearweightproblem} (joint estimation). Both approaches can be implemented using DNNs for estimation. Note that the second approach makes use of all samples to estimate a single classifier, while estimating the densities $p(a|x), p(y|x,a)$ individually may suffer from data fragmentation and cannot benefit from transfer learning, which could be harmful for a sensitive group with limited data \cite{ustun2019fairness,wang2020split}.

To avoid blind sampling of the weighting vectors $\bm{\mu}$, Theorem \ref{theo:Starshaped} summarizes important properties that any weighting vector $\bm{\mu}'$ must satisfy to improve the minimax risk at any given iteration ($\bm{\mu}': \norminf{\bm{r}(\bm{\mu'})}<\norminf{\bm{r}(\bm{\mu})}$).

\begin{theorem}
Let $\pra$ be a Pareto front, and $\bm{r}(\bm{\mu}) \in \pra$ denote the solution to the linear weighting Problem \ref{eq:linearweightproblem}. For any $\bm{\mu'} \not\in \argmin\limits_{\bm{\mu} \in \Delta^{|\mathcal{A}|-1}}||\bm{r}(\bm{\mu})||_{\infty}$, and $\bm{\mu}^*\in \argmin\limits_{\bm{\mu} \in \Delta^{|\mathcal{A}|-1}}||\bm{r}(\bm{\mu})||_{\infty}$, the sets $ N_{i} = \{\bm{\mu}:r_{i}(\bm{\mu})<|| \bm{r}(\bm{\mu'})||_{\infty}\}$
satisfy:
\vspace{-.5em}
\begin{enumerate}
    \item $\bm{\mu}^* \in \bigcap\limits_{i \in \mathcal{A}}N_i$;
    \vspace{-.5em}
    \item If $\bm{\mu} \in N_i \rightarrow \lambda\bm{\mu}+(1-\lambda)\bm{e^i} \in N_i$, $\forall\, \lambda \in [0,1], i=1,\dots,|\mathcal{A}|$, where $\bm{e^i}$ denotes the standard basis vector;
    \vspace{-.5em}
    \item $\forall\, \mathcal{I} \subseteq \mathcal{A}$, $ \bm{\mu}: \mu_{\mathcal{A}\backslash\mathcal{I}} = 0 \rightarrow \bm{\mu} \in \bigcup\limits_{i \in \mathcal{I}}N_i$;
    \vspace{-.5em}
    \item If $\bm{r}(\bm{\mu})$ is also continuous in $\bm{\mu}$, then $\forall\, \mathcal{I} \subseteq \mathcal{A}$ such that $\bm{\mu} \in \bigcap\limits_{i \in \mathcal{I}}N_i \rightarrow \exists \epsilon>0: B_{\epsilon}(\bm{\mu}) \subset  \bigcap\limits_{i \in \mathcal{I}}N_i$;
    \vspace{-.5em}
    \item If $\pra$ is also convex, then $\bm{r}(\bm{\mu}^*) \in \argmin\limits_{\bm{r}\in\pra}||\bm{r}||_{\infty}$.
\end{enumerate}
\label{theo:Starshaped}
\end{theorem}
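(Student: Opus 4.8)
The plan is to treat the five claims in order, using throughout the abbreviation $c := \norminf{\bm{r}(\bm{\mu'})}$ together with the strict gap that defines the setup: since $\bm{\mu'}\notin\argmin_{\bm{\mu}\in\Delta^{|\mathcal{A}|-1}}\norminf{\bm{r}(\bm{\mu})}$ while $\bm{\mu}^*$ attains that minimum, we have $\norminf{\bm{r}(\bm{\mu}^*)} < c$. Claim 1 is then immediate: for every $i$, $r_i(\bm{\mu}^*)\le\norminf{\bm{r}(\bm{\mu}^*)} < c$, so $\bm{\mu}^*\in N_i$, hence $\bm{\mu}^*\in\bigcap_i N_i$.

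For the star-shapedness (Claim 2) the heart of the argument is a monotonicity lemma: the optimal group-$i$ risk is non-increasing in the weight placed on group $i$. I would prove this for unnormalized weights $\bm{\nu}$ and $\bm{\nu}+c_0\bm{e}^i$ with $c_0\ge 0$ by adding the two defining optimality inequalities, $\langle\bm{\nu},\bm{r}(h^{\bm{\nu}})\rangle\le\langle\bm{\nu},\bm{r}(h^{\bm{\nu}+c_0\bm{e}^i})\rangle$ and $\langle\bm{\nu}+c_0\bm{e}^i,\bm{r}(h^{\bm{\nu}+c_0\bm{e}^i})\rangle\le\langle\bm{\nu}+c_0\bm{e}^i,\bm{r}(h^{\bm{\nu}})\rangle$, which collapse to $c_0\big(r_i(h^{\bm{\nu}+c_0\bm{e}^i})-r_i(h^{\bm{\nu}})\big)\le 0$, giving $r_i(h^{\bm{\nu}+c_0\bm{e}^i})\le r_i(h^{\bm{\nu}})$. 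Since the minimizer of a linear weighting is invariant to positive rescaling ($h^{\lambda\bm{\mu}}=h^{\bm{\mu}}$), I would write $\lambda\bm{\mu}+(1-\lambda)\bm{e}^i$ as $\lambda\bm{\mu}$ with the nonnegative amount $(1-\lambda)$ added to coordinate $i$, and apply the lemma with $\bm{\nu}=\lambda\bm{\mu}$, $c_0=1-\lambda$ to obtain $r_i(\lambda\bm{\mu}+(1-\lambda)\bm{e}^i)\le r_i(\lambda\bm{\mu})=r_i(\bm{\mu})<c$; hence the whole segment lies in $N_i$.

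Claims 3 and 4 are shorter. For Claim 3 I would argue by contradiction: if $\bm{\mu}$ is supported on $\mathcal{I}$ (so $\sum_{a\in\mathcal{I}}\mu_a=1$) yet $r_a(\bm{\mu})\ge c$ for all $a\in\mathcal{I}$, then optimality of $h^{\bm{\mu}}$ tested against $h^{\bm{\mu}^*}$ yields $c\le\sum_{a\in\mathcal{I}}\mu_a r_a(\bm{\mu})\le\sum_{a\in\mathcal{I}}\mu_a r_a(\bm{\mu}^*)\le\norminf{\bm{r}(\bm{\mu}^*)}<c$, a contradiction; thus some $i\in\mathcal{I}$ has $r_i(\bm{\mu})<c$, i.e. $\bm{\mu}\in\bigcup_{i\in\mathcal{I}}N_i$. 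Claim 4 is a topological remark: each $N_i=r_i^{-1}\big((-\infty,c)\big)$ is relatively open in the simplex by continuity of $\bm{r}(\cdot)$, a finite intersection of open sets is open, and every point of an open set admits an $\epsilon$-ball inside it.

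The delicate step, and the main obstacle, is Claim 5, which identifies the minimax over the weight simplex with the minimax over the entire front. One inequality is free, since $\{\bm{r}(\bm{\mu}):\bm{\mu}\in\Delta^{|\mathcal{A}|-1}\}\subseteq\pra$ forces $\min_{\bm{\mu}}\norminf{\bm{r}(\bm{\mu})}\ge\min_{\bm{r}\in\pra}\norminf{\bm{r}}$. For the reverse I would invoke scalarization duality: let $\bm{r}^\diamond\in\argmin_{\bm{r}\in\pra}\norminf{\bm{r}}$ and show the up-set $V=\pra+\mathbbm{R}_+^{|\mathcal{A}|}$ is convex, directly from Definition \ref{def:ConvexParetoFront}, since a dominating front point exists for every convex combination. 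Pareto optimality keeps $\bm{r}^\diamond$ on the boundary of $V$, so a supporting hyperplane exists; because $V$ is an up-set its normal $\bm{\mu}$ must be coordinatewise nonnegative, and after normalization $\bm{\mu}\in\Delta^{|\mathcal{A}|-1}$ makes $\bm{r}^\diamond$ a minimizer of the linear weighting, giving $\min_{\bm{\mu}}\norminf{\bm{r}(\bm{\mu})}\le\norminf{\bm{r}^\diamond}$ and hence equality; Claim 5 follows for $\bm{\mu}^*$. I expect the friction to be handling a supporting normal with some zero coordinates (a boundary, weakly-Pareto point of the simplex) and confirming that the scalarization recovers exactly $\bm{r}^\diamond$ rather than a different minimizer of the same linear functional; the monotonicity lemma underlying Claim 2 is the other place requiring care, though it is clean once the two optimality inequalities are summed.
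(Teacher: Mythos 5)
Your proposal is correct and follows the same five-part decomposition as the paper, with identical arguments for Claims 1, 3 and 4 (the paper packages Claim 3 as its Lemma \ref{lemma:Edges}, which is exactly your weighted-average contradiction). The interesting divergence is in Claims 2 and 5. For Claim 2 the paper proves a more general monotonicity statement (Lemma \ref{lemma:Monotonicity}: perturbing the weights by $\bm{\eta}$ supported on $\mathcal{I}$ cannot increase every coordinate of $\bm{r}_{\mathcal{I}}$) via a two-case contradiction on whether $\Phi_{\mathcal{A}\setminus\mathcal{I}}$ increases or decreases; your symmetric exchange argument --- summing the two optimality inequalities for $\bm{\nu}$ and $\bm{\nu}+c_0\bm{e}^i$ so that the cross terms cancel --- is cleaner and gives the single-coordinate monotonicity $r_i(\bm{\nu}+c_0\bm{e}^i)\le r_i(\bm{\nu})$ directly, which combined with scale invariance is all Property 2 needs. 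The only loose end is the endpoint $\lambda=0$, where $\bm{\nu}=\lambda\bm{\mu}$ degenerates to the zero vector; you need to handle $\bm{e}^i$ separately (e.g.\ via your Claim 3 with $\mathcal{I}=\{i\}$), though the paper's own ``direct application'' of Lemma \ref{lemma:Monotonicity} has the same blind spot. For Claim 5 the paper simply defers to Theorem \ref{theo:linearweightconvex} and \cite{geoffrion1968proper}, whereas you reconstruct the scalarization from scratch via convexity of the up-set $\pra+\mathbbm{R}_+^{|\mathcal{A}|}$ and a supporting hyperplane; this is more self-contained and works directly from Definition \ref{def:ConvexParetoFront} rather than requiring convex $\mathcal{H}$ and risks, but the two frictions you flag (a supporting normal with zero coordinates versus the strict positivity $\mu_a>0$ in Problem \ref{eq:linearweightproblem}, and non-uniqueness of the linear-weighting minimizer) are real and are not resolved by the paper either --- it inherits them silently from the Geoffrion citation. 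Net: same skeleton, a tidier mechanism for Claim 2, and an honest expansion of the step the paper waves through.
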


\vspace{-1.5em}

Therefore, finding an updated weighting vector $\bm{\mu}$ that diminishes the minimax risk is equivalent to finding an element in the intersection of $|\mathcal{A}|$ subsets defined on the $\Delta^{|\mathcal{A}|-1}$ simplex. These subsets $N_i$ are themselves star-shaped sets w.r.t. the basis element $\bm{e^i}$, whose intersections $\cap_{i \in \mathcal{I}}N_i$ form open sets. Property $3$ in the theorem provides a straightforward rule to find elements belonging to any arbitrary union of the coordinate descent regions $\cup_{i \in \mathcal{I}}N_i$.

\begin{figure}[ht!]
 \centering
 \includegraphics[width=0.9\linewidth]{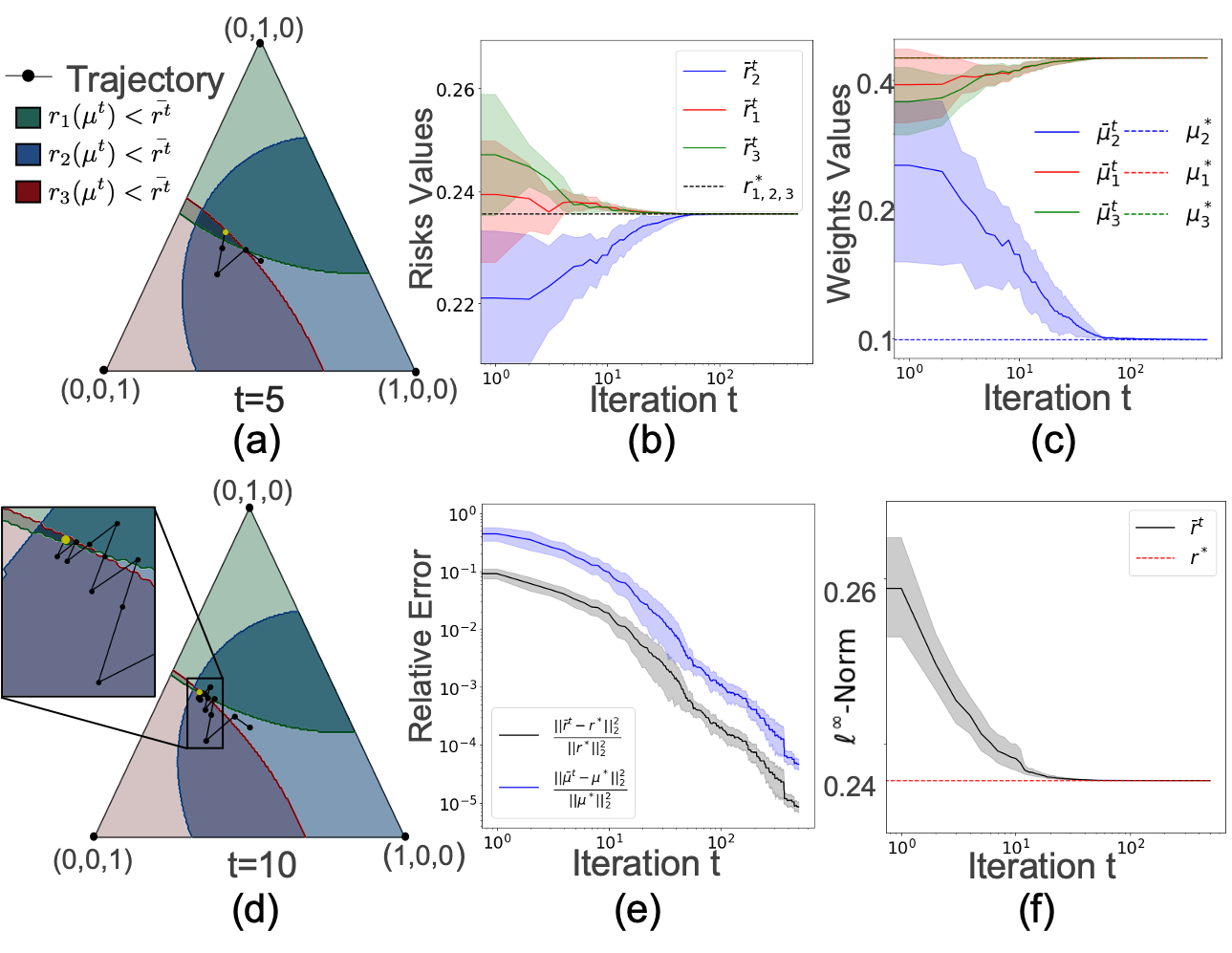}
\caption{\footnotesize Synthetic data experiment with $3$ sensitive groups. (a) and (d) show a simplex diagram of the linear weights $\bm{\mu}$ on the fifth and tenth iteration of the APStar algorithm; blue, green and red shaded areas correspond to the $N_i$ areas at iterations, $t=5, t=10$, the optimal linear weight lies in their intersection. (b) and (c) show the risk values and linear weights as a function of the iteration counter; shaded regions represent standard deviations across $5$ randomized runs. (e) shows relative error as a function of iterations for both risks and weights; (f) shows similar information comparing the maximum risk against the theoretical optimal. Risks for all groups converge to the minimax value, while the weights converge to $\bm{\mu}^*$. Simulation details are provided in Section \ref{sec:SyntheticAppendix}.}
\label{fig:AlgorithmSynthetic}
\end{figure}

Using these observations, we propose the Approximate Projection onto Star Sets (APStar) Algorithm (Algorithm \ref{alg:main_algo}) to iteratively refine the minimax risk by updating the linear weighting vector. The main intuition behind this algorithm is that whenever we observe a weighting vector that reduces the risk in a subset of groups $\mathcal{I}$ ($\bm{\mu}^t \in \cap_{i\in \mathcal{I}}N_i$), we can generate a new vector $\bm{\mu}^{t+1}$ that linearly interpolates between $\bm{\mu}^t$ and a vector $\bm{\mu}^{\mathcal{A}\setminus\mathcal{I}}\in \cup_{i\in \mathcal{A}\setminus\mathcal{I}}N_i$ that belongs to the union of the unsatisfied group risks ($\bm{\mu}^{t+1} \rightarrow \alpha \bm{\mu}^{t} +(1-\alpha)\bm{\mu}^{\mathcal{A}\setminus\mathcal{I}}$). An analysis of the convergence properties of APStar is provided in Section \ref{sec:AppendixOptimization}.

Figure \ref{fig:AlgorithmSynthetic} illustrates how the linear weights $\bm{\mu}$ are updated on a synthetic example, reducing the minimax risk; the example shown is of a classifier with $3$ sensitive groups where perfect fairness is attainable, we observe how risks converge to their common final value $\bm{r}^*$ and that the weights $\bm{\mu}^*$ required to recover this are not equally-weighted vector.

\begin{algorithm}[tb!]
   \caption{APStar}
   \label{alg:main_algo}
  \small
\begin{algorithmic}
   \STATE {\bfseries Input:} hypothesis class: $\mathcal{H}$, initial weights: $\bm{\mu}$, risk functions: $r_a(\cdot)$, optimizer: $\{\arg\}\min\limits_{h \in \mathcal{H}} \sum\limits_{i=1}^{|\mathcal{A}|} \mu_i r_i(h)$, $\alpha\in(0,1)$, $K_{min}$
   \STATE {\bfseries Initialize:}
   \STATE $h, \bm{r}(\mu) \leftarrow \{\arg\}\min\limits_{h \in \mathcal{H}} \sum \mu_i r_i(h)$
   \STATE $\bar{\bm{r}}\leftarrow \norminf{\bm{r}(\mu)};\, K\leftarrow 1.$
   \REPEAT 
   \STATE $\bm{1}_\mu \leftarrow \{\bm{1}(r_i(\mu)\ge \bar{\bm{r}})\}_{i=1}^{|\mathcal{A}|}$
   \STATE $\bm{\mu} \leftarrow (\alpha\bm{\mu} +\frac{1-\alpha}{K\normone{\bm{1}_\mu}}\bm{1}_\mu)\frac{K}{(K-1)\alpha+1}$
   \STATE $h, \bm{r}(\mu) \leftarrow \{\arg\}\min\limits_{h \in \mathcal{H}} \sum \mu_i r_i(h);\; K \leftarrow K+1$
   \IF{$\norminf{\bm{r}(\mu)} < \bar{\bm{r}}$}
   \STATE $\bar{\bm{r}} \leftarrow \norminf{\bm{r}(\mu)}, \; K \leftarrow \min(K,K_{min})$
   \STATE $h^*, \bm{\mu}^*, \bm{r^*} \leftarrow h, \bm{\mu}, \bm{r}(\mu)  $
   \ENDIF
   \UNTIL{$Convergence$}
   \STATE {\bfseries Return:} $h^*, \bm{\mu}^*, \bm{r^*}$
\end{algorithmic}
 
\end{algorithm}
In Section \ref{sec:ExperimentsAndResults} we apply the APStar algorithm to synthetic and real datasets using DNNs and SGD to solve Problem \ref{eq:linearweightproblem}. Note that APStar can be applied to non-convex risk functions and hypothesis classes, in which case the solutions to Problem \ref{eq:linearweightproblem} form a subset of the Pareto front and may not offer a full characterization of all optimal solutions. In Section \ref{sec:ImplementationDetails} we provide implementation details; Pytorch code for this algorithm will be made available.

\section{Experiments and Results}
\label{sec:ExperimentsAndResults}
We applied the proposed APStar algorithm to learn a minimax Pareto fair classifier (MMPF) and show how our approach produces well calibrated models that improve minimax performance across several metrics beyond the risk measure itself. While details are provided in Section \ref{sec:StarshapedSampling}, Figure \ref{fig:simplex2x2convergence} illustrates how our algorithm is empirically convergent and significantly faster than random sampling and the multiplicative weight update (MWU) algorithm proposed in \cite{chen2017robust} for minimax optimization in the context of robustness. In sections \ref{sec:SyntheticAppendix} and \ref{sec:PlugvJointAppendix} we validate the optimization algorithm on synthetic data and compare joint versus plug-in estimation. Here we compare the performance of our method against other state of the art approaches on a variety of public fairness datasets.

\begin{figure}[ht!]
 \centering
 \includegraphics[width=1\linewidth]{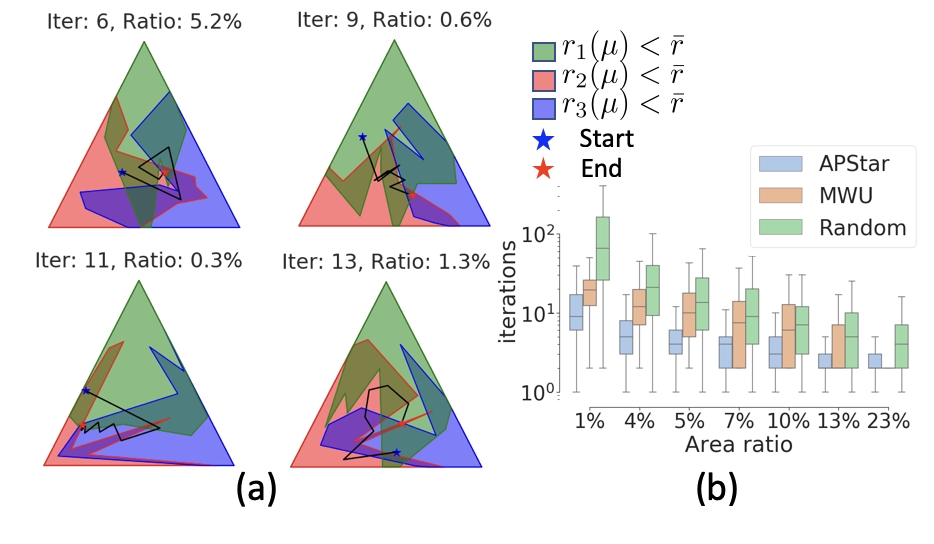}
 \vspace{-2em}
 \caption{\footnotesize Synthetic data experiment on star-shaped sets. (a) Randomly sampled star sets satisfying the conditions of Theorem \ref{theo:Starshaped}; a starting point is sampled (Blue), trajectories recovered by the APStar algorithm are recorded until convergence (Red); number of iterations and intersection area are shown for all examples. (b) Empirical distribution of number of iterations required to converge versus percentage of linear weights that lie in the triple intersection; values are shown for APStar, random sampling, and the multiplicative weight update (MWU) algorithm proposed in \cite{chen2017robust} for minimax optimization. The number of iterations required by the algorithm is well below the random sampler, this is especially apparent for low area ratio scenarios.}
\label{fig:simplex2x2convergence}
\end{figure}
 \vspace{-1em}
\subsection{Real Datasets: Methods and Metrics}
We evaluate our method on mortality prediction (MIMIC-III), skin lesion classification (HAM10000), income prediction (Adult), and credit lending (German). The latter two are common benchmarks in the fairness literature. Results are reported for joint estimation (MMPF) and plug-in estimation (MMPF P), presented in Section \ref{sec:OptimizationMethods}. We evaluate a model trained to minimize the average risk (Naive) and one that samples all sensitive groups equally (Balanced). When applicable, we compare our results against the methodologies proposed in \cite{hardt2016equality,zafar2015fairness,kamishima2012fairness, feldman2015certifying}; for implementations on all methods except \cite{hardt2016equality}, we used the unified test-bed provided in \cite{friedler2019comparative}. A description of these methods is provided in Section \ref{sec:MethodsAppendix}.

Metrics used for evaluation include accuracy (Acc), Brier Score (BS) and Cross-Entropy (CE), values are reported per sensitive attribute, standard deviations computed across $5$ splits are shown when available. For datasets containing more than two sensitive groups (MIMIC-III, HAM10000) we report dataset average (sample mean), group-normalized average (group mean), worst performing group (worst group), and largest difference between groups (disparity). Note that the latter two are especially important in our setting, since our explicit focus is to minimize worst-case performance, and disparity is a common measure of interest in fairness literature. For an in-depth description of these metrics and datasets, refer to Sections \ref{sec:Metrics} and \ref{sec:realDatasets}. Additional tables showing these and other metrics on a per-group basis are provided in Section \ref{sec:SupplementaryResults}. All tables bold the best result for disparity and worst group risk.

Similarly to \cite{kamishima2012fairness,zafar2015fairness,hardt2016equality,woodworth2017learning}, we omit the sensitive attribute from our observation features, which broadens the potential application of the framework. Classifiers are implemented using neural networks and/or linear logistic regression; for details on architectures and hyper-parameters, refer to Section \ref{sec:AppendixArchitectures}.  %Extensive results beyond the ones presented next are included in Section \ref{sec:SupplementaryResults}.

\subsection{Predicting Mortality in Intensive Care Patients}

We used clinical notes collected from adult ICU patients at the Beth Israel Deaconess Medical Center (MIMIC-III dataset) \cite{johnson2016mimic} to predict patient mortality. We study fairness with respect to age (adult/senior), ethnicity (white/nonwhite), and outcome (alive/deceased) simultaneously (8 sensitive groups). Outcome is included as a sensitive attribute because, in our experiments, patients who ultimately passed away on ICU were under-served by a Naive classifier (high classifier loss). Using the target label as sensitive attribute addresses class imbalance by requiring similar risk performance on each target label. It also demonstrates a use-case where group membership would not be available at test-time.

We use a fully connected NN with BS loss as the base hypothesis class (results with CE provided in Section \ref{sec:SupplementaryResults}), input features are Tf-idf statistics on the $10k$ most frequent words from clinical notes, mirroring \cite{chen2018my}. For an even comparison, we provided the feature embeddings of the Naive classifier as input to the baselines, since the implementation in \cite{friedler2019comparative} only includes linear classifiers, this was also done since some of the available implementations failed to converge in a reasonable time with the original inputs. Table \ref{table:MIMICOverallTable} reports Acc and BS of tested methodologies. Note that, while the Balanced classifier has significantly better worst-case BS performance than the Naive classifier, MMPF is better still; these performance gains are also reflected on Acc, showing that this improvement goes beyond the training metric. Plug-in performance (MMPF P) is not an improvement over joint estimation. Accuracy comparison table incorporates the post-processing methodology described in \cite{hardt2016equality} to achieve zero accuracy disparity (``+H'' suffix). Hardt post-processing decreases the  accuracy disparity gap w.r.t. the baseline methods but requires test-time access to group membership. The best result is obtained on MMPF H, but we note that the best minimax risk is still attained on the original MMPF model.

\begin{table}[h!]
\caption{MIMIC dataset. Group priors range from 0.4\% to 57\%. In this and all tables we bold the best result for disparity and worst group risk.}
\label{table:MIMICOverallTable}
\centering
% \small
\footnotesize
(a) Acc comparison\\
% \resizebox{.4\textwidth}{!}{
\begin{tabular}{lllll}
\toprule
{} & \makecell{Sample\\mean} & \makecell{Group\\mean} & \makecell{Worst\\group} & Disparity \\
\midrule 
Naive & 89.5 $\pm$0.2 & 61.9 $\pm$1.7 & 19.0 $\pm$2.0 & 80.5 $\pm$1.3 \\
Balanced & 79.4$\pm$0.6 & 77.5$\pm$1.4 & 66.8$\pm$2.2 & 22.6$\pm$2.3 \\
Zafar         &  86.2$\pm$0.3 &  65.8$\pm$1.8 &  32.0$\pm$2.4 &  62.9$\pm$3.6 \\
Feldman       &  88.6$\pm$2.4 &  64.4$\pm$2.9 &   28.7$\pm$2.4 &  72.1$\pm$5.5\\
Kamishima     &  89.3$\pm$0.2 &  63.6$\pm$2.0 &    25.1$\pm$5.1 &  76.4$\pm$5.2 \\

MMPF & 76.2$\pm$0.2 & 78.3$\pm$1.5 & \textbf{72.6$\pm$1.7} & \textbf{17.1$\pm$3.5} \\
MMPF P & 75.5$\pm$1.0 & 76.8$\pm$1.3 & 70.7$\pm$2.1 & 17.8$\pm$3.8 \\
% -----------------&---------------&--------------&------------&------------- \\
\midrule
Balanced+H & 75.6$\pm$1.1 &  71.7$\pm$1.6 & 65.6$\pm$2.8& 19.1$\pm$1.8  \\
Zafar+H & 62.8$\pm$1.6 &  58.3$\pm$2.1 & 51.5$\pm$2.8& 17.8$\pm$3.1  \\
MMPF+H& 72.4$\pm$1.1 &  72.3$\pm$1.5 &\textbf{72.0$\pm$3.7} & \textbf{11.4$\pm$3.5}\\

\bottomrule
\end{tabular}
\vspace{.5em}
(b) BS comparison\\
\begin{tabular}{lllll}
\toprule
{} & \makecell{Sample\\mean} & \makecell{Group\\mean} & \makecell{Worst\\group} & Disparity \\
\midrule 
Naive & .16 $\pm$.01 & .51 $\pm$.02 & 1.05 $\pm$.01 & 1.03 $\pm$.02 \\
Balanced & .28$\pm$.01 & .31$\pm$.01 & .42$\pm$.01 & .25$\pm$.04 \\
Zafar         &   .27$\pm$.01 &  .67$\pm$.04 &   1.34$\pm$.05 &  1.25$\pm$.07 \\
Feldman       &  .19$\pm$.04 &  .62$\pm$.04 &  1.26$\pm$.08 &   1.29$\pm$.08 \\
Kamishima     &  .16$\pm$.01 &  .53$\pm$.03 &  1.06$\pm$.04 &  1.11$\pm$.08 \\

MMPF & .32$\pm$.01 & .3$\pm$.01 & \textbf{.35$\pm$.02} & \textbf{.17$\pm$.03} \\
MMPF P & .33$\pm$.01 & .32$\pm$.01 & .37$\pm$.01 & \textbf{.17$\pm$.04} \\
\bottomrule
\end{tabular}
 \vspace{-1.2em}
\end{table}

\subsection{Skin Lesion Classification}

The HAM10000 dataset \cite{tschandl2018ham10000} contains over $10k$ dermatoscopic images of $7$ types of skin lesions; with ratios between $67\%$ and $1.1\%$. A Naive classifier exhibited no significant discrimination based on age or race. We instead chose to use the diagnosis class as both the target and sensitive variable. It was not possible to compare against \cite{hardt2016equality} since the sensitive attribute is perfectly predictive of the outcome; likewise, \cite{zafar2015fairness} and \cite{kamishima2012fairness} cannot handle non-binary target attributes in their provided implementations. Table \ref{table:HAMOverallTable} shows results for MMFP P, since plug-in estimation is equivalent to joint estimation when $A=Y$, but enables cheaper APStar iterations (see Section \ref{sec:PlugvJointAppendix}). The MMPF classifier improves minimax BS and Acc when compared to both Naive and Balanced classifiers.

% \vspace{-1.5em}
\begin{table}[h!]
\caption{HAM10000 dataset. Group priors range from 1\% to 67\%.}
\label{table:HAMOverallTable}
\centering
% \small
\footnotesize

(a) Acc comparison\\
% \resizebox{.4\textwidth}{!}{
\begin{tabular}{lllll}
\toprule
{} & \makecell{Sample\\mean} & \makecell{Group\\mean} & \makecell{Worst\\group} & Disparity \\
\midrule 
%Ratio & - & 14.3\% & 0.5\% & 80.4\%\\
Naive & 78.5 $\pm$ 0.5 & 50.8 $\pm$ 1.9 & 2.6 $\pm$3.5 & 93.7 $\pm$1.1 \\
Balanced & 70.1 $\pm$2.1 & 70.1 $\pm$2.2 & 52.6 $\pm$5.3 & 32.5 $\pm$ 4.6\\
MMPF P & 64.7 $\pm$1.2 & 66.7 $\pm$ 3.5& \textbf{56.9 $\pm$3.1} & \textbf{19.8 $\pm$6.6} \\

\bottomrule
\end{tabular}

(b) BS comparison\\
% \resizebox{.4\textwidth}{!}{
\begin{tabular}{lllll}
\toprule
{} & \makecell{Sample\\mean} & \makecell{Group\\mean} & \makecell{Worst\\group} & Disparity \\
\midrule 
%Ratio & - & 14.3\% & 0.5\% & 80.4\%\\
Naive & .31$\pm$.01 & .69$\pm$.3 & 1.38$\pm$.04 & 1.29$\pm$.04 \\
Balanced & .41$\pm$.02 & .42$\pm$.03 & 0.64$\pm$.05 & 0.45$\pm$.07 \\
MMPF P & .49$\pm$.02 & .46$\pm$.04 & \textbf{0.56$\pm$0.4} & \textbf{0.23$\pm$.06} \\

\bottomrule
\end{tabular}
 \vspace{-1.2em}
\end{table}

 \vspace{-0.5em}
\subsection{Income Prediction and Credit Risk Assesment}
We predict income in the Adult UCI dataset \cite{Dua:2019} and assess credit risk in the German Credit dataset \cite{Dua:20192}. We select gender (Male/Female) as our sensitive attribute, additional results for gender and ethnicity (Male/Female and White/Other) are also shown for the Adult dataset.
Tables \ref{table:AdultGender}, \ref{table:AdultMaleWhite},  and \ref{table:GermanGender} show results for linear logistic regression (LR suffix) and a fully connected NN. Note that linear logistic regression is not a convex hypothesis class, but is included to compare evenly against the baselines.

Our approach leads to the best worst-case performance in Acc and CE on the Adult dataset, although all methods perform similarly; this is especially true for the gender case, where Kamishima has a slight advantage in terms of standard deviation in CE. On the German dataset, our method produces the best worst-case CE results, and smallest disparities in both Acc and CE; Feldman does, however, have test time access to sensitive attributes, which may explain the difference in Acc. We show results for Hardt post-processing in Section \ref{sec:SupplementaryResults}, with similar conclusions to the ones made on the MIMIC dataset.

%%%%%%%%%%%%%%%%%%%%%%%%% GENDER ADULT %%%%%%%%%%%%%%%%%%%%%%%%%%%%%%%%%%%%%%%%%%%%%%%

\vspace{-1.3em}
\begin{table} [h!]
\centering
\caption{Adult gender dataset. Females represent 32\% of samples.}
\label{table:AdultGender}
% \small
\footnotesize

(a) Acc comparison\\
% \resizebox{.4\textwidth}{!}{
\begin{tabular}{llll}
\toprule
{} & \makecell{Female} & \makecell{Male} & Disparity \\
\midrule 
Naive LR & 92.3$\pm$0.4 & 80.5$\pm$0.4 & 11.9$\pm$0.7 \\
Balanced LR & 92.3$\pm$0.3 & 80.3$\pm$0.7 & 12.0$\pm$0.7 \\

Zafar & 92.5$\pm$0.3 & 80.9$\pm$0.3 & 11.6$\pm$0.4 \\
Feldman & 92.3$\pm$0.3 & 80.7$\pm$0.2 & 11.6$\pm$0.1 \\
Kamishima & 92.6$\pm$0.4 & 80.9$\pm$0.4 & 11.7$\pm$0.7 \\

MMPF LR & 91.9$\pm$0.4 & 81.0$\pm$0.4 & 10.9$\pm$0.7\\
MMPF & 92.1$\pm$0.3 & 81.3$\pm$0.3 & 10.8$\pm$0.5 \\
MMPF LR P & 92.0$\pm$0.4 & 81.0$\pm$0.5 & 11.0$\pm$0.6 \\
MMPF P & 91.7$\pm$0.3 & \textbf{81.5$\pm$0.5} & \textbf{10.1$\pm$0.5} \\

\bottomrule
\end{tabular}

(b) CE comparison \\
% \resizebox{.4\textwidth}{!}{
\begin{tabular}{llll}
\toprule
{} & \makecell{Female} & \makecell{Male} & Disparity \\
\midrule 
% ratio & 32.1\% & 67.9\% & 35.9\% \\
Naive LR & .204$\pm$.009 & .411$\pm$.006 & .207$\pm$.007\\
Balanced LR & .204$\pm$.011 & .416$\pm$.011 & .211$\pm$.005 \\

Zafar & .202$\pm$.018 & .398$\pm$.006 & .195$\pm$.023 \\
Feldman & .201$\pm$.004 & .403$\pm$.004 & .203$\pm$.006 \\
Kamishima & .189$\pm$.006 & \textbf{.395$\pm$.004} & .206$\pm$.007 \\

MMPF LR & .204$\pm$.008 & \textbf{.395$\pm$.006} & .19$\pm$.011 \\
MMPF  & .21$\pm$.019 & .403$\pm$.025 & .193$\pm$.013 \\
MMPF LR P & .208$\pm$.008 & \textbf{.395$\pm$.005} & .187$\pm$.01 \\
MMPF P & .227$\pm$.019 & .403$\pm$.023 & \textbf{.176$\pm$.014} \\
\bottomrule
\end{tabular}
\end{table}
%%%%%%%%%%%%%%%%%%%%%%%%%%%%%%%%%%%%%%%%%%%%%%%%%%%%%%%%%%%%%%%%%%%%%%%%%%%%%%%%%%%%%%

%%%%%%%%% ADULT RACE #################
\vspace{-1.9em}
\begin{table} [h!]
\centering
\caption{Adult ethnicity and gender dataset. Group priors range from 6\% to 60\%.}
\label{table:AdultMaleWhite}
\small
% \scriptsize
\footnotesize

(a) Acc comparison\\
% \resizebox{.4\textwidth}{!}{

\begin{tabular}{lllll}
\toprule
{} & \makecell{Sample\\mean} & \makecell{Group\\mean} & \makecell{Worst\\group} & Disparity \\
\midrule 
% ratio & 39.7\% & 60.3 & 20.5 \\
Naive LR & 84.7$\pm$0.3 &  87.8$\pm$0.1 &  80.6$\pm$0.5 &  14.1$\pm$1.0 \\
Balanced LR & 84.7$\pm$0.3 &  88.0$\pm$0.3 &  80.5$\pm$0.5 &  14.5$\pm$1.0 \\

Zafar & 84.7$\pm$0.2 &  87.9$\pm$0.2 &  80.6$\pm$0.5 &  14.5$\pm$0.9 \\
Feldman & 84.5$\pm$0.2 &  87.7$\pm$0.3 &  80.4$\pm$0.3 &  14.7$\pm$0.9 \\
Kamishima & 84.3$\pm$0.8 &  87.8$\pm$0.3 &  80.0$\pm$1.2 &  15.2$\pm$1.8 \\

MMPF LR& 84.5$\pm$0.2 &  87.8$\pm$0.3 &  80.6$\pm$0.5 &  14.0$\pm$1.0 \\
MMPF & 84.8$\pm$0.3 &  87.8$\pm$0.4 &  80.9$\pm$0.6 &  13.6$\pm$1.5 \\
MMPF LR P & 84.6$\pm$0.3 &  87.7$\pm$0.2 &  80.7$\pm$0.5 &  13.9$\pm$1.0 \\
MMPF P & 84.6$\pm$0.4 &  87.6$\pm$0.5 &  \textbf{81.0$\pm$0.8} &  \textbf{13.4$\pm$1.5} \\
\bottomrule
\end{tabular}

(b) CE comparison\\
% \resizebox{.4\textwidth}{!}{
\begin{tabular}{lllll}
\toprule
{} & \makecell{Sample\\mean} & \makecell{Group\\mean} & \makecell{Worst\\group} & Disparity \\
\midrule 
Naive LR & .332$\pm$.004 &  .268$\pm$.004 &  .408$\pm$.008 &  .268$\pm$.016 \\
Balanced LR & .333$\pm$.004 &  .268$\pm$.005 &  .411$\pm$.008 &  .273$\pm$.015 \\

Zafar & .334$\pm$.005 &  .273$\pm$.005 &   .409$\pm$.01 &   .266$\pm$.03 \\
Feldman & .337$\pm$.003 &  .276$\pm$.006 &  .412$\pm$.006 &  .262$\pm$.016 \\
Kamishima &.337$\pm$.015 &  .275$\pm$.009 &  .414$\pm$.023 &  .269$\pm$.026 \\

MMPF LR& .334$\pm$.004 &  .274$\pm$.005 & \textbf{.404$\pm$.007} &  \textbf{.251$\pm$.015} \\
MMPF& .334$\pm$.005 &  .272$\pm$.003 &  .404$\pm$.009 &  .263$\pm$.022 \\
MMPF LR P & .335$\pm$.005 &  .275$\pm$.006 &  .405$\pm$.006 &   \textbf{.251$\pm$.01} \\
MMPF P & .345$\pm$.009 &   .284$\pm$.01 &   .41$\pm$.014 &   .258$\pm$.03 \\
\bottomrule
\end{tabular}
\end{table}

% \vspace{-1.5em}
%%%%%%%%%%%%%%%%%%%%%%%%%%%%%%%%%%%% GERMAN %%%%%%%%%%%%%%%%%%%%%%%%%%%%%%%%%%%%%%%%%%%%
\begin{table} [h!]
\centering
\caption{German dataset. Females represent 30\% of samples.}
\label{table:GermanGender}
% \small
% \scriptsize
\footnotesize

(a) Acc comparison\\
% \resizebox{.4\textwidth}{!}{

\begin{tabular}{llll}

\toprule
{} & \makecell{Female} & \makecell{Male} & Disparity \\
\midrule 
Naive LR & 70.7$\pm$7.3 & 71.2$\pm$4.5 & 8.8$\pm$4.7 \\
Balanced LR & 71.6$\pm$5.9 & 70.9$\pm$4.1 & 5.8$\pm$3.6 \\

Zafar & 73.0$\pm$5.6 & 71.0$\pm$3.5 & 5.8$\pm$3.5 \\
Feldman & 73.5$\pm$8.6 & \textbf{71.9$\pm$4.3} & 7.9$\pm$4.4 \\
Kamishima & 68.8$\pm$6.8 & 72.7$\pm$2.6 & 6.0$\pm$4.4 \\

MMPF LR & 72.5$\pm$5.5 & 71.6$\pm$2.8 & 5.0$\pm$2.6 \\
MMPF LR P& 70.7$\pm$4.5 & 71.5$\pm$3.6 & \textbf{4.4$\pm$0.5} \\

\bottomrule
\end{tabular}

(b) CE comparison\\
%\resizebox{.4\textwidth}{!}{
\begin{tabular}{llll}
\toprule
{} & \makecell{Female} & \makecell{Male} & Disparity \\
\midrule 
% ratio & 29.5\% & 70.5\% & 41.0\% \\
Naive LR & .607$\pm$.1 & .559$\pm$.069 & .127$\pm$.064 \\
Balanced LR & .594$\pm$.082 & .568$\pm$.068 & .096$\pm$.05 \\

Zafar & .567$\pm$.09 & .735$\pm$.205 & .273$\pm$.151 \\
Feldman & .564$\pm$.096 & .551$\pm$.063 & .091$\pm$.068 \\
Kamishima & .62$\pm$.064 & .545$\pm$.062 & .075$\pm$.067 \\

MMPF LR& .565$\pm$.04 & .544$\pm$.046 & \textbf{.048$\pm$.041} \\
MMPF LR P & \textbf{.563$\pm$.043} & .537$\pm$.051 & .057$\pm$.034 \\

\bottomrule
\end{tabular}
\end{table}

\section{Discussion}
\label{sec:Discussion}
Here we formulate group fairness as a multi objective optimization problem where each group-specific risk is an objective function. Our goal is to recover an efficient classifier that reduces worst-case group risks ethically (i.e., avoiding unnecessary harm). We consider problems where target labels available for training are trustworthy (not affected by discrimination). We formally characterized Pareto optimal solutions for a family of models and risk functions, yielding insight on the fundamental sources of risk trade-offs. We proposed a simple algorithm to recover a model that improves minimax group risk (MMPF), and does not require test-time access to sensitive attributes.

We demonstrated the proposed framework and optimization algorithm on several real-world case studies, achieving state-of-the-art performance. The algorithm is straightforward to implement, and is agnostic to the hypothesis class, risk function and optimization method, which allows integration with a variety of classification pipelines, including neural networks. If the hypothesis class or risk functions are not convex, the algorithm can still be deployed to recover Pareto-efficient models that reduce minimax risks, though minimax optimality might not be achievable by optimizing a linear weighting of the risk functions. While this paper addresses minimal harm, other considerations like marginal risk tradeoffs between groups may be of interest. Controlling these in the proposed framework can be achieved by adding a constraint on the ratio between linear weights.

As an avenue of future research, we would like to automatically identify high-risk sub-populations as part of the learning process and attack risk disparities as they arise, rather than relying on preexisting notions of disadvantaged groups. The APStar algorithm is empirically convergent, but a formal proof or counterexample is desireable. We strongly believe that Pareto-efficient notions of fairness are of great interest for several applications, especially so on domains such as healthcare, where quality of service is paramount.

% % Acknowledgements should only appear in the accepted version.
\section*{Acknowledgments}

Work partially supported by ONR, ARO, NGA, NSF, NIH, Simons Foundation, and gifts from AWS, Microsoft, and Google.

% \textbf{Do not} include acknowledgements in the initial version of the paper submitted for blind review.

\bibliography{icml_fairness}
\bibliographystyle{icml2020}

%%%%%%%%%%%%%%%%%%
% DELETE THIS PART. DO NOT PLACE CONTENT AFTER THE REFERENCES!
%%%%%%%%%%%%%%%%%%
\appendix
\clearpage

\section{Supplementary Material}
\label{sec:Appendix}

\subsection{Proofs}
\label{sec:AppendixProofs}
Here we provide proof outlines for all lemmas and theorems in sections \ref{sec:ProblemStatement},\ref{sec:AnalysisPOSols}, and \ref{sec:OptimizationMethods}.

\paragraph{Lemma \ref{lemma:perfectfairnessminmax}.} If $\exists h^* \in \pha:  r_a(h^*)=r_{a'}(h^*),\forall a,a' \in \mathcal{A}$ then $ \bm{r}(h^*) = \argmin\limits_{\bm{r} \in \pra} \norminf{\bm{r}}$.

\begin{proof}
By contradiction, assume $\exists\, h' \in \pha: \norminf{\bm{r}(h')} <  \norminf{\bm{r}(h^*)}$. Then we have 
\[r_a(h') \le \norminf{\bm{r}(h')} < \norminf{\bm{r}(h^*)} = r_a(h^*),\; \forall a \in \mathcal{A}\]
And therefore $\bm{r}_{h'} \prec \bm{r}_{h*} \rightarrow h* \not\in \pha$, which contradicts the hypothesis.
\end{proof}

\paragraph{Lemma \ref{lemma:eopareto}.} Let $h_{ER} \in \mathcal{H}$ be an equal risk classifier such that $r_a(h_{ER})=r_{a'}(h_{ER}) \forall\, a, a'$, and let $h^*$ be the Pareto fair classifier. Additionally, define the Pareto fair post-processed equal risk classifier $h^*_{ER} : r_{a}(h^{*}_{ER}) = ||\bm{r}(h^*)||_{\infty} \forall \,a \in \mathcal{A}$, then we have
\begin{proof}
Note that $\norminf{\bm{r}_{h_{ER}}} \ge \norminf{\bm{r}_{h^*}}$, otherwise $\norminf{\bm{r}_{h_{ER}}} < \norminf{\bm{r}_{h^*}}$ and also $h_{ER}\preceq h^*$, which contradicts the definition of $h^*$.

The statement therefore follows from $r_a(h_{ER}) = \norminf{\bm{r}_{h^{ER}}}  \ge \norminf{\bm{r}_{h^*}} = r_a (h^*_{ER})\ge r_a(h^*)\; \forall a \in \mathcal{A}$. 

\end{proof}

\paragraph{Theorem \ref{theo:linearweightconvex}.}
Given $\mathcal{H}$ a convex hypothesis class and $\{r_a(h)\}_{a \in \mathcal{A}}$ convex risk functions then:
\begin{enumerate}
    \item The Pareto front is convex: $\forall \bm{r}, \bm{r}' \in \pra,\, \lambda \in [0,1], \; \exists \bm{r}'' \in \pra: \bm{r}'' \preceq \lambda \bm{r} + (1-\lambda) \bm{r}' $.
    \item Every Pareto solution is a solution to Problem \ref{eq:linearweightproblem}: $\forall \bm{\hat{r}} \in \pra, \exists \bm{\mu}: \bm{\hat{r}} = \bm{r}(\bm{\mu})$.
\end{enumerate}

\begin{proof}
We prove the first item of the theorem statement, the second item is a direct application of the results in \cite{geoffrion1968proper}.

Let $h',h'' \in \pha$, with corresponding risk vectors $\bm{r}_{h'}, \bm{r}_{h''}$. Using the convexity of $r_a(h)$, $\forall \lambda \in [0,1]$ we have
\[\lambda r_a(h') + (1-\lambda) r_a(h')\ge r_a(\lambda h + (1-\lambda) h'').\]

Since $\mathcal{H}$ is convex, $h^\lambda = \lambda h' + (1-\lambda) h'' \in \mathcal{H}$. We have two possibilities;
\begin{itemize}
    \item $h^\lambda \in \pha$, therefore, by definition $\bm{r}_{h^\lambda} \in \pra $ and  $\bm{r}_{h^\lambda}\preceq \lambda \bm{r}' + (1-\lambda) \bm{r}''$;;
    \item $h^\lambda \notin \pha$, therefore $\exists\, \bm{\hat{r}} \in \pra, \bm{\hat{r}} \prec \bm{r}^\lambda$.
\end{itemize}

In both cases, for all risk vectors  $\bm{r}', \bm{r}'' \in \pra, \lambda \in [0,1]\; \exists\, \hat{\bm{r}} \in \pra: \hat{\bm{r}}\preceq \lambda \bm{r}' + (1-\lambda) \bm{r}''$

\end{proof}

\paragraph{Theorem \ref{theo:briercrosspareto}.} 
Given input features $X \in \mathcal{X}$ and categorical target and sensitive group variables $Y \in \mathcal{Y}$ and $A \in \mathcal{A}$ respectively, with joint distribution $p(X,Y,A)$, and linear weights $\bm{\mu} = \{\mu_a\}_{a \in \mathcal{A}}$, the optimal predictor to the linear weighting problem $h(\bm{\mu})$ for both Brier score and Cross-Entropy is
\begin{equation*}
 \begin{array}{cc}
h^{\bm{\mu}}(x)  = \frac{\sum_{a\in\mathcal{A}}\mu_a p(x|a) p(y|x,a)}{\sum_{a\in\mathcal{A}}\mu_a p(x|a)},
\end{array}
\end{equation*}
\noindent with corresponding risks
\begin{equation*}
 \begin{array}{ll}
 r^{BS}_a(\bm{\mu}) = E_{X,Y|a}[||\oh{Y}-p(y|X,a)||^2_2] +\\
        \hspace{0.6in} E_{X|a}\Big[||p(y|X,a) - h^{\bm{\mu}}(X)||^2_2\Big], \\
r^{CE}_a(\bm{\mu}) =H(Y|X,a)+\\ \hspace{0.6in} E_{X|a}\Big[D_{KL}\Big(p(y|X,a)\big|\big|h^{\bm{\mu}}(X)\Big)\Big], 
        
\end{array}
\end{equation*}
where $p(y|X,a) = \{p(Y=y_i|X,A=a)\}_{i=1}^{|\mathcal{Y}|}$ is the probability mass vector of $Y$ given $X$ and $A=a$. $H(Y|X,a)$ is the conditional entropy $H(Y|X,A=a) = E_{X|A=a}[H(Y|X=X,A=a)]$.

\begin{proof}
We observe that the linear loss function $\sum\limits_{a\in \mathcal{A}} \mu_a r_a(h) $ can be decomposed as
\begin{equation*}
    \begin{array}{l}
        \sum\limits_{a\in \mathcal{A}} \mu_a r_a(h)\\
        = \sum\limits_{a\in \mathcal{A}} \mu_a E_{X|a}\big[E_{Y|X,a} [\ell(h(X),Y)]\big]\\
          = \sum\limits_{a\in \mathcal{A}} \mu_a E_{X}\big[\frac{p(X|a)}{p(X)}E_{Y|X,a} [\ell(h(X),Y)]\big] \\
         = \sum\limits_{a\in \mathcal{A}} \mu_a E_{X}\big[\frac{p(X|a)}{p(X)}E_{Y|X}[\frac{p(Y|X,a)}{p(Y|X)}\ell(h(X),Y)]\big] \\
          =E_{X}\big[\frac{1}{p(X)}E_{Y|X}[\frac{\sum\limits_{a\in \mathcal{A}}\mu_a p(X|a)p(Y|X,a)}{p(Y|X)}\ell(h(X),Y)]\big]\\
          =E_{X}\big[\frac{\sum\limits_{a\in \mathcal{A}}\mu_a p(X|a)}{p(X)}E_{Y\sim P^\mu(Y|X)}[\ell(h(X),Y)]\big],
    \end{array}
\end{equation*}

with $ P^\mu(Y|X) = \frac{\sum\limits_{a\in \mathcal{A}}\mu_a p(X|a)p(y|X,a)}{\sum\limits_{a\in \mathcal{A}}\mu_a p(X|a)}$, and denoting $p(y|X,a) = \{p(Y=y_i|X,a)\}_{i=1}^{|\mathcal{Y}|}$ the conditional probability mass vector of $Y|X,a$.

For both the Cross-Entropy loss $\ell^{CE}(h(X),Y) = \langle \oh{Y}, \ln(h(X))\rangle$ and the Brier score loss  $\ell^{BS}(h(X),Y) = \normtwo{ \oh{Y} - h(X)}$, the minimizer of $E_{Y\sim P(Y|X)}[\ell(h(X),Y)]$ is attained at $h(X) = P(Y|X)$. Therefore, we can plug in this optimal estimator to recover

\begin{equation*}
 \begin{array}{cc}
h^{\bm{\mu}}(x)  = \frac{\sum_{a\in\mathcal{A}}\mu_a p(x|a) p(y|x,a)}{\sum_{a\in\mathcal{A}}\mu_a p(x|a)}.
\end{array}
\end{equation*}

Plugging in the optimal classifier $h^{\bm{\mu}}(x)$ on the risk formulations we get the expressions for both scores as analyzed next.

\textbf{Brier Score:}
\begin{equation*}
    \begin{array}{l}
        r^{BS}_a(\bm{\mu}) = \\ =E_{X,Y|a}[||\oh{Y}-h^{\bm{\mu}}(X)||^2_2]\\
        =E_{X,Y|a}[||\oh{Y}-p(y|X,a)+ p(y|X,a)-h^{\bm{\mu}}(X)||^2_2]\\
        =E_{X,Y|a}[||\oh{Y}-p(y|X,a)||^2_2]\\
        \quad + E_{X,Y|a}[|| p(y|X,a)-h^{\bm{\mu}}(X)||^2_2] +\\
        \quad + 2 E_{X,Y|a}\Big[[\oh{Y}-p(y|X,a)]^T[p(y|X,a)- h^{\bm{\mu}}(X)]\Big]\\
        =r_{a}^{BSmin} + E_{X|a}\Big[ ||p(y|X,a) - h^{\bm{\mu}}(X)||^2_{2} \Big] \\
        \quad +2E_{X|a}\Big[[p(y|X,a)-p(y|X,a)]^T\Big]\big[p(y|X,a) - h^{\bm{\mu}}(X)\big]\\
        =r_{a}^{BSmin} + E_{X|a}\Big[ ||p(y|X,a) - h^{\bm{\mu}}(X)||^2_{2} \Big],
    \end{array}
\end{equation*}
where the equalities are attained by expanding the l2-norm, observing that $p(y|X,a)$ and $h(X)$ do not depend on random variable $Y$, and also that $E_{Y|X,a}[\oh{Y}] = p(y|X,a)$. We also denoted $r_a^{BSmin} = E_{X,Y|a}[||\oh{Y}-p(y|X,a)||^2_2]$, which is the risk attained by the Bayes optimal estimator for group $a$.

\textbf{Cross Entropy:}
\begin{equation*}
    \begin{array}{l}
        r^{CE}_a(\bm{\mu}) =\\
        = -E_{X,Y|a}\Big[\sum^{|\mathcal{Y}|}_{i=1}\oh{Y}_i log[h^{\bm{\mu}}_i(X)]\Big]\\
        = -E_{X,Y|a}\Big[\sum^{|\mathcal{Y}|}_{i=1}\oh{Y}_i log[\frac{h^{\bm{\mu}}_i(X)p(y_i|X,a)}{p(y_i|X,a)}]\Big]\\
        = -E_{X,Y|a}\Big[\sum^{|\mathcal{Y}|}_{i=1}\oh{Y}_i log[p(y_i|X,a)]\Big]+\\
        \qquad +E_{X,Y|a}\Big[\sum^{|\mathcal{Y}|}_{i=1}\oh{Y}_i log[\frac{p(y_i|X,a)}{h^{\bm{\mu}}_i(X)}]\Big] =  \\
        = r_{a}^{CEmin} + E_{X|a}\Big[\sum^{|\mathcal{Y}|}_{i=1}p(y_i|X,a)log[\frac{p(y_i|X,a)}{h^{\bm{\mu}}_i(X)} \Big]\\
        = r_{a}^{CEmin} + E_{X|a}\Big[D_{KL}\Big(p(y|X,a)\big|\big|h^{\bm{\mu}}(X)\Big)\Big], \\
    \end{array}
\end{equation*}

where we again use the linearity of the expectation and the equality $E_{Y|X,a}[\oh{Y}] = p(y|X,a)$. We also denote $r_{a}^{CEmin} =  H(Y|X,A=a)$.
\end{proof}

\paragraph{Lemma \ref{lemma:limitingCases}.} In the conditions of Theorem \ref{theo:briercrosspareto} we observe that if $Y \perp A|X$ then

\begin{equation*}
 \begin{array}{ll}
 r^{BS}_a(\bm{\mu}) = E_{X,Y|a}[||\oh{Y}-p(y|X)||^2_2] \;\forall \bm{\mu} \\
r^{CE}_a(\bm{\mu}) =H(Y|X) \; \forall \bm{\mu},
\end{array}
\end{equation*}

Likewise, if $H(A|X) \rightarrow 0$ then 

\begin{equation*}
 \begin{array}{ll}
 r^{BS}_a(\bm{\mu}) \rightarrow E_{X,Y|a}[||\oh{Y}-p(y|X,a)||^2_2] \;\forall \bm{\mu} \\
r^{CE}_a(\bm{\mu})  \rightarrow H(Y|X,a) \; \forall \bm{\mu}. 
\end{array}
\end{equation*}
\begin{proof}
$ $\newline
\textbf{If} $Y \perp A|X$  then 
\begin{equation*}
    \begin{array}{rl}
    h^{\bm{\mu}}(x)  &= \frac{\sum_{a\in\mathcal{A}}\mu_a p(x|a) p(y|x,a)}{\sum_{a\in\mathcal{A}}\mu_a p(x|a)},\\
         & = p(y|x),
    \end{array}
\end{equation*}

in which case the resulting expressions for $r^{BS}_a(\bm{\mu})$, $r^{CE}_a(\bm{\mu})$ are immediate and do not depend on $\bm{\mu}$

\textbf{If} $H(A|X) \rightarrow 0$ then we have $p(a|x) =1, p(a'|x)=0 \,\forall\, a\neq a', x : p(x|a)>0$. Therefore we can write
\begin{equation*}
    \begin{array}{rl}
    h^{\bm{\mu}}(x)  &= \frac{\sum_{a\in\mathcal{A}}\mu_a p(x|a) p(y|x,a)}{\sum_{a\in\mathcal{A}}\mu_a p(x|a)}\\
    &= \frac{\sum_{a\in\mathcal{A}}\mu_a p(a|x)p(a) p(y|x,a)}{\sum_{a\in\mathcal{A}}\mu_a p(a|x)p(a)}\\
         & = p(y|x,a),\, \forall a, x : p(x|a)>0,
    \end{array}
\end{equation*}
and again the resulting expressions for $r^{BS}_a(\bm{\mu})$, $r^{CE}_a(\bm{\mu})$ are immediate from direct substitution.
\end{proof}

We now present two auxiliary lemmas that will help us prove Theorem \ref{theo:Starshaped}.

\begin{lemma}
Let $\pra$ be a Pareto front, and let $\bm{r}(\bm{\mu}) \in \pra$ denote the solution the linear weighting Problem given by Eq.(\ref{eq:linearweightproblem}). Then $\forall \bm{\mu}, \in \mathbb{R}_+^\mathcal{A}$, $\mathcal{I} \subseteq \mathcal{A}, \bm{\eta}: \eta_i >0 \forall i \in \mathcal{I},  \eta_i =0\, \forall i \in \mathcal{A}\setminus \mathcal{I}$.

Then at least one risk in the $\mathcal{I}$ coordinates is reduced, or both risk vectors are the same in $\mathcal{I}$, i.e., \[\exists j \in \mathcal{I}: r_j(\bm{\mu}+\bm{\eta})<r_j(\bm{\mu}) \vee \bm{r}_{\mathcal{I}}(\bm{\mu}+\bm{\eta})=\bm{r}_{\mathcal{I}}(\bm{\mu}). \]
\label{lemma:Monotonicity}
\end{lemma}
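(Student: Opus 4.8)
The plan is to derive the conclusion purely from the \emph{defining optimality} of the two linear-weighting minimizers, without ever computing the risks explicitly. Abbreviate $\bm{r} = \bm{r}(\bm{\mu})$ and $\bm{r}' = \bm{r}(\bm{\mu}+\bm{\eta})$, and recall that $h^{\bm{\mu}}$ minimizes $\sum_a \mu_a r_a(h)$ while $h^{\bm{\mu}+\bm{\eta}}$ minimizes $\sum_a (\mu_a+\eta_a) r_a(h)$ over $\mathcal{H}$. Evaluating each minimizer against the other's argument yields the two scalar inequalities $\langle \bm{\mu}, \bm{r}\rangle \le \langle \bm{\mu}, \bm{r}'\rangle$ and $\langle \bm{\mu}+\bm{\eta}, \bm{r}'\rangle \le \langle \bm{\mu}+\bm{\eta}, \bm{r}\rangle$. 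This is the whole analytic input; everything else is an exchange argument.

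The key step is to combine these two inequalities so that the common $\langle\bm{\mu},\cdot\rangle$ contribution cancels. Rewriting the first as $\langle\bm{\mu},\bm{r}-\bm{r}'\rangle \le 0$ and the second as $\langle\bm{\mu}+\bm{\eta},\bm{r}-\bm{r}'\rangle \ge 0$, combining them isolates the perturbation term: $\langle\bm{\eta},\bm{r}-\bm{r}'\rangle \ge -\langle\bm{\mu},\bm{r}-\bm{r}'\rangle \ge 0$. Since $\bm{\eta}$ is supported exactly on $\mathcal{I}$ with strictly positive entries there, this reduces to
\[\sum_{i\in\mathcal{I}}\eta_i\bigl(r_i(\bm{\mu})-r_i(\bm{\mu}+\bm{\eta})\bigr)\ge 0,\qquad \eta_i>0\ \forall i\in\mathcal{I}.\]

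The remaining work, and the only place requiring care, is turning this single aggregate inequality into the per-coordinate dichotomy in the statement. I would argue by cases on the restricted difference $\bm{r}_{\mathcal{I}}(\bm{\mu}+\bm{\eta})-\bm{r}_{\mathcal{I}}(\bm{\mu})$. If it vanishes identically we are in the second disjunct, $\bm{r}_{\mathcal{I}}(\bm{\mu}+\bm{\eta})=\bm{r}_{\mathcal{I}}(\bm{\mu})$. Otherwise some coordinate differs; suppose toward contradiction that none strictly decreased, i.e. $r_i(\bm{\mu}+\bm{\eta})\ge r_i(\bm{\mu})$ for every $i\in\mathcal{I}$ with strict inequality somewhere. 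Then, because all weights $\eta_i$ are strictly positive, the displayed sum would be \emph{strictly negative}, contradicting the inequality above; hence some $j\in\mathcal{I}$ satisfies $r_j(\bm{\mu}+\bm{\eta})<r_j(\bm{\mu})$, the first disjunct. The main obstacle is thus not analytic but one of bookkeeping: making sure the sign manipulation in the exchange step is correct, and handling the possibility of non-unique linear-weighting minimizers, which I would resolve by fixing once and for all the particular solutions denoted $\bm{r}(\bm{\mu})$ and $\bm{r}(\bm{\mu}+\bm{\eta})$ so that the optimality inequalities and the case split refer to the same representatives throughout.
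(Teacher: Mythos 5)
Your proof is correct, and it takes a genuinely cleaner route than the paper's. The paper negates the thesis and then splits into two cases according to the sign of $\Phi_{\mathcal{A}\setminus\mathcal{I}}(\bm{\mu}) - \Phi_{\mathcal{A}\setminus\mathcal{I}}(\bm{\mu}+\bm{\eta})$, where $\Phi_{\mathcal{A}\setminus\mathcal{I}}$ collects the weighted risks on the unperturbed coordinates; in each case it invokes only the optimality of $\bm{r}(\bm{\mu}+\bm{\eta})$ for the weights $\bm{\mu}+\bm{\eta}$ and tries to contradict the negated thesis. You instead use \emph{both} optimality inequalities, $\langle \bm{\mu}, \bm{r}-\bm{r}'\rangle \le 0$ and $\langle \bm{\mu}+\bm{\eta}, \bm{r}-\bm{r}'\rangle \ge 0$, and subtract them so that the $\mathcal{A}\setminus\mathcal{I}$ contribution cancels identically, leaving the single inequality $\sum_{i\in\mathcal{I}}\eta_i\bigl(r_i(\bm{\mu})-r_i(\bm{\mu}+\bm{\eta})\bigr)\ge 0$; the dichotomy then follows from $\eta_i>0$ exactly as you argue. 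This two-sided exchange argument buys two things. First, it eliminates the case split entirely. Second, it is actually more robust than the printed proof: in the paper's first case the stated hypothesis $\Phi_{\mathcal{A}\setminus\mathcal{I}}(\bm{\mu}) > \Phi_{\mathcal{A}\setminus\mathcal{I}}(\bm{\mu}+\bm{\eta})$ makes the right-hand side $\Phi_{\mathcal{A}\setminus\mathcal{I}}(\bm{\mu}+\bm{\eta})-\Phi_{\mathcal{A}\setminus\mathcal{I}}(\bm{\mu})$ \emph{negative}, not positive as the underbrace asserts, so no contradiction with the nonpositive left-hand side follows as written; that case is precisely where the missing second optimality inequality (of $\bm{r}(\bm{\mu})$ for $\bm{\mu}$) is needed, and your summed version supplies it uniformly. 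Your closing remark about fixing representatives of possibly non-unique minimizers is also the right hygiene, since the lemma is stated for a selected solution $\bm{r}(\bm{\mu})$ of Problem \ref{eq:linearweightproblem}.
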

\begin{proof}
Denote $\Phi(\bm{\mu}) = \sum_{a\in \mathcal{A}} \mu_a r_a(\bm{\mu})$,  $\Phi_{\mathcal{A}\setminus\mathcal{I}}(\bm{\mu}) = \sum_{a\in \mathcal{A}\setminus\mathcal{I}} \mu_a r_a(\bm{\mu})$, and  $\bm{r}_\mathcal{I}(\bm{\mu})=\{r_a(\bm{\mu})\}_{a \in \mathcal{I}} $.

By contradiction, we negate the thesis to get \[(\bm{r}_\mathcal{I}(\bm{\mu}+\bm{\eta})\ge \bm{r}_\mathcal{I}(\bm{\mu})) \wedge\,(\exists j \in \mathcal{I}: \bm{r}_j(\bm{\mu}+\bm{\eta})> \bm{r}_j(\bm{\mu})).\]

We discuss $2$ cases.

\textbf{Case} $\Phi_{\mathcal{A}\setminus\mathcal{I}}(\bm{\mu}) > \Phi_{\mathcal{A}\setminus\mathcal{I}}(\bm{\mu}+\bm{\eta})$:

By definition of $\bm{r}(\bm{\mu}), \bm{r}(\bm{\mu}+\bm{\eta})$ we observe
\begin{equation*}
    \begin{array}{l}
         \sum_{a\in \mathcal{A}} (\mu_a + \eta_a) r_a(\bm{\mu})\ge\sum_{a\in \mathcal{A}} (\mu_a + \eta_a) r_a(\bm{\mu}+\bm{\eta}),\\
         \sum_{a\in\mathcal{I}} (\mu_a + \eta_a) r_a(\bm{\mu}) +\Phi_{\mathcal{A}\setminus\mathcal{I}}(\bm{\mu}) \ge\\
         \qquad \qquad \ge\sum_{a\in \mathcal{I}} (\mu_a + \eta_a) r_a(\bm{\mu}+\bm{\eta}) +\Phi_{\mathcal{A}\setminus\mathcal{I}}(\bm{\mu}+\bm{\eta}),\\
         
         \underbrace{\sum_{a\in \mathcal{I}} (\mu_a + \eta_a) (r_a(\bm{\mu})-r_a(\bm{\mu}+\bm{\eta}))}_{\le0} \ge\\
         \qquad \qquad \ge \underbrace{\Phi_{\mathcal{A}\setminus\mathcal{I}}(\bm{\mu}+\bm{\eta})-\Phi_{\mathcal{A}\setminus\mathcal{I}}(\bm{\mu})}_{>0},
    \end{array}
\end{equation*}
where the inequalities in the underbrackets are a direct result of the case assumptions, these inequalities contradict the hypothesis.

\textbf{Case} $\Phi_{\mathcal{A}\setminus\mathcal{I}}(\bm{\mu}) \le \Phi_{\mathcal{A}\setminus\mathcal{I}}(\bm{\mu}+\bm{\eta})$:

We can directly observe that
\begin{equation*}
    \begin{array}{l}
         \sum_{a\in \mathcal{A}} (\mu_a + \eta_a) r_a(\bm{\mu}) \\
         \qquad\qquad =\sum_{a\in \mathcal{I}} (\mu_a + \eta_a) r_a(\bm{\mu}) + \Phi_{\mathcal{A}\setminus\mathcal{I}}(\bm{\mu})\\
         \qquad \qquad \le  \sum_{a\in \mathcal{I}} (\mu_a + \eta_a) r_a(\bm{\mu}) + \Phi_{\mathcal{A}\setminus\mathcal{I}}(\bm{\mu}+\bm{\eta})\\
         \qquad \qquad <  \sum_{a\in \mathcal{I}} (\mu_a + \eta_a) r_a(\bm{\mu}+\bm{\eta}) + \Phi_{\mathcal{A}\setminus\mathcal{I}}(\bm{\mu}+\bm{\eta}),\\
         
    \end{array}
\end{equation*}
which again contradicts the hypothesis.
\end{proof}

\begin{lemma}
Let $\pra$ be a Pareto front, and let $\bm{r}(\bm{\mu}) \in \pra$ denote the solution the linear weighting Problem given by Eq.(\ref{eq:linearweightproblem}). Let $\mathcal{I} \subseteq \mathcal{A}$ and let $\bm{\mu}^{\mathcal{I}}: \bm{\mu}^{\mathcal{I}}_\mathcal{I}>0, \bm{\mu}^{\mathcal{I}}_{\mathcal{A}\setminus\mathcal{I}}=0, \normone{\bm{\mu}^{\mathcal{I}}}=1$. Then

\[(\exists i \in \mathcal{I}: r_i(\bm{\mu}^{\mathcal{I}}) < \norminf{\bm{r}(\bm{\mu})}) \vee (\bm{\mu} \in \bm{\mu}^*).\]
\label{lemma:Edges}
\end{lemma}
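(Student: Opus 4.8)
The plan is to prove the disjunction by contradiction, negating both alternatives simultaneously. Abbreviate $R := \norminf{\bm{r}(\bm{\mu})} = \max_{a\in\mathcal{A}} r_a(\bm{\mu})$. Suppose the first disjunct fails, so $r_i(\bm{\mu}^{\mathcal{I}}) \ge R$ for every $i \in \mathcal{I}$, and suppose the second fails, so $\bm{\mu} \notin \bm{\mu}^*$, i.e. $\bm{\mu}$ is not minimax optimal. The only structural facts I need are that $h^{\bm{\mu}^{\mathcal{I}}}$ minimizes the linear objective $\sum_{a\in\mathcal{A}}\mu^{\mathcal{I}}_a r_a(h) = \sum_{i\in\mathcal{I}}\mu^{\mathcal{I}}_i r_i(h)$ over $\mathcal{H}$ (because $\bm{\mu}^{\mathcal{I}}$ is supported on $\mathcal{I}$), and that $\normone{\bm{\mu}^{\mathcal{I}}} = \sum_{i\in\mathcal{I}}\mu^{\mathcal{I}}_i = 1$ with $\mu^{\mathcal{I}}_i > 0$ on $\mathcal{I}$.

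First I would extract a witness from the failure of minimax optimality. Since $\bm{\mu}^* = \argmin_{\bm{\nu}\in\Delta^{|\mathcal{A}|-1}} \norminf{\bm{r}(\bm{\nu})}$ and $\bm{\mu}\notin\bm{\mu}^*$, there is a weight vector $\bm{\nu}$ with $\norminf{\bm{r}(\bm{\nu})} < R$, hence $r_a(\bm{\nu}) < R$ for every $a\in\mathcal{A}$, in particular for every $i\in\mathcal{I}$. Weighting these strict inequalities by the positive coefficients $\mu^{\mathcal{I}}_i$ and summing gives the strict upper bound
\[ \sum_{i\in\mathcal{I}} \mu^{\mathcal{I}}_i\, r_i(\bm{\nu}) \;<\; R \sum_{i\in\mathcal{I}}\mu^{\mathcal{I}}_i \;=\; R. \]

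The contradiction then comes from sandwiching the quantity $\sum_{i\in\mathcal{I}}\mu^{\mathcal{I}}_i r_i(\bm{\mu}^{\mathcal{I}})$. On one side, optimality of $h^{\bm{\mu}^{\mathcal{I}}}$ for the weights $\bm{\mu}^{\mathcal{I}}$, tested against the feasible competitor $h^{\bm{\nu}} \in \mathcal{H}$, yields
\[ \sum_{i\in\mathcal{I}}\mu^{\mathcal{I}}_i\, r_i(\bm{\mu}^{\mathcal{I}}) \;\le\; \sum_{i\in\mathcal{I}}\mu^{\mathcal{I}}_i\, r_i(\bm{\nu}) \;<\; R. \]
On the other side, the negated first disjunct $r_i(\bm{\mu}^{\mathcal{I}}) \ge R$ for all $i\in\mathcal{I}$, together with $\sum_{i\in\mathcal{I}}\mu^{\mathcal{I}}_i = 1$, forces $\sum_{i\in\mathcal{I}}\mu^{\mathcal{I}}_i r_i(\bm{\mu}^{\mathcal{I}}) \ge R$. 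These bounds are incompatible, so at least one of the negated hypotheses must fail, which is exactly the claimed disjunction.

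I expect the main obstacle to be a matter of bookkeeping rather than of mathematical depth: one must ensure the witness $\bm{\nu}$ genuinely exists, which needs the minimax value $\min_{\bm{\nu}}\norminf{\bm{r}(\bm{\nu})}$ to be attained (or at least that $\bm{\mu}\notin\bm{\mu}^*$ produce a strictly better feasible point). Note that the argument uses only the defining optimality of the linear-weighting solution and the fact that $\bm{\mu}^{\mathcal{I}}$ is a simplex point supported on $\mathcal{I}$; it invokes neither convexity of $\mathcal{H}$ or of the risks nor the companion Lemma \ref{lemma:Monotonicity}. Finally, this lemma is precisely Property $3$ of Theorem \ref{theo:Starshaped}: reading $\bm{\mu}$ here as the threshold-defining non-optimal vector and $\bm{\mu}^{\mathcal{I}}$ as an arbitrary simplex point supported on $\mathcal{I}$, the first disjunct states exactly that $\bm{\mu}^{\mathcal{I}} \in \bigcup_{i\in\mathcal{I}} N_i$.
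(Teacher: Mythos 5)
Your proof is correct and is essentially the paper's own argument: both negate the disjunction and derive the contradiction by sandwiching the linear objective $\sum_{i\in\mathcal{I}}\mu^{\mathcal{I}}_i r_i(\bm{\mu}^{\mathcal{I}})$ between $\norminf{\bm{r}(\bm{\mu})}$ from below (negated first disjunct) and the value at a strictly better minimax competitor from above (optimality of $h^{\bm{\mu}^{\mathcal{I}}}$ for its own weights). The only cosmetic difference is that you introduce a generic witness $\bm{\nu}$ where the paper plugs in $\bm{\mu}^*$ directly.
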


\begin{proof}
By contradiction, assume
\[(\bm{r}_{i}(\bm{\mu}^{\mathcal{I}})\ge \norminf{\bm{r}(\bm{\mu})},\,\forall i \in \mathcal{I}) \wedge (\bm{\mu} \not\in \bm{\mu}^*).\]
We can then write 
\begin{equation*}
    \begin{array}{rl}
   \sum_{a\in \mathcal{A}} \mu^\mathcal{I}_a r_a(\bm{\mu}^\mathcal{I}) &\ge \sum_{a\in \mathcal{A}} \mu^\mathcal{I}_a \norminf{\bm{r}(\bm{\mu})}\\
   &> \sum_{a\in \mathcal{A}} \mu^\mathcal{I}_a \norminf{\bm{r}(\bm{\mu}^*)}\\
   &\ge \sum_{a\in \mathcal{A}} \mu^\mathcal{I}_a r_a(\bm{\mu}^*),
    \end{array}
\end{equation*}

which contradicts the definition of $\bm{r}(\bm{\mu}^\mathcal{I})$.
\end{proof}

\paragraph{Theorem \ref{theo:Starshaped}.}
Let $\pra$ be a Pareto front, and $\bm{r}(\bm{\mu}) \in \pra$ denote the solution to the linear weighting Problem \ref{eq:linearweightproblem}. For any $\bm{\mu'} \not\in \argmin\limits_{\bm{\mu} \in \Delta^{|\mathcal{A}|-1}}||\bm{r}(\bm{\mu})||_{\infty}$, and $\bm{\mu}^*\in \argmin\limits_{\bm{\mu} \in \Delta^{|\mathcal{A}|-1}}||\bm{r}(\bm{\mu})||_{\infty}$, the sets \[N_{i} = \{\bm{\mu}:r_{i}(\bm{\mu})<|| \bm{r}(\bm{\mu'})||_{\infty}\}\] satisfy:
\vspace{-0.2em}
\begin{enumerate}
    \item $\bm{\mu}^* \in \bigcap\limits_{i \in \mathcal{A}}N_i$;
    \item If $\bm{\mu} \in N_i \rightarrow \lambda\bm{\mu}+(1-\lambda)\bm{e^i} \in N_i$, $\forall\, \lambda \in [0,1], i=1,\dots,|\mathcal{A}|$, where $\bm{e^i}$ denotes the standard basis vector;
    \item $\forall\, \mathcal{I} \subseteq \mathcal{A}$, $ \bm{\mu}: \mu_{\mathcal{A}\backslash\mathcal{I}} = 0 \rightarrow \bm{\mu} \in \bigcup\limits_{i \in \mathcal{I}}N_i$;
    \item If $\bm{r}(\bm{\mu})$ is also continuous in $\bm{\mu}$, then $\forall\, \mathcal{I} \subseteq \mathcal{A}$ such that $\bm{\mu} \in \bigcap\limits_{i \in \mathcal{I}}N_i \rightarrow \exists \epsilon>0: B_{\epsilon}(\bm{\mu}) \subset  \bigcap\limits_{i \in \mathcal{I}}N_i$;
    \item If $\pra$ is also convex, then $\bm{r}(\bm{\mu}^*) \in \argmin\limits_{\bm{r}\in\pra}||\bm{r}||_{\infty}$.
\end{enumerate}

\begin{proof}$ $

\textbf{Property 1:}
We can directly observe that 
\[r_a(\bm{\mu}^*)\le \norminf{\bm{r}(\bm{\mu}^*)} <\norminf{\bm{r}(\bm{\mu}')},\, \forall a \in \mathcal{A},\] 
and therefore $\bm{\mu}^* \in \bigcap\limits_{i \in \mathcal{A}}N_i$.

\textbf{Property 2:}
Direct application of Lemma \ref{lemma:Monotonicity}.

\textbf{Property 3:}
Direct application of Lemma \ref{lemma:Edges}.

\textbf{Property 4:}
For all $i \in \mathcal{I}$ we have $r_i(\bm{\mu})<\norminf{\bm{r}(\bm{\mu}')}$, since $r_i(\bm{\mu})$ is also continuous in $\bm{\mu}$, $\exists \,\epsilon_i >0 : \,\forall \bm{\mu}'' \in B_{\epsilon_i}(\bm{\mu}) \rightarrow r_i(\bm{\mu})<\norminf{\bm{r}(\bm{\mu}')} $.
Taking $\epsilon =\min\limits_{i\in \mathcal{I}} \epsilon_i$ we have $B_{\epsilon}(\bm{\mu}) \subset  \bigcap\limits_{i \in \mathcal{I}}N_i$.

\textbf{Property 5:} 
Immediate since every point in the Pareto front in this condition can be expressed as a solution to the linear weighting Problem \ref{eq:linearweightproblem}. Proven in Theorem \ref{theo:linearweightconvex} and \cite{geoffrion1968proper}.

\end{proof}

\subsection{Analysis of Proposed Optimization Method}
\label{sec:AppendixOptimization}
Here we discuss some of the properties of the APStar Algorithm (Algorithm \ref{alg:main_algo}). Key observations regarding the update sequence can be summarized as follows:

\begin{itemize}
    \item Updates $\bm{\mu}^{t+1} =  (\bm{\mu}^{t} +\frac{1}{K\normone{\bm{1}_{\mu^t}}}\bm{1}_{\mu^t})\frac{K}{K+1}$ always satisfy $\bm{\mu}^{t} \ge 0, \normone{\bm{\mu}^{t}}=1$,
    \item Consecutive updates that do not decrease the minimax risk have a step size that converges to $0$: $\normtwo{\bm{\mu}^{t+1}-\bm{\mu}^{t}} = \frac{1}{K+1} \normtwo{\bm{\mu}^{t} - \bm{1}_{\mu^t}} \le \frac{1}{K+1} \rightarrow 0$,
    \item The update sequence $\bm{\mu}^{t+1} =  (\bm{\mu}^{t} +\bm{\eta})\frac{K}{K+1}; \, K \leftarrow K+1$ converges to $\bm{\eta}$. That is $\bm{\mu}^{t+1}\rightarrow \bm{\eta}$.
\end{itemize}

So far, we showed that the choice of update sequence always proposes feasible weighting vectors, with progressively smaller step sizes, but that nonetheless can reach any point in the feasible region given sufficient updates.

We can also state that the update directions $\frac{\bm{1}_\mu}{\normone{\bm{1}_\mu}}$ are not fixed points of the algorithm unless they themselves are a viable update vector.
\begin{lemma}
Let $\bm{\mu}'$ with corresponding $\norminf{r(\bm{\mu}')}$ in the conditions of Theorem \ref{theo:Starshaped}. Denote the possible update directions of Algorithm \ref{alg:main_algo} as 
\begin{equation*}
    \bm{\eta}^{\mathcal{I}}: \eta_i =
        \begin{cases} \frac{1}{|\mathcal{I}|}, i\in \mathcal{I},\\ 0 \; \text{o.w.}\end{cases}
\end{equation*}

Similarly, denote 
\begin{equation*}
    \bm{1}^{\bm{\mu}}: \bm{1}^{\bm{\mu}}_i =
        \begin{cases} 1 \; \text{if}\; r_i(\bm{\mu})> \norminf{r(\bm{\mu}')}, \\ 0 \; \text{o.w.}\end{cases}
\end{equation*}

All update directions that are non-viable descent updates are repellors. That is, if $\mathcal{I} \subseteq \mathcal{A} : \norminf{r(\bm{\eta}^{\mathcal{I}})}> \norminf{r(\bm{\mu}')}$ then $\exists \epsilon: \forall \bm{\mu} \in B_{\epsilon}(\bm{\eta}^{\mathcal{I}}); \frac{\bm{1}_\mu}{\normone{\bm{1}_\mu}}\neq \bm{\eta}^{\mathcal{I}}$.
\label{lemma:NoFixedPoints}
\end{lemma}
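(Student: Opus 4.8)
The plan is to show that the indicator set the algorithm computes at the candidate fixed point $\bm{\eta}^{\mathcal{I}}$ already disagrees with $\mathcal{I}$, and then to spread this pointwise disagreement to a whole ball using continuity of the risks. Write $\mathcal{J}(\bm{\mu}) = \{i \in \mathcal{A} : r_i(\bm{\mu}) > \norminf{r(\bm{\mu}')}\}$ for the support of $\bm{1}^{\bm{\mu}}$, so that the realized update direction is $\frac{\bm{1}_{\bm{\mu}}}{\normone{\bm{1}_{\bm{\mu}}}} = \bm{\eta}^{\mathcal{J}(\bm{\mu})}$. Since distinct support sets give distinct uniform vectors, $\bm{\eta}^{\mathcal{J}} = \bm{\eta}^{\mathcal{I}}$ holds exactly when $\mathcal{J} = \mathcal{I}$. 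It therefore suffices to produce $\epsilon > 0$ with $\mathcal{J}(\bm{\mu}) \neq \mathcal{I}$ for every $\bm{\mu} \in B_\epsilon(\bm{\eta}^{\mathcal{I}})$.

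The step carrying the content is to observe that $\bm{\eta}^{\mathcal{I}}$ is an edge of the simplex supported on $\mathcal{I}$, i.e. $\bm{\eta}^{\mathcal{I}}_{\mathcal{A}\setminus\mathcal{I}} = 0$ and $\normone{\bm{\eta}^{\mathcal{I}}} = 1$. Property 3 of Theorem \ref{theo:Starshaped} (built on Lemma \ref{lemma:Edges}) then gives $\bm{\eta}^{\mathcal{I}} \in \bigcup_{i \in \mathcal{I}} N_i$, so there is an index $i_0 \in \mathcal{I}$ with $r_{i_0}(\bm{\eta}^{\mathcal{I}}) < \norminf{r(\bm{\mu}')}$. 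Hence $i_0$ is a satisfied coordinate, $\bm{1}^{\bm{\eta}^{\mathcal{I}}}_{i_0} = 0$, so $i_0 \notin \mathcal{J}(\bm{\eta}^{\mathcal{I}})$ even though $i_0 \in \mathcal{I}$; this is the structural obstruction that prevents a non-viable direction from reproducing itself. I note in passing that the optimality alternative in Lemma \ref{lemma:Edges} is vacuous here, since the hypothesis $\norminf{r(\bm{\eta}^{\mathcal{I}})} > \norminf{r(\bm{\mu}')} \ge \norminf{r(\bm{\mu}^*)}$ forbids $\bm{\eta}^{\mathcal{I}}$ from being minimax-optimal.

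Next I would invoke continuity of $\bm{r}(\bm{\mu})$ (assumed in this regime) to promote the strict inequality to a ball. Because $r_{i_0}$ is continuous and $r_{i_0}(\bm{\eta}^{\mathcal{I}}) < \norminf{r(\bm{\mu}')}$, there is $\epsilon > 0$ with $r_{i_0}(\bm{\mu}) < \norminf{r(\bm{\mu}')}$ for all $\bm{\mu} \in B_\epsilon(\bm{\eta}^{\mathcal{I}})$; on this ball $i_0 \notin \mathcal{J}(\bm{\mu})$, so $\mathcal{J}(\bm{\mu}) \neq \mathcal{I}$ and thus $\frac{\bm{1}_{\bm{\mu}}}{\normone{\bm{1}_{\bm{\mu}}}} \neq \bm{\eta}^{\mathcal{I}}$. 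To keep this direction well defined I would shrink $\epsilon$ if necessary using continuity of $\max_i r_i$: since $\norminf{r(\bm{\eta}^{\mathcal{I}})} > \norminf{r(\bm{\mu}')}$, some coordinate remains strictly above threshold near $\bm{\eta}^{\mathcal{I}}$, keeping $\mathcal{J}(\bm{\mu})$ nonempty and $\normone{\bm{1}_{\bm{\mu}}} \neq 0$.

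The two continuity estimates are routine; the only nontrivial point is identifying Property 3 / Lemma \ref{lemma:Edges} as the tool that forces a coordinate of $\mathcal{I}$ below the threshold at $\bm{\eta}^{\mathcal{I}}$. The main thing to be careful about is the \emph{strictness} of the gap supplied by that edge property, since only a strict inequality survives perturbation; this strictness is guaranteed precisely because $\bm{\mu}'$ is non-optimal, which places $\norminf{r(\bm{\mu}')}$ strictly above the minimax value and rules out the degenerate case.
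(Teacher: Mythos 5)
Your proof is correct and follows essentially the same route as the paper, whose entire proof is the one-line remark that the lemma is a direct corollary of Properties 3 and 4 of Theorem \ref{theo:Starshaped}; you correctly unpack that: Property 3 (via Lemma \ref{lemma:Edges}) forces some $i_0\in\mathcal{I}$ with $r_{i_0}(\bm{\eta}^{\mathcal{I}})<\norminf{r(\bm{\mu}')}$ strictly, and continuity (the content of Property 4) spreads this to a ball on which $i_0$ is dropped from the indicator set, so the normalized update direction cannot reproduce $\bm{\eta}^{\mathcal{I}}$. One negligible quibble: the optimality alternative in Lemma \ref{lemma:Edges} refers to $\bm{\mu}'$ being minimax-optimal (excluded by the standing hypothesis of Theorem \ref{theo:Starshaped}), not to $\bm{\eta}^{\mathcal{I}}$ as you state, but either reading rules it out here.
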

\begin{proof}
This is a direct corollary of properties $3$ and $4$ of Theorem \ref{theo:Starshaped}
\end{proof}

A full convergence proof of the algorithm would need to show that this algorithm has no cycles. This can be motivated to a point by observing that the update directions are equivalent to performing gradient descent on the following function:

\begin{equation}
    \begin{array}{cc}
        F(\bm{\mu}) &= \sum\limits_{i=1}^{|\mathcal{A}|} (1-\mu_a) \bm{1}(r_i(\bm{\mu}) \ge \bar{\bm{r}}),  \\
         \nabla F(\bm{\mu}) &= -\{(r_i(\bm{\mu}) \ge \bar{\bm{r}})\}_{i=1}^{|\mathcal{A}|}= -\bm{1}_{\bm{\mu}},
    \end{array}
    \label{eq:AlgoGradient}
\end{equation}

where we abuse the notation $\nabla F(\bm{\mu})$ since the function is not differentiable on the boundaries where $r_i(\bm{\mu})= \bar{\bm{r}}$. We do note however that even on those points, $\bm{1}_{\bm{\mu}}$ is a valid descent direction. Function $F(\bm{\mu})$ can be shown to have no non-global minima in the set $\bm{\mu}:\bm{\mu}>0, \normone{\bm{\mu}}=1$, and global minina on all $\bm{\mu} \in \bigcap\limits_{i \in \mathcal{A}}N_i$. We do note, however, that gradient descent on discontinous functions can still produce cycles on pathological cases.

Figure \ref{fig:simplex2x2convergenceAppendix} (repeating Figure \ref{fig:simplex2x2convergence} for convenience) shows examples of iterations of the APStar algorithm across randomly generated star-convex sets satisfying the conditions of Theorem \ref{theo:Starshaped}. We observe that the algorithm converges to a viable update direction in all instances. Convergence for simple cases happens in few iterations, but challenging scenarios can still be appropriately solved with the proposed algorithm; on average, our algorithm is significantly faster than random sampling, especially in challenging scenarios. An explanation on how these distributions are sampled is presented next.

\begin{figure}[ht]
 \centering
 \includegraphics[width=1\linewidth]{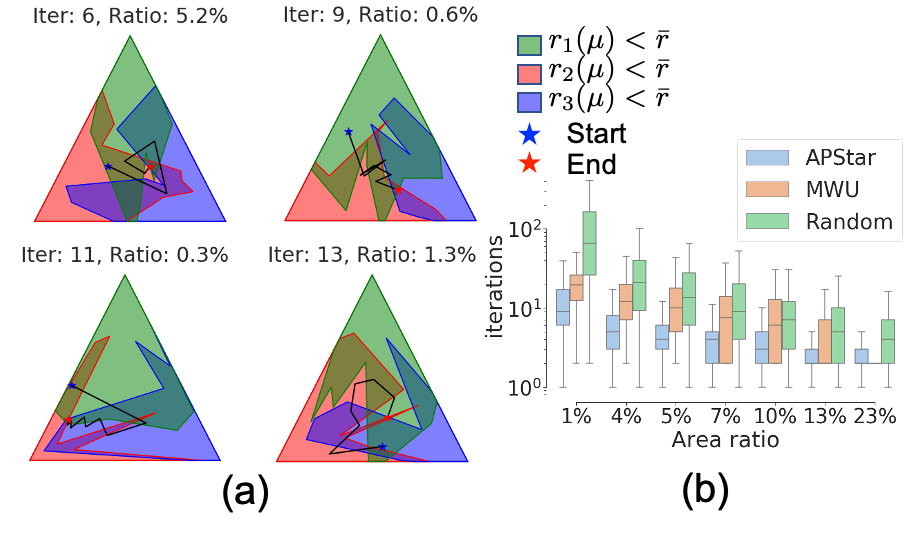}
 \caption{\footnotesize Synthetic data experiment on star-shaped sets. (a) Randomly sampled star sets satisfying the conditions of Theorem \ref{theo:Starshaped}; a random starting point is sampled (Blue), and the trajectories recovered by the APStar algorithm are recorded until convergence (Red); number of iterations and intersection area are shown for all examples. (b) Empirical distribution of the number of iterations required to converge as a function of the percentage of linear weights that lie in the triple intersection; values are shown for the APStar algorithm, random sampling, and the multiplicative weight update (MWU) algorithm proposed in \cite{chen2017robust} for minimax optimization. The number of iterations required by the algorithm is well below both samplers, this is especially apparent for low area ratio scenarios. APStar finds a viable weight in all scenarios, with simpler sets and initial conditions requiring a smaller number of iterations on average.}
\label{fig:simplex2x2convergenceAppendix}
\end{figure}
\subsubsection*{Sampling Star-convex Distributions}
\label{sec:StarshapedSampling}
Here we describe a simple procedure to sample Star-convex distributions in $\Delta^2$ that satisfy the properties of Theorem \ref{theo:Starshaped}.

We note that any point in the $\Delta^2$ simplex can be embedded into $\mathbbm{R}^2$ by using the following function,

\begin{equation*}
    \begin{array}{rl}
         f(\bm{\mu}):&\Delta^2 \rightarrow \mathbbm{R}^2,   \\
         f(\bm{\mu})=& (\frac{2\mu_1 +\mu_2}{2},\frac{\sqrt{3}\mu_2}{2}).
    \end{array}
\end{equation*}

In this restricted $\mathbbm{R}^2$ space, we note parametric curves of the form $\mathcal{C}_i: [0,\frac{\pi}{3}]\rightarrow \mathbbm{R}^{+}$ can be used to parametrize the Star-shaped sets we require for Theorem \ref{theo:Starshaped}. Namely, for any Star-shaped set $N_i \in \Delta^2$ centered on $\bm{e}^i$, we can find a function $\mathcal{C}_i: [0,\frac{\pi}{3}]\rightarrow \mathbbm{R}^{+}$ such that 

\begin{equation*}
\begin{array}{c}
     N_i=\{\bm{\mu} \in \Delta^2: d_i = \langle f(\bm{\mu})-f(\bm{e}^i) , f(\bm{e}^{i+1})-f(\bm{e}^i)\rangle,\\
     \quad \qquad \qquad \theta_i =\angle (f(\bm{\mu})-f(\bm{e}^i), f(\bm{e}^{i+1})-f(\bm{e}^i),\\
     \qquad d_i < \mathcal{C}_i(\theta_i)\}\bigcup \{f(\bm{e^i})\}.
\end{array}
\end{equation*}

Figure \ref{fig:simplexSampler} illustrates the relationship between curves $\mathcal{C}_i$ and their corresponding Star-shaped sets $N_i$.

\begin{figure}[ht!]
 \centering
 \includegraphics[width=.61\linewidth]{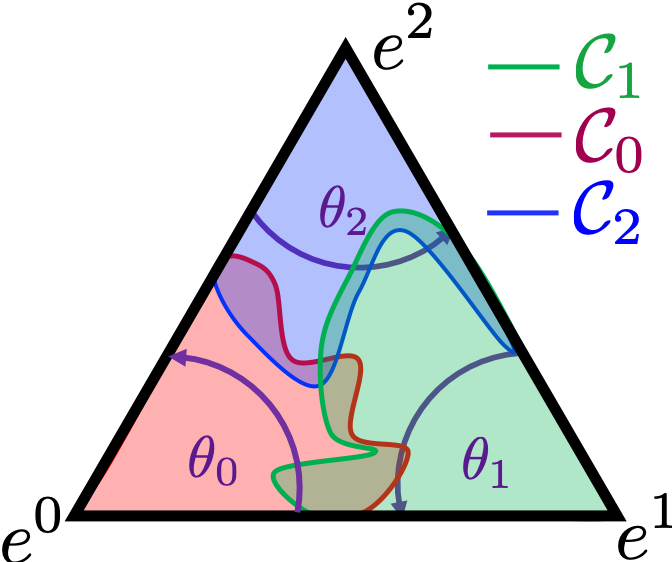}
 \caption{\footnotesize Illustration of Star-shaped sets $N_i \in \Delta^2$. The sets $N_i$ can be parametrized by the boundary curves $\mathcal{C}_i$, all points that connect the boundary $\mathcal{C}_i$ with $f(\bm{e}^i)$ conform the Star-shaped set $N_i$.}
\label{fig:simplexSampler}
\end{figure}

To create a curve $\mathcal{C}_i$, we construct a piecewise-linear function by sampling $K$ tuples $(\theta_i^j,r_i^j)_{j=0}^K$ satisfying

\begin{equation*}
    \begin{array}{cc}
         0=\theta_i^0< \dots < \theta_i^K=\pi/3, \\
         r_i^j \in [0,1]\;\forall j=0,\dots,K.
    \end{array}
\end{equation*}

In our convergence experiments, we set $K=7$, $r_i^j \sim U[0,1]$ and $(\theta_i^j)_{j=1}^{K-1} =  \text{Sort}((u\sim U[0,\pi/3])_{j=1}^{K-1})$. We sample $\mathcal{C}_0$, $\mathcal{C}_1$, $\mathcal{C}_2$ independently and then reject these functions if the corresponding sets $N_0, N_1, N_2$ do not satisfy the properties of Theorem \ref{theo:Starshaped}. Namely, properties $2$ and $4$ are satisfied by construction, we check that Property $1$ is satisfied by verifying that $N_0\cup N_1 \cup N_2$ contains at least one element.

Property 3 is verified by checking that $N_0\cap N_1 \cap N_2$ covers the entire triangle in $\mathcal{R}^2$, and also that
\begin{equation*}
    \begin{array}{c}
\mathcal{C}_0(0) + \mathcal{C}_{1}(\pi/3)>1,\\
\mathcal{C}_1(0) + \mathcal{C}_{2}(\pi/3)>1,\\
\mathcal{C}_2(0) + \mathcal{C}_{0}(\pi/3)>1.\\
    \end{array}
\end{equation*}

\subsection{MMPF Implementation Details}
\label{sec:ImplementationDetails}

Here we present implementation details to estimate the Minimax Pareto Fair classifier from data. As mentioned in Section \ref{sec:OptimizationMethods}, the APStar algorithm (Algorithm \ref{alg:main_algo}) requires an optimizer to solve the linear weighting problem (Problem \ref{eq:linearweightproblem}). We propose two options, one minimizes it using stochastic gradient descent (SGD), we call this approach joint estimation, and it is described in Algorithm \ref{alg:jointestimation}. Note that each batch samples the sensitive attributes uniformly in order to reduce the variance of the conditional risk estimators for every group in every batch. 
The second approach was also presented in Section \ref{sec:OptimizationMethods} and is called plug-in. Here each conditional distribution ($p(Y|X,A), p(A|X)$) is estimated independently from the data, and the optimal model for a given weighting vector $\bm{\mu}$ is computed using the expression derived in Theorem \ref{theo:briercrosspareto}, Algorithm \ref{alg:pluginestimation} describes this approach.

In both the joint and plug-in approach, the APStar algorithm makes decision based on the risk vector evaluated on the validation datasets. This is done to empirically improve generalization and disallow excessive overfitting.

\begin{algorithm}[tb!]
   \caption{Joint Estimation}
   \label{alg:jointestimation}
\begin{algorithmic}
   \STATE {\bfseries Input:} Train $D^{tr}=\{(x^{i},y^{i},a^{i})\}_{i=1}^{N_{tr}}$, Validation $D^{val}=\{(x^{i},y^{i},a^{i})\}_{i=1}^{N_{val}}$, Network: $h_{\theta^o}$, Weights: $\bm{\mu}$, Loss $\ell(\cdot,\cdot)$, Learning Rate: $\text{lr}$, Decay rate = $\gamma$, Epochs: $n_E$, Batch Size: $n_B$, Maximum Patience = $n_P$
   \STATE  $h^*\leftarrow h_{\theta^o}$, epochs$ \leftarrow 0$, patience $ \leftarrow 0$
   \REPEAT
   \STATE $t \leftarrow 0$, $h_{\theta} \leftarrow h^*$
   \WHILE{$t < \frac{N_{tr}}{n_B}$} 
       \STATE $\{a_i\}_{i=1}^{n_B} \sim U[1,..,|\mathcal{A}|]$; $\{x_i,y_i|a_i\}_{i=1}^{n_B} \sim D^{tr}$
       \STATE $\bm{r}(h_{\theta}) \leftarrow \Big\{\frac{\sum_{i=1}^{n_B}\mathbf{1}[a_i = a]\ell(h_{\theta}(x_i),y_i)}{ \sum_{i=1}^{n_B}\mathbf{1}[a_i = a]}  \Big\}_{a \in \mathcal{A}}$
       \STATE $\theta \leftarrow \theta - lr \nabla_{\theta}\langle\bm{\mu},\bm{r}(h_{\theta})\rangle$ 
       \STATE $t \leftarrow t+1$ 
   \ENDWHILE
   \STATE epochs += 1 \textit{$\#$ epoch ended; evaluate on validation}
   \STATE $\bm{r}^{val}(h_{\theta}) \leftarrow \Big\{\frac{\sum_{i \in D^{val}}\mathbf{1}[a_i = a]\ell(h_{\theta}(x_i),y_i)}{ \sum_{i \in D^{val}}\mathbf{1}[a_i = a]}  \Big\}_{a \in \mathcal{A}}$
   \IF{$\langle\bm{\mu},\bm{r}^{val}(h_{\theta})\rangle \le \langle\bm{\mu},\bm{r}^{val}(h^*)\rangle$ }
   \STATE $h^* \leftarrow h_{\theta}$; patience $ \leftarrow 0$
   \ELSE
   \STATE $lr \leftarrow \gamma lr$; patience += 1
   \ENDIF
  \UNTIL{epochs$\ge n_E$ $\vee$ patience $\ge n_P$}
  \STATE \textbf{return}  $h^*,\bm{r}^{val}(h^*)$
\end{algorithmic}
\end{algorithm}

\begin{algorithm}[tb!]
   \caption{Plug-in Estimation}
   \label{alg:pluginestimation}
\begin{algorithmic}
   \STATE {\bfseries Input:}  Validation $D^{val}=\{(x^{i},y^{i},a^{i})\}_{i=1}^{N_{val}}$,  Networks estimating $p(Y|X,a)$ : $\{p^Y_{\theta_a}\}_{a = 1}^{|\mathcal{A}|}$ and $p(A|X)$: $p_{\phi}^A$, Priors $\{p_a\}_{a=1}^{|\mathcal{A}|}$, Weights: $\bm{\mu}$, Loss $\ell(\cdot,\cdot)$.
   \STATE $h^*(x) \leftarrow \frac{\sum_{a=1}^{|\mathcal{A}|}p_{\theta_a}^Y(x)p_{\phi}^A(x)\frac{{\mu}_a}{p_a}}{\sum_{a=1}^{|\mathcal{A}|}p_{\phi}^A(x)\frac{{\mu}_a}{p_a}}$
   \STATE $\bm{r}^{val}(h^*) \leftarrow \Big\{\frac{\sum_{i \in D^{val}}\mathbf{1}[a_i = a]\ell(h^*(x_i),y_i)}{ \sum_{i \in D^{val}}\mathbf{1}[a_i = a]}  \Big\}_{a \in \mathcal{A}}$
  \STATE \textbf{return}  $h^*,\bm{r}^{val}(h^*)$
\end{algorithmic}
\end{algorithm}

\subsection{Synthetic Data Experiments}
\label{sec:SyntheticAppendix}

We tested our approach on synthetic data where the observations are drawn from the following distributions:
\begin{equation}
    \begin{array}{ll}
        A \sim U[1,...,|\mathcal{A}|],\\
         X|A=a \sim N(m_a,1),\\
         Y|X=x,A=a \sim Ber\big(f_a(x)\big), \\
         f_a(x) =  \rho^{l}_a \mathbf{1}[x\le t_a] + \rho^{h}_a \mathbf{1}[x> t_a] . \\
         
    \end{array}
\end{equation}
Note that $f_a(x)$ is a piecewise-constant function. We used Brier Score as our loss function, then the Bayes-optimal classifier for each group is $f_a(x)$, while the optimal classifier for the linear weighting problem can be computed numerically using the expression derived in Theorem \ref{theo:briercrosspareto}.

For the synthetic experiments presented in this section we chose $|A| =3$, $\{m_0,m_1,m_2\}$ = $\{-0.5,0,0.5\}$, $\{t_0,t_1,t_2\}$ = $\{-0.25,0,0.25\}$, $\rho^{l}_{0,1,2} = 0.1$, $\rho^{h}_{0,1} = 0.9$, and $\rho^{h}_2 = 0.8$. Figure \ref{fig:SyntheticPareto}.a shows the conditional distributions $p(X|a)$ and $p(Y|X,a)$ obtained with these parameters, along with the minimax Pareto fair classifier.

We evaluate performance of the APStar algorithm performs when $h^{\bm{\mu}}, \bm{r}(\bm{\mu})$ are computed using the closed form formula in Theorem \ref{theo:briercrosspareto}, expectations are computed via numerical integration. This enables us to evaluate the performance of the algorithm in the infinite sample and perfect $h$ optimization regime. We recover the optimal minimax Pareto fair weights $\bm{\mu}^*$ via grid search, since a closed form solution for these weights is not available. Figure \ref{fig:SyntheticPareto}.b and Figure \ref{fig:SyntheticPareto}.c show how the risk vector $\bm{r}(\bm{\mu})$ and linear weights $\bm{\mu}$ approach the minimax optimal $\bm{r}^*,\bm{\mu}^*$ as a function of iterations of the APStar algorithm. Figure \ref{fig:AlgorithmSynthetic} shown in Section \ref{sec:OptimizationMethods} was also generated with these parameters.

Figures \ref{fig:SyntheticPareto}.d and \ref{fig:SyntheticPareto}.e show the performance of the algorithm as a function of training samples, the classifier is implemented using an NN, and is minimized using SGD. Table  \ref{table:NetworkArchitectures} in Section  \ref{sec:AppendixArchitectures} provides the architecture and optimization details. We observe that the optimal classifier is non-linear (see Figure \ref{fig:SyntheticPareto}.a) which motivates the use of NNs for estimation. Note that relative errors decrease with the number of samples. In both cases, the algorithm is able to effectively converge to the minimax Pareto fair risk.

The Pareto curve shown in figure \ref{fig:ParetoCurve2d} (Section \ref{sec:ProblemStatement}) was generated with parameters $|A|=2$, $\{m_0,m_1\}$ = $\{-0.5,0.5\}$, $\{t_0,t_1\}$ = $\{-0.25,0.25\}$, $\{\rho^{l}_{0},\rho^{l}_{1} \}= \{0.3,0.05\}$, $\{\rho^{h}_{0},\rho^{h}_{1} \}= \{0.7,0.95\}$ and Brier Score risk.

\begin{figure}[ht]
 \centering
 \includegraphics[width=1\linewidth]{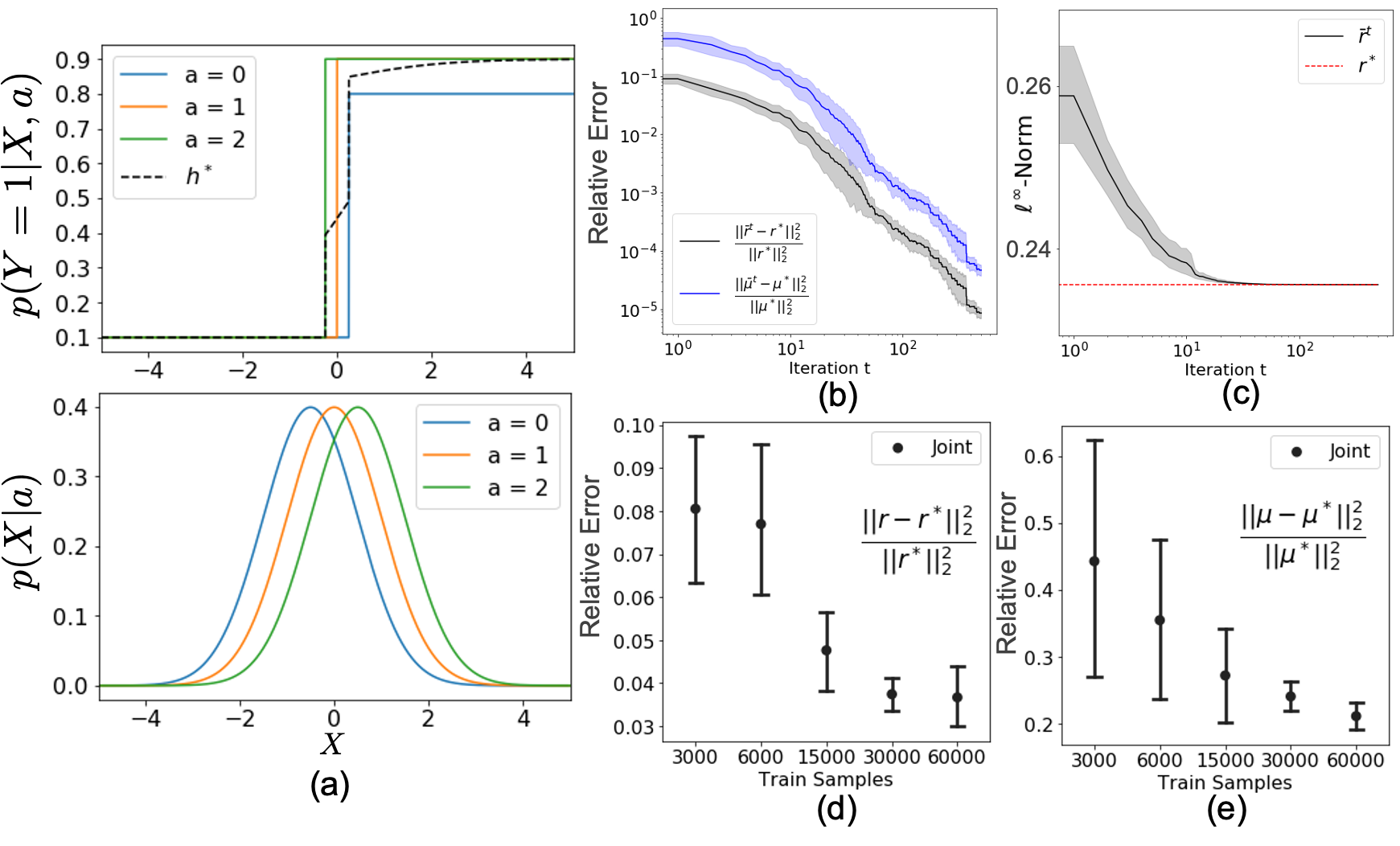}
 \caption{\footnotesize Synthetic data experiment. (a) Conditional distributions $p(X|a)$ and $p(Y=1|X,a)$, the minimax Pareto fair classifier $h^*$ is also shown. (b) The APStar algorithm converges to the optimal risk and weight vectors in a scenario where access to ground truth joint distributions is provided. (c) shows how the minimax risk is reduced in this scenario as well. (d) and (e) show minimax convergence of risk and weight vectors respectively as a function of samples when the classifier $h^{\bm{\mu}}$ is estimated with a fully connected neural network. Relative error quickly decays when more samples are acquired.}
\label{fig:SyntheticPareto}
\end{figure}

\subsection{Plug in vs Joint Estimation Analisis}
\label{sec:PlugvJointAppendix}

We empirically compare the performance of the plug-in and joint estimation approaches presented in Section \ref{sec:AnalysisPOSols}. The main advantage of plug-in estimation is that once the conditional classifiers are calculated, evaluating a new weight vector does not require any optimization; in contrast, joint estimation requires a full optimization run for each new weight vector. The main advantage of joint estimation is that it requires a single model, and makes use of all samples to train it; this can be beneficial when samples are scarce, and can be motivated by the transfer learning literature. In our problem setting, we can consider sensitive groups as different domains or tasks, and our goal is to find a model that has the best minimax performance. If the conditional distributions $p(Y|X,a)$ and $p(X|a)$ match for every $a \in \mathcal{A}$ ($Y \perp A|X$ and $X \perp A$), we are in the optimal transfer learning scenario where all groups benefit from each others' samples to estimate the target. In this situation, the joint approach would be expected to perform better on the test set than the plug-in approach. From this ideal case we can identify the following two deviations:
\begin{itemize}
    \item Case I: $Y\perp A|X$ and $d(p(X|a),p(X|a'))\ge\epsilon$, $\forall a,a' \in \mathcal{A}, \epsilon>0$;
    \item Case II: $X\perp A$ and $d(p(Y|X,a),p(Y|X,a'))\ge\epsilon$, $\forall a,a' \in \mathcal{A}, \epsilon>0$;
\end{itemize}
where $d(.,.)$ is some distance or divergence between distributions.

Case I keeps $Y\perp A|X$, but $p(X|a)$ differ across $a$ values. It is reasonable to assume that as $\epsilon$ increases, the difference in the group conditional risks on test data for joint and plug-in estimation will be low. Note that in this scenario, if the hypothesis class is unbounded, there is no trade-off between group risks as shown in Lemma \ref{lemma:limitingCases}, hence the Pareto front is the Utopia point; we expect the joint estimation approach to outperform plug-in estimation at the same number of weight updates.

Case II keeps $X \perp A$, but $p(Y|X,a)$ differ across $a$ values. In this scenario, there may be a trade-offs between group risks. We argue that in a finite sample scenario, as $\epsilon$ increases, it is not clear if joint estimation will be better or worse than plug-in since the former may be affected by negative transfer.  
% {\color{red} we are not considering the hypothesis of the nw maybe see how to mention this}

\subsubsection*{Experiments}
We empirically examine these cases by choosing $|\mathcal{A}| = 2$ and simulating synthetic data from the following distribution:
\begin{equation}
    \label{eq:synthetic1}
    \begin{array}{ll}
         X|0 \sim \mathcal{N}(0,1),& X|1\sim\mathcal{N}(m_1,1),m_1\ge0,\\
         Y|X,0 \sim Ber(f_{0}(X)),& Y|X,1\sim Ber(f_{1}(X)),\\ 
         A \sim Ber(\frac{1}{2}), \\
    \end{array}
\end{equation}
where
\begin{equation}
\label{eq:synthetic2}
    \begin{array}{l}
        f_{0}(X) =(0.6 + 0.2 \mathbf{1}[x \ge 0]) sign[sin(2 \pi X)+1]\\
        \hspace{0.6in} + 0.2 - 0.1 \mathbf{1}[x \ge 0],\\
        f_{1}(X) = (1-\lambda)f_{0}(X) + \lambda[1-round(f_{0}(X))], \\
        \hspace{0.6in}\lambda \in [0,1].
        %  ((1-\lambda)f_{0}(X) + \lambda(1-round(f_{0}(X))   \\
    \end{array}
\end{equation}
Note that parameter $m_1$ is used to change the separation between $p(X|0)$ and $p(X|1)$; we measure this using $KL$-divergence $D_{KL}(p(X|0),p(X|1)) = \frac{m^2_{1}}{2}$. 
The parameter $\lambda$ controls the difference between $p(Y|X,1)$ and $p(Y|X,0)$, we measure this using $E_{X}[D_{KL}(p(Y|X,0),p(Y|X,1))]$, which can be numerically approximated. For Case I we choose $\lambda = 0$ and $m_0 \in \{0,0.5,1,1.5,2\}$; for Case II we choose $m_0=0$ and  $\lambda \in \{0,0.2,0.5,0.8\}$. 

Figure \ref{fig:simulationplug} shows three examples of synthetic data generated with these distributions were the input variable $X$, sensitive variable $A$, and target $Y$ exhibit various dependencies. Figures \ref{fig:simulationplug}.a and \ref{fig:simulationplug}.b show examples of conditional distributions $p(X|A)$ and $p(Y=1|X,A)$ for Cases I and II. Figure \ref{fig:simulationplug}.c shows an example were both $X \not\perp A$ and $Y\not\perp A$. Additionally, we plot the MMPF classifier $h^*(X)$. When $Y \perp A|X$ we have that $h^*(X) = p(Y|X)$ and when $X \perp A$, $h^*(X)$ is a linear combination of $p(Y|X,0)$ and $p(Y|X,1)$ with the minimax weight coefficients. These are direct consequences of Theorem \ref{theo:briercrosspareto}.  In Figure \ref{fig:simulationplug}.c $h^*(X)$ shows a more complex interaction since both $p(X|a)$ and $p(Y|X,a)$ differ between $a$ values. Asymptotically, since $p(a=1|X=\infty)=1, p(a=0|X=-\infty)=1 $, we recover $h^*(\infty) = p(Y|X,1)$ and $h^*(-\infty) = p(Y|X,0)$.

\begin{figure}[ht!]
 \centering
 \includegraphics[width=1\linewidth]{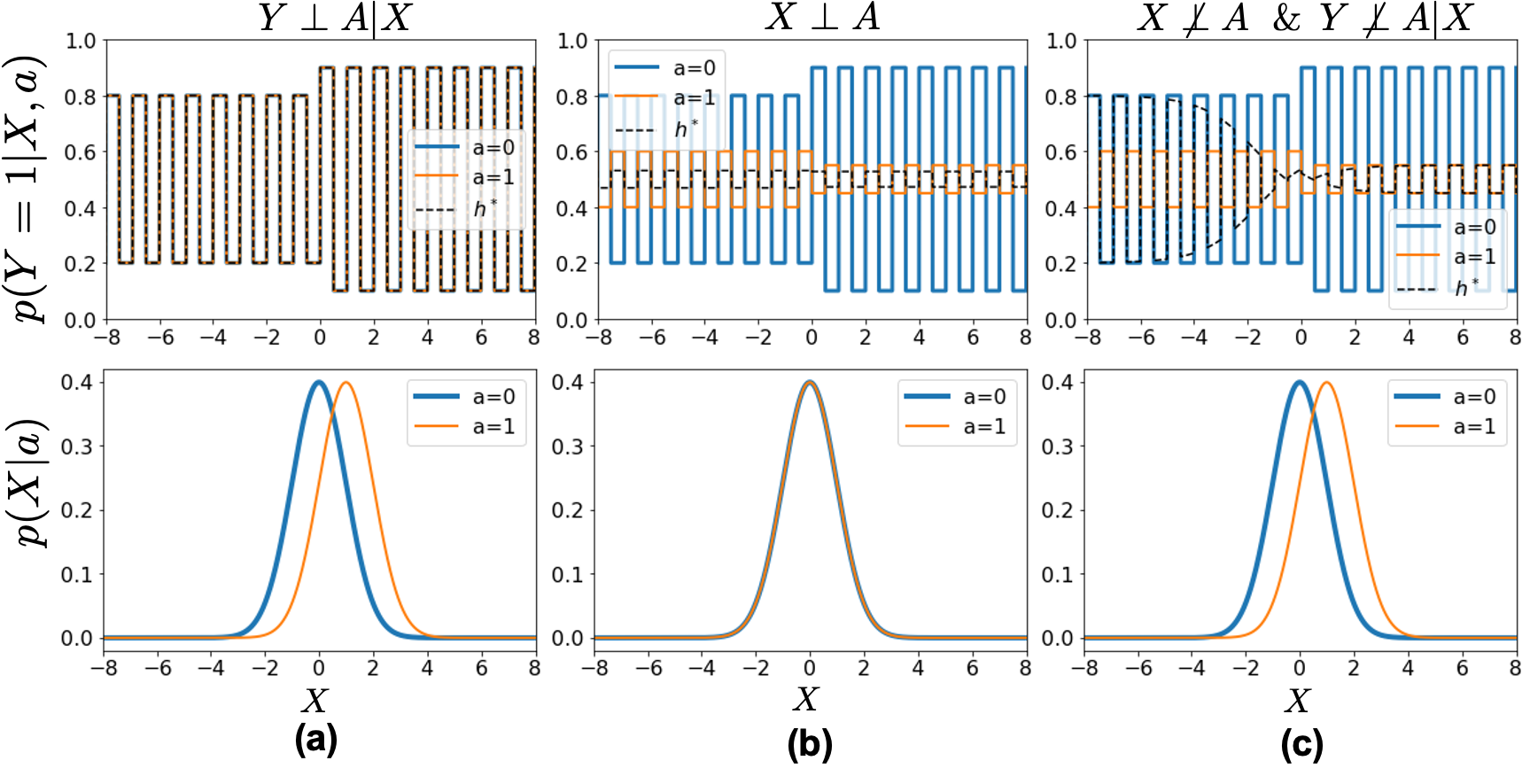}
 \caption{\footnotesize (a), (b), and (c) show conditional distributions for $p(Y=1|X,a)$ (top) and $p(X|a)$ (bottom), simulated according to Eq.\ref{eq:synthetic1} and Eq.\ref{eq:synthetic2} for three situations. (a) corresponds to case I ($Y\perp A|X$), (b) to case II ($X\perp A$) and (c) to $Y\not\perp A|X$  and $X\not\perp A$. All plots of $p(Y=1|X,a)$ include the optimal minimax Pareto fair classifier $h^*(X)$.} 
\label{fig:simulationplug}
\end{figure}

Figure \ref{fig:resultsplug} compares the performance of the plug-in and joint estimation approach. On both approaches, we limit the number of weight updates of the APStar algorithm to $15$. Figure \ref{fig:resultsplug}.a compares relative differences in the risk vector as a function of the divergence between $p(X|0)$ and $p(X|1)$ for $4k$ and $9k$ train samples and Case I ($Y\perp A)$. Figure \ref{fig:resultsplug}.b compares accuracies under the same conditions. We observe that in this scenario, the benefit of joint estimation is evident, especially with a small number of samples. Note that this gap is reduced as $D_{KL}(p(X|0)||p(X|1))$ and the number of samples increases; as expected, both methods perform better when more samples are available.

Figures \ref{fig:resultsplug}.c and \ref{fig:resultsplug}.d show the same comparisons for Case II. In this scenario there is no clear difference between plug-in and joint estimation, though the former appears to be marginally better; both methods improve performance with additional samples.

\begin{figure}[ht!]
 \centering
 \includegraphics[width=1\linewidth]{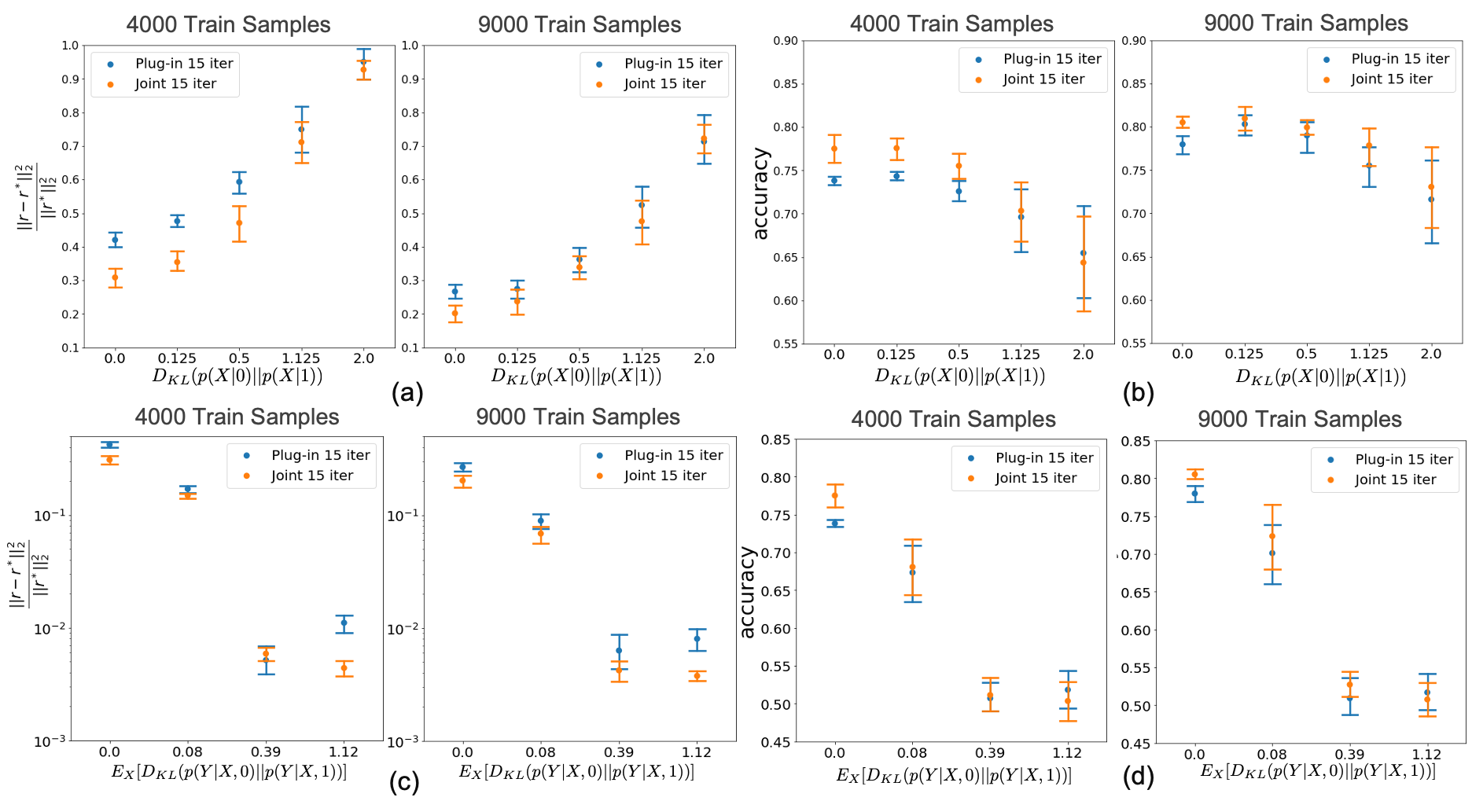}
 \caption{\footnotesize  (a) Relative error of the estimated Minimax Pareto Fair risk as a function of divergence between $p(X|0)$ and $p(X|1)$ for Case I ($Y\perp A|X$) at $4k$ and $9k$ training samples. (b) shows the corresponding accuracy comparison. (c) and (d) mirror (a) and (b) for Case II ($X\perp A$). }
\label{fig:resultsplug}
\end{figure}

\subsubsection*{Unbalanced Classification: $Y=A$}
In balanced risk minimization, we have $Y=A$. If the risk function is either Cross Entropy or Brier Score, we can apply Theorem \ref{theo:linearweightconvex} and recover 
\[ h^{\bm{\mu}}(x)  = \bigg\{\frac{\frac{\mu_a}{p(a)} p(a|x)}{\sum_{a'\in\mathcal{A}}\frac{\mu_a'}{p(a')} p(a'|x)}\bigg\}_{a=1}^{|\mathcal{A}|}.\]

This particular scenario is noteworthy because the plug-in approach only needs to estimate $p(a|x) = p(y|x)$, all Pareto optimal classifiers can be easily derived from $p(y|x)$ by simply re-weighting each component of the output probability vector. This re-weighting requires no expensive minimization procedure, and enables extensive iterations of the APStar algorithm to find the optimal weight vector $\bm{\mu}$. For these types of scenarios, it is advantageous to estimate $p(y|x)$ by using a Naive or Balanced classifier, and then derive all optimal classifiers via a simple weighting of the output vector, the optimal weights can still be found using our APStar algorithm.

\subsubsection*{Observation summary}

We summarize our observations from these experiments and discussions.
\begin{itemize}
    \item Joint estimation may benefit from transfer learning and seems to be no worse than plug-in estimation even when target conditional distributions do not match. Note that a certain amount of negative transfer may be required by the minimax Pareto Fair classifier, which may negate the advantages of plug-in estimation when target conditional distributions differ.
    
    \item Plug-in estimation requires multiple ($|\mathcal{A}|+1$) models, while joint estimation only requires one; this makes this approach impractical in some scenarios. However, the former approach allows for cheap iterations of the APStar algorithm.
    
    \item In the balanced classification problem ($Y=A$), plug-in estimation requires the same number of models as joint estimation, but is cheaper to evaluate.
\end{itemize}

% \section{Old Appendix}
\label{sec:RepurposeAppendix}
\subsection{Methods}
\label{sec:MethodsAppendix}
We compare the performance of the following methods:

\paragraph{Kamishima.} \cite{kamishima2012fairness} uses logistic regression as a baseline classifier, and requires numerical input (observations), and binary target variable. Fairness is controlled via a regularization term with a tuning parameter $\eta$ that controls the trade-off between fairness and overall accuracy. $\eta$ is optimized via grid search with $\eta \in (0,300)$ as in the original paper. We report results on the hyperparameter configuration that produces the best minimax cross-entropy across sensitive groups.
\paragraph{Feldman.} \cite{feldman2015certifying} provides a preprocessing algorithm to sanitize input observations. It modifies each input attribute so that the marginal distribution of each coordinate is independent of the sensitive attribute. The degree to which these marginal distributions match is controlled by a $\lambda$ parameter between 0 and 1. It can handle numerical and categorical observations, as well as non-binary sensitive attributes, and arbitrary target variables. Following \cite{friedler2019comparative}, we train a linear logistic regressor on top of the sanitized attributes. $\lambda$ is optimized via grid search with increments of $0.05$. We report results on the hyperparameter configuration that produces the best minimax cross-entropy across sensitive groups.
\paragraph{Zafar.} \cite{zafar2015fairness} Addresses disparate mistreatment via a convex relaxation. Specifically, in the implementation provided in \cite{friedler2019comparative}, they train a logistic regression classifier with a fairness constraint that minimizes the covariance between the sensitive attribute and the classifier decision boundary. This algorithm can handle categorical sensitive attributes and binary target variables, and numerical observations. The maximum admissible covariance is handled by a hyperparameter $c$, tuned by logarithmic grid search with values between $0.001$ and $1$. We report results on the hyperparameter configuration that produces the best minimax cross-entropy across sensitive groups.
\paragraph{Hardt.} \cite{hardt2016equality} proposes a post-processing algorithm that takes in an arbitrary predictor and the sensitive attribute as input, and produces a new, fair predictor that satisfies equalized odds. This algorithm can handle binary target variables, an arbitrary number of sensitive attributes, and any baseline predictor, but requires test-time access to sensitive attributes. it does not contain any tuning parameter. We apply this method on top of both the Naive Classifier and our Pareto Fair classifier.
\paragraph{Naive Classifier (Naive).} Standard classifier, trained to minimize an expected risk $h =  \argmin\limits_{h \in \mathcal{H}} E_{X,A,Y}[\ell(h(X),\oh{Y})]$. The baseline classifier class $\mathcal{H}$ is implemented as a neural network and varies by experiment as described in Section \ref{sec:AppendixArchitectures}, the loss function also varies by experiment and is also described in Section \ref{sec:AppendixArchitectures}. Optimization is done via stochastic gradient descent.
\paragraph{Balanced Classifier (Balanced).} Baseline classifier designed to address undersampling of minority classes, trained to mimimize a class-rebalanced expected risk $h =  \argmin\limits_{h \in \mathcal{H}} E_{A\sim U[1,\dots, |\mathcal{A}|], (X,Y)\sim P(X,Y|A)}[\ell(h(X),\oh{Y}]$. Like the Naive classifier, it is implemented as a neural network and optimized via stochastic gradient descent. The sole difference with the Naive classifier is that, during training, samples are drawn from the new input distribution $A\sim U[1,\dots, |\mathcal{A}|]; \; X,Y|A \sim P(X,Y|A)$, which is achieved by re-weighted sampling of the original training dataset.

\paragraph{Minimax Pareto Fair (MMPF, joint and plug-in).} Our proposed methodology, finds a Pareto-optimal model $h^*$ such that it has minimax Pareto risk $\textbf{r}^*$. It achieves this by searching for weighting coefficients $\mathbf{\mu}^*$ such that $\textbf{r}^*$ is the solution to the corresponding linear weighted problem (see Eq.\ref{eq:linearweightproblem}). This method alternates between minimizing a linearly-weighted loss function (Eq.\ref{eq:linearweightproblem}), and updating the weighting coefficients according to Algorithm \ref{alg:main_algo}.
%Addresses minimax risk minimization without introducing unnecessary harm to any subgroup. 
The baseline classifier class $\mathcal{H}$ is implemented as a neural network and varies by experiment as described in Section \ref{sec:AppendixArchitectures}, the loss function also varies by experiment and is also described in Section \ref{sec:AppendixArchitectures}. The minimization of the weighted loss function at every step of the APStar algorithm is implemented via stochastic gradient descent (joint estimation) or by direct application of the closed form optimal classifier derived in Theorem \ref{theo:briercrosspareto} (plug-in estimation). The latter approach requires estimating $p(a|x)$, $p(y|x,a)$ and $p(a)$.

\subsection{Evaluation Metrics}
\label{sec:Metrics}
\begin{comment}{\color{red} $p_i$ might need to recompatibilize notation...}\end{comment}
Here we describe the metrics used to evaluate the performance of all tested methods. We are given a set of test samples $\mathcal{D}_t = \{(x_i,y_i)\}_{i=1}^N$ where $x_i \in \mathcal{X}$ is a realization of our model input and $y_i \in \mathcal{Y}$ the corresponding objective. We assume that $\mathcal{Y}$ is a finite alphabet, as in a classification problem; we will represent the one-hot encoding of $y_i$ as $\oh{y_i}$. Given a trained model $h: \mathcal{X} \rightarrow [0,1]^{|\mathcal{Y}|}$ the predicted output for an input $x_i$ is a vector $\bm{h}(x_i) : \normone{\bm{h}(x_i)}=1$ (e.g., output of a softmax layer). The predicted class is $\hat{y_i}=\argmax_j h_j(x_i)$ and its associated confidence is $\hat{p_i}=\max_{j} h_j(x_i)$. Ideally $\hat{y_i}$ should be the same as $y_i$. Using these definitions, we compute the following metrics.

\textbf{Accuracy (Acc):} $\frac{1}{N}\sum_{i=1}^N \textbf{1}(y_i = \hat{y}_i)$. Fraction of correct classifications in dataset.

% \textbf{Confidence (CO):} $\frac{1}{N}\sum_{i=1}^N \hat{p}_i$. Average magnitude of the predicted class probability.

\textbf{Brier Score (BS):}  $\frac{1}{N}\sum_{i=1}^N ||\oh{y_i} - \vec{p}_i||^2$ where $\oh{y_i}$ is the one-hot representation of the categorical ground truth value $y_i$. This quantity is also known as Mean Square Error (MSE).

\textbf{Cross Entropy (CE):} $-\frac{1}{N}\sum_{i=1}^N\langle\oh{y_i},\log\bm{h}(x_i)\rangle$ also known as negative log-likelihood (NLL) of the multinomial distribution.

\textbf{Expected Calibration Error (ECE):} 
$\frac{1}{N}\sum_{m=1}^M \big|  \sum_{i \in B_m} [\textbf{1}(y_i = \hat{y}_i) - \hat{p_i}] \big|$ where $M$ is the number of bins to divide the interval $[0,1]$ such that $B_m = \{i \in \{1,..,N\} : \hat{p_i} \in (\frac{m-1}{M}, \frac{m}{M}] \}$ are the group of samples that our model assigns a confidence ($\hat{p_i}$) in the interval $(\frac{m-1}{M}, \frac{m}{M}]$. Measures how closely the predicted probabilities match the true base rates.

\textbf{Maximum Calibration Error (MCE):} $max_{m \in \{1,...,M\}}\big| \frac{1}{|B_m|} \sum_{i \in B_m} [\textbf{1}(y_i = \hat{y}_i) - \hat{p_i}] \big|$. Measures worst-case miscalibration errors.

{\color{black} These metrics are computed independently for each sensitive subgroup on the test set and reported in Section \ref{sec:SupplementaryResults}.}

% \FloatBarrier
\subsection{Details on Experiments on Real Data}
\label{sec:realDatasets}

The following is a description of the data and experiments for each of the real datasets. The information present here is summarized in Table \ref{table:Datasets}.

\begin{table}[ht]
{%\setlength{\extrarowheight}{10pt}
\caption{\footnotesize Basic characteristics of real datasets}
\label{table:Datasets}
\scriptsize
\begin{tabular}{@{}lccccc@{}}
\toprule
\makecell{Dataset}& \makecell{Objective} & \makecell{Sensitive \\ Attribute} & \makecell{Train/\\Val/\\Test} & Splits \\ \midrule
\makecell{Adult\\ \cite{Dua:2019}} & \makecell{2 categories:\\Income} & \makecell{4 categories:\\Gender (F/M),\\ Ethnicity(W/NW)} & 60/20/20 &\makecell{5} \\
\makecell{German\\ \cite{Dua:20192}} & \makecell{2 categories:\\Credit} & \makecell{2 categories:\\Gender (F/M)} & 60/20/20 &\makecell{5} \\
\makecell{MIMIC-III\\ \cite{johnson2016mimic}} & \makecell{2 categories:\\Mortality (A/D)} & \makecell{8 categories:\\Mortality(A/D), \\Age (A/S)\\ Ethnicity (W/NW)} &  60/20/20 & \makecell{5} \\
\makecell{HAM10000 \\ \cite{tschandl2018ham10000}} & \makecell{7 categories:\\Type of lesion} & \makecell{7 categories:\\Type of lesion } &  60/20/20 & \makecell{5} \\ \bottomrule
\end{tabular}}
\end{table}

% \FloatBarrier
\paragraph{MIMIC-III.} This dataset consist of clinical records collected from adult ICU patients at the Beth Israel Deaconess Medical Center (MIMIC-III dataset) \cite{johnson2016mimic}. The goal is predicting patient mortality from clinical notes. We follow the pre-processing methodology outlined in \cite{chen2018my}, where we analyze clinical notes acquired during the first 48 hours of ICU admission; discharge notes were excluded, as where ICU stays under 48 hours. Tf-idf statistics on the $10,000$ most frequent words in clinical notes are taken as input features.

We identify 8 sensitive groups as the combination of age (under/over 55 years old), ethnicity as determined by the majority group (white/nonwhite); and outcome (alive/deceased). Here we will use the term adult to refer to people under 55 years old and senior otherwise. This dataset shows large sample disparities since 56.7\% corresponds to the overall majority group (alive-senior-white) and only 0.4\% to the overall minority group (deceased-adult-nonwhite).

We used a fully connected neural network as described in Table \ref{table:NetworkArchitectures} as the baseline classifier for our proposed MMPF framework. We compare our results against both the Naive and Naive Balanced algorithms using the same neural network architecture, and use crossentropy (CE) as our training loss. We also evaluate the performance of Zafar, Feldman and Kamishima applied on the feature embeddings learned by the Naive classifier (results over the original input features failed to converge on the provided implementations).

We report the performance across a 5-fold split of the data, we used a 60/20/20 train-validation-test partition as described on Table \ref{table:Datasets} and report results over the test set. We denote the overall sensitive attribute as the combination of outcome (A:alive/D:deceased), age (A:adult/S:senior) and ethnicity (W:white, NW:nonwhite) with shorthand notation of the form D/A/W to denote, for example, deceased, white adult. We also note that results on Zafar, Kamishima and Hardt were done over only the sensitive attributes Adult/Senior and White/Nonwhite, outcome was not considered as a sensitive attribute for both methods. This was done because Hardt requires test-time access to sensitive attributes, which would not be possible for the outcome variable, and Zafar attempts to decorrelate sensitive attributes and classification decision boundaries, which is counterproductive when the sensitive attribute includes the correct decision outcome.

\paragraph{HAM10000.} This dataset contains over $10,000$ dermatoscopic images of skin lesions over a diverse population \cite{tschandl2018ham10000}. Lesions are classified in 7 diagnostic categories, and the goal is to learn a model capable of identifying the category from the lesion image. The dataset is highly unbalanced since 67\% of the samples correspond to a melanocytic nevi lesion (nv), and 1.1\% to dermatofibroma (df). 

Here we chose to use the diagnosis class as both the target and sensitive variable, casting balanced risk minimization as a particular use-case for the proposed Pareto fairness framework. 

We load a pre-trained DenseNet121 network \cite{huang2017densely} and train it to classify skin lesions from dermatoscopic images using our Pareto fairness framework. We compared against the Naive and the Balanced training setup. Note that in the Balanced approach we use a batch sampler where images from each class have the same probability, this can be seen as a naive oversampling technique. Table \ref{table:NetworkArchitectures} shows implementation details.

We report the performance across 5-fold split of the data, we used a  60/20/20 train-validation-test partition, and  report results over the test set. For each group we follow the original notation: Actinic keratoses and intraepithelial carcinoma / Bowen's disease (akiec), basal cell carcinoma (bcc), benign keratosis-like lesions (bkl), dermatofibroma (df), melanoma (mel), melanocytic nevi (nv) and vascular lesions (vasc).

\paragraph{Adult.} The Adult UCI dataset \cite{Dua:2019} is based on the 1994 U.S. Census and contains data on $32,561$ adults. The data contains 105 binarized observations representing education status, age, ethnicity, gender, and marital status, and a target variable indicating income status (binary attribute representing over or under $\$50,000$). Following \cite{friedler2019comparative}, we take ethnicity and gender as our target sensitive attributes, defining four subgroups (White/Other and Male/Female). We also present results considering just the gender as sensitive attribute (Male/Female). To compare our MMPF framework evenly against the other methods, we limit our hypothesis class to linear logistic regression (MMPF LR). Additionally, we also show results for a Neural Network model (MMPF); a model class that satisfies the convexity property.

\paragraph{German.} The German credit dataset \cite{Dua:2019} contains 20 observations collected across $1000$ individuals, and a binary target variable assessing the individual's credit score as good or bad. We consider gender (Male/Female) as the sensitive attribute, which is not included in the data but can be inferred. As in the Adult dataset, we limit our hypothesis class to linear logistic regression to compare evenly across methodologies.

% \FloatBarrier

\subsection{Neural Network Architectures and Parameters}
\label{sec:AppendixArchitectures}

\begin{table}[h!]
{%\setlength{\extrarowheight}{10pt}
\scriptsize
\caption{\footnotesize Summary of network architectures and losses. All networks have a softmax output layer. ADAM was used as the training optimizer, with the specified learning rates (lr) and batch size $n_B$. Logistic Regression was trained using the implementation provided in Sklearn \cite{scikit-learn}. }

\label{table:NetworkArchitectures}
\begin{tabular}{@{}lllll@{}}
\toprule
\makecell{Dataset} & \makecell{Network\\ Body} & Gate & \makecell{Loss\\ type} & \makecell{Parameters\\ training} \\ \midrule
\makecell{Synthetic} & \makecell{Dense ResNet \\ (512x512)x2} & ELU & BS & \makecell{$n_B$=512\\ lr=1e-3} \\
\makecell{Adult \\ German} & \makecell{Logistic \\ Regression (LR)} & - & CE &\makecell{-} \\
\makecell{Adult} & \makecell{Dense ResNet \\ (512x512)x2} & - & CE &\makecell{$n_B$=32\\ lr=5e-4} \\
\makecell{MIMIC-III} & \makecell{FullyConnected \\ 2048x2048} & ELU & CE/BS & \makecell{$n_B$=512\\ lr=1e-6/5e-6} \\
\makecell{HAM10000} & \makecell{DenseNet121 \\ \cite{huang2017densely}} & ReLU &  BS & \makecell{$n_B$=32\\ lr=5e-6} \\ \bottomrule
\end{tabular}}
\end{table}

Table \ref{table:NetworkArchitectures} summarizes network architectures and loss functions for all experiments in this paper (Section \ref{sec:ExperimentsAndResults} and supplementary material). Note that all networks have a standard dense softmax as their final layer. The training optimizer is ADAM \cite{kingma2014adam}, loss functions were either crossentropy (CE) or Brier Score (BS), also known as categorical mean square error (MSE).

For joint estimation,  the weights $\bm{\mu}$ were initialized uniformly, we selected maximum patience $n_P= 20$, a decay rate $\gamma = 0.25$ and a maximum of $500$ epochs. All experiments terminated from lack of generalization improvement rather than maximum number of epochs. For the APStar algorithm we picked $\alpha = 0.5$, we allowed a maximum number of 20 iterations. The regularization parameter in the Sklearn implementation of logistic regression was set to $C=1e6$ following \cite{friedler2019comparative}.

In the plug-in approach, each conditional probability was estimated with the architectures and parameters specified in Table \ref{table:NetworkArchitectures} maximum patience $n_P= 20$, a decay rate $\gamma = 0.25$ and a maximum of $500$ epochs. Here we allowed the APStar algorithm to have a maximum of $500$ iterations since each iteration does not require optimization, only risk evaluation.

% \FloatBarrier
\subsection{Supplementary Results}
\label{sec:SupplementaryResults}
Here we present expanded results on all datasets. Accuracy (Acc), Brier Score (BS), Cross-Entropy (CE), Expected Calibration Error (ECE) and Maximum Calibration Error (MCE) are displayed per sensitive group. Mean and standard deviations are reported when avaialble across 5 splits. Disparity between best and worst groups is computed per split, and the mean and standard deviation of this value is reported. Note that this way of computing disparity may lead to seemingly large disparity values, since the worst and best performing group per split may differ.

%%%%%%%%%%%%%%%%%%%%%%%%%%%%%%%%%%%%%%%%%%%%%%%%%%%%%%%%%%%%%%%%%%%%%%

\begin{table*}
\caption{HAM10000 dataset. We underline the worst group metric per method, and bold the one with the best minimax performance. Smallest disparity is also bolded.}
\label{table:HAMOverallTablefull}
\centering
\small

Acc comparison\\
\begin{tabular}{lllllllll}
\toprule
Group  &  akiec &  bcc &  bkl &  df &  mel &  nv &  vasc  & Disparity \\
\midrule
Ratio  &  3.3\% &  5.1\% &  11\% &  1.1\% &  11.1\% &  67\% &  1.4\% & 65.9\%  \\
Naive  &  39.1$\pm$5.7 \%&  58.4$\pm$7.0 \%&  51.5$\pm$4.3 \%&  \underline{2.6$\pm$3.5\%} &  43.7$\pm$4.3 \%&  93.7$\pm$1.1 \%&  66.9$\pm$6.6 \%&  91.1$\pm$3.9\% \\
Balanced &67.9$\pm$6.9 \% &  73.4$\pm$4.2 \% &  58.2$\pm$9.9 \% &  75.7$\pm$7.6 \% &  \underline{58.1$\pm$5.8 \%} &  73.5$\pm$1.6 \% &  83.9$\pm$8.3 \% &  32.5$\pm$4.6\%  \\
MMPF P & 64.2$\pm$5.1 \% &  66.2$\pm$5.8 \% &  \textbf{\underline{63.9$\pm$6.8} }\% &  69.6$\pm$13.5\%  &  67.5$\pm$3.9\%  &  64.1$\pm$1.0\%  &  71.2$\pm$1.0\%  &  \textbf{19.8$\pm$6.6\%}  \\
\bottomrule
\end{tabular}

BS comparison\\
\begin{tabular}{lllllllll}
\toprule
Group  &  akiec &  bcc &  bkl &  df &  mel &  nv &  vasc  & Disparity \\
\midrule
Ratio  &  3.3\% &  5.1\% &  11\% &  1.1\% &  11.1\% &  67\% &  1.4\% & 65.9\%  \\
Naive  &  0.816$\pm$0.082 &  0.586$\pm$0.083 &  0.675$\pm$0.068 &  \underline{1.384$\pm$0.043} &  0.808$\pm$0.043 &  0.093$\pm$0.015 &  0.48$\pm$0.102 &  1.291$\pm$0.042   \\
Balanced &  0.459$\pm$0.089 &  0.37$\pm$0.048 &  \underline{0.579$\pm$0.1} &  0.392$\pm$0.106 &  0.565$\pm$0.069 &  0.361$\pm$0.02 &  0.211$\pm$0.106 &  0.45$\pm$0.066 \\
MMPF P &   0.494$\pm$0.078 &  0.463$\pm$0.065 &  0.49$\pm$0.074 &  0.447$\pm$0.131 &  0.447$\pm$0.042 &  \textbf{\underline{0.5$\pm$0.015}} &  0.38$\pm$0.127 &  \textbf{0.228$\pm$0.058}  \\
\bottomrule
\end{tabular}

CE comparison\\
\begin{tabular}{lllllllll}
\toprule
Group  &  akiec &  bcc &  bkl &  df &  mel &  nv &  vasc  & Disparity \\
\midrule

Ratio  &  3.3\% &  5.1\% &  11\% &  1.1\% &  11.1\% &  67\% &  1.4\% & 65.9\%  \\
 Naive  &  1.924$\pm$0.321 &  1.19$\pm$0.188 &  1.405$\pm$0.142 &  \underline{4.178$\pm$0.209} &  1.589$\pm$0.104 &  0.195$\pm$0.029 &  1.069$\pm$0.282 &  3.983$\pm$0.19  \\
Balanced & 0.944$\pm$0.172 &  0.715$\pm$0.096 &  \underline{1.199$\pm$0.211} &  0.898$\pm$0.295 &  1.122$\pm$0.128 &  0.787$\pm$0.038 &  0.456$\pm$0.281 &  0.95$\pm$0.218  \\
MMPF P & 1.011$\pm$0.177 &  0.913$\pm$0.158 &  0.949$\pm$0.135 &  1.047$\pm$0.401 &  0.85$\pm$0.068 &  \textbf{\underline{1.128$\pm$0.033}} &  0.804$\pm$0.242 &  \textbf{0.615$\pm$0.156} \\
\bottomrule
\end{tabular}

ECE comparison\\
\begin{tabular}{lllllllll}
\toprule
Group  &  akiec &  bcc &  bkl &  df &  mel &  nv &  vasc  & Disparity \\
\midrule
Ratio  &  3.3\% &  5.1\% &  11\% &  1.1\% &  11.1\% &  67\% &  1.4\% & 65.9\%  \\
 Naive  & 0.211$\pm$0.019 &  0.125$\pm$0.026 &  0.189$\pm$0.032 &  \underline{0.598$\pm$0.047} &  0.287$\pm$0.03 &  0.03$\pm$0.001 &  0.156$\pm$0.049 &  0.568$\pm$0.047 \\
Balanced & 0.139$\pm$0.046 &  0.078$\pm$0.013 &  0.135$\pm$0.063 & \underline{\textbf{0.183$\pm$0.035}} &  0.119$\pm$0.035 &  0.066$\pm$0.014 &  0.119$\pm$0.028 &  \textbf{0.143$\pm$0.023}  \\
MMPF P &  0.133$\pm$0.028 &  0.113$\pm$0.029 &  0.107$\pm$0.036 &  \underline{0.213$\pm$0.049} &  0.082$\pm$0.023 &  0.135$\pm$0.01 &  0.133$\pm$0.035 &  0.151$\pm$0.034 \\

\bottomrule
\end{tabular}

MCE comparison\\
\begin{tabular}{lllllllll}
\toprule
Group  &  akiec &  bcc &  bkl &  df &  mel &  nv &  vasc  & Disparity \\
\midrule
Ratio  &  3.3\% &  5.1\% &  11\% &  1.1\% &  11.1\% &  67\% &  1.4\% & 65.9\%  \\
 Naive  & 0.616$\pm$0.262 &  0.353$\pm$0.124 &  0.534$\pm$0.151 &  \underline{0.962$\pm$0.018} &  0.555$\pm$0.091 &  0.315$\pm$0.219 &  0.521$\pm$0.156 &  0.744$\pm$0.082  \\
Balanced &  0.505$\pm$0.157 &  0.383$\pm$0.177 &  0.49$\pm$0.22 &  0.548$\pm$0.145 &  0.272$\pm$0.024 &  0.227$\pm$0.06 &  \underline{0.636$\pm$0.126} &  0.474$\pm$0.089 \\
MMPF P & 0.369$\pm$0.134 &  0.362$\pm$0.156 &  0.362$\pm$0.172 &  \textbf{\underline{0.59$\pm$0.098}} &  0.266$\pm$0.032 &  0.285$\pm$0.037 &  0.556$\pm$0.107 &  \textbf{0.444$\pm$0.058}  \\

\bottomrule
\end{tabular}

% \label{table:HAMperformance}
\end{table*}

%%%%%%%%%%%%%%%%%%%%%%%%%%%%%%%%%%%%%%%%%%%%%%%%%%%%%%%%%%%%%%%%%%%%%%%%%%%%%%

\begin{table*}
\caption{MIMIC dataset. We underline the worst group metric per method, and bold the one with the best minimax performance. Smallest disparity is also bolded. Standard deviations are computed across 5 splits.}
\label{table:MIMICOverallTablefull}
\centering
\scriptsize

Acc comparison\\
\begin{tabular}{llllllllll}

\toprule
Group &  A/A/NW &  A/A/W &  A/S/NW &  A/S/W & D/A/NW &  D/A/W &  D/S/NW &  D/S/W & Disparity \\
\midrule
Ratio  & 5.7\% &  13.3\% &  12.9\% &  56.7\% &  0.4\% & 0.9\% & 1.8\% & 8.3\% &  56.3\% \\

Naive CE &  98.7$\pm$0.8\% &  98.8$\pm$0.5\% &  97.6$\pm$0.5\% &  98.0$\pm$0.3\% &  26.0$\pm$10.0\% &  34.5$\pm$3.9\% &  \underline{20.6$\pm$2.1\%} &  22.6$\pm$2.6\% &  79.4$\pm$1.7\% \\
Naive BS  &  99.0$\pm$0.4\% &  98.8$\pm$0.4\% &  97.8$\pm$0.5\% &  98.1$\pm$0.3\% & 26.7$\pm$9.3\% &  34.6$\pm$4.4\% &  \underline{19.0$\pm$2.0\%} &  21.2$\pm$1.9\% &  80.5$\pm$1.3\% \\
Balanced CE  &  85.8$\pm$1.9\% &  86.2$\pm$1.1\% &  76.4$\pm$1.8\% &  79.2$\pm$0.4\% & 76.1$\pm$8.5\% &  80.1$\pm$3.3\% & \underline{66.9$\pm$2.4\%} &  67.3$\pm$2.1\% &  22.4$\pm$2.8\% \\
Balanced BS &  87.9$\pm$1.2\% &  87.5$\pm$1.4\% &  77.5$\pm$2.1\% &  79.3$\pm$0.5\% & 74.4$\pm$7.1\% &  78.8$\pm$3.4\% &  \underline{66.8$\pm$2.2\%} &  68.0$\pm$2.1\% &  22.6$\pm$2.3\% \\
Zafar &  91.9$\pm$1.5\% &  93.9$\pm$1.4\% &  91.6$\pm$0.8\% &  93.2$\pm$0.5\% & 49.2$\pm$9.4\% &  41.7$\pm$9.2\% &  33.2$\pm$4.1\% &  \underline{32.0$\pm$2.4\%} &  62.9$\pm$3.6\% \\
Feldman &  97.4$\pm$1.9\% &  97.7$\pm$1.9\% &  95.2$\pm$3.4\% &  95.6$\pm$3.3\% &  33.0$\pm$16.4\% &  35.4$\pm$8.0\% &  \underline{28.7$\pm$2.4\%} &  31.9$\pm$3.6\% &  72.1$\pm$5.5\% \\
Kamishima &  98.5$\pm$1.2\% &  98.4$\pm$0.7\% &  96.8$\pm$0.9\% &  96.7$\pm$0.7\% & 26.7$\pm$9.8\% &  37.5$\pm$5.0\% &  \underline{25.1$\pm$5.1\%} &  29.6$\pm$2.8\% &  76.4$\pm$5.2\% \\
MMPF CE  &  83.3$\pm$2.1\% &  83.1$\pm$1.0\% &  \underline{71.3$\pm$1.4\%} &  74.0$\pm$1.2\% & 81.6$\pm$6.5\% &  83.2$\pm$3.9\% &  73.2$\pm$3.0\% &  74.7$\pm$2.9\% &  \textbf{16.2$\pm$2.8\%} \\

MMPF BS &  86.0$\pm$0.9\% &  84.8$\pm$1.3\% &  \textbf{\underline{72.6$\pm$1.7\%}} &  74.1$\pm$0.5\% & 78.9$\pm$9.5\% &  81.7$\pm$3.8\% &  73.1$\pm$2.0\% &  75.2$\pm$2.1\% &  17.1$\pm$3.5\% \\

MMPF CE P  &  82.4$\pm$3.2\% &  81.8$\pm$1.2\% &  \underline{70.7$\pm$1.7\%} &  74.4$\pm$0.8\% &  80.6$\pm$7.7\% &  84.5$\pm$3.0\% &  73.0$\pm$3.5\% &  72.4$\pm$3.1\% &  17.4$\pm$2.5\% \\
MMPF BS P &  81.8$\pm$5.6\% &  81.8$\pm$2.6\% &  \underline{70.7$\pm$2.1\%} &  74.9$\pm$1.0\% &  78.0$\pm$8.6\% &  81.9$\pm$2.5\% &  72.6$\pm$2.8\% &  72.8$\pm$3.5\% &  17.8$\pm$3.8\% \\
\midrule
Naive BS+H &  59.8$\pm$1.8\% &  59.9$\pm$1.9\% &  59.7$\pm$2.1\% &  59.8$\pm$2.1\% & 53.2$\pm$6.4\% &  51.2$\pm$3.2\% &  \underline{50.3$\pm$2.0\%} &  50.5$\pm$0.9\% &  12.3$\pm$3.8\% \\
Balanced BS+H  &  76.7$\pm$1.3\% &  77.1$\pm$1.1\% &  76.9$\pm$2.4\% &  76.7$\pm$1.4\% &  67.3$\pm$10.3\% &  66.7$\pm$2.9\% &  \underline{65.6$\pm$2.8\%} &  66.6$\pm$3.2\% &  19.1$\pm$1.8\% \\
Zafar+H &  64.0$\pm$2.2\% &  64.2$\pm$1.4\% &  64.3$\pm$2.3\% &  64.2$\pm$1.9\% & 53.4$\pm$9.2\% &  52.9$\pm$4.5\% &  \underline{51.5$\pm$2.8\%} &  52.1$\pm$1.6\% &  17.8$\pm$3.1\%  \\
Feldman+H &  61.1$\pm$2.5\% &  61.2$\pm$2.4\% &  61.2$\pm$2.4\% &  61.3$\pm$2.5\% &  55.0$\pm$11.4\% &  \underline{49.7$\pm$1.9\%} &  52.1$\pm$2.9\% &  50.8$\pm$3.1\% & 16.9$\pm$7.4\%\\
Kamishima+H &  60.0$\pm$0.8\% &  60.2$\pm$1.5\% &  60.2$\pm$1.3\% &  60.2$\pm$1.4\% & 53.9$\pm$6.7\% &  52.4$\pm$1.7\% &  \underline{51.6$\pm$2.2\%} &  52.3$\pm$1.5\% &  11.9$\pm$1.9\%\\
MMPF BS+H &  72.6$\pm$2.3\% &  72.7$\pm$1.1\% &  72.1$\pm$1.7\% &  72.5$\pm$1.3\% & 72.1$\pm$8.6\% &  72.1$\pm$2.8\% &  \textbf{72.0$\pm$3.7\%} &  72.2$\pm$2.6\% &  \textbf{11.4$\pm$3.5\%} \\
\bottomrule
\end{tabular}

BS comparison\\
\begin{tabular}{llllllllll}
\toprule
Group &  A/A/NW &  A/A/W &  A/S/NW &  A/S/W &  D/A/NW &  D/A/W &  D/S/NW &  D/S/W & Disparity \\
\midrule
Ratio  & 5.7\% &  13.3\% &  12.9\% &  56.7\% & 0.4\% & 0.9\% & 1.8\% & 8.3\% &  56.3\% \\
Naive CE &  0.029$\pm$0.004 &  0.029$\pm$0.005 &  0.054$\pm$0.007 &  0.048$\pm$0.003 &  0.995$\pm$0.141 &  0.896$\pm$0.072 &  \underline{1.086$\pm$0.014} &  1.053$\pm$0.022 &  1.064$\pm$0.019 \\
Naive BS  &  0.034$\pm$0.004 &  0.035$\pm$0.005 &  0.059$\pm$0.007 &  0.053$\pm$0.003 &  0.961$\pm$0.134 &  0.836$\pm$0.068 &  \underline{1.051$\pm$0.013} &  1.022$\pm$0.016 &  1.025$\pm$0.021 \\
Balanced CE  &  0.2$\pm$0.025 &  0.198$\pm$0.014 &  0.313$\pm$0.017 &  0.284$\pm$0.005 &  0.369$\pm$0.067 &  0.292$\pm$0.043 &  0.417$\pm$0.018 &  \underline{0.421$\pm$0.026} &  0.246$\pm$0.039 \\
Balanced BS & 0.19$\pm$0.018 &  0.188$\pm$0.013 &  0.307$\pm$0.019 &  0.283$\pm$0.004 &  0.386$\pm$0.062 & 0.31$\pm$0.039 &  \underline{0.418$\pm$0.014} &  0.413$\pm$0.026 & 0.251$\pm$0.04 \\
Zafar & 0.157$\pm$0.03 &  0.118$\pm$0.028 &  0.162$\pm$0.015 &  0.131$\pm$0.009 &  1.009$\pm$0.203 &  1.143$\pm$0.177 &  1.323$\pm$0.082 & \underline{1.344$\pm$0.05} & 1.25$\pm$0.072 \\
Feldman &  0.038$\pm$0.026 & 0.035$\pm$0.03 &  0.072$\pm$0.051 &  0.068$\pm$0.052 &  1.172$\pm$0.303 &  1.116$\pm$0.104 &  1.258$\pm$0.077 &  1.213$\pm$0.067 &  \underline{1.288$\pm$0.084} \\
Kamishima &  0.027$\pm$0.011 &  0.026$\pm$0.008 &  0.063$\pm$0.011 & 0.06$\pm$0.008 &  1.053$\pm$0.163 &  0.936$\pm$0.095 &  1.062$\pm$0.041 &  0.993$\pm$0.026 &  \underline{1.108$\pm$0.078} \\
MMPF CE  &  0.237$\pm$0.031 &  0.236$\pm$0.019 &  \underline{0.372$\pm$0.014} &  0.342$\pm$0.011 &  0.303$\pm$0.063 &  0.233$\pm$0.037 &  0.346$\pm$0.016 & 0.344$\pm$0.03 &  0.167$\pm$0.028 \\
MMPF BS &  0.212$\pm$0.015 &  0.217$\pm$0.017 &  \textbf{\underline{0.354$\pm$0.016}} &  0.337$\pm$0.006 &  0.331$\pm$0.057 &  0.254$\pm$0.035 &  0.352$\pm$0.017 &  0.338$\pm$0.028 & 0.17$\pm$0.027 \\
MMPF CE P  &  0.274$\pm$0.033 &  0.269$\pm$0.017 &  \underline{0.373$\pm$0.013} &  0.343$\pm$0.011 &  0.325$\pm$0.061 &  0.271$\pm$0.025 &  0.363$\pm$0.012 & 0.37$\pm$0.027 & \textbf{0.144$\pm$0.017} \\
MMPF BS P & 0.272$\pm$0.06 & 0.262$\pm$0.03 &  \underline{0.368$\pm$0.014} & 0.332$\pm$0.01 &  0.323$\pm$0.079 &  0.256$\pm$0.032 &  0.357$\pm$0.022 &  0.365$\pm$0.032 &  0.168$\pm$0.042 \\

\bottomrule
\end{tabular}

CE comparison\\
\begin{tabular}{llllllllll}
\toprule
Group &  A/A/NW &  A/A/W &  A/S/NW &  A/S/W &  D/A/NW &  D/A/W &  D/S/NW &  D/S/W & Disparity \\
\midrule
Ratio  & 5.7\% &  13.3\% &  12.9\% &  56.7\% & 0.4\% & 0.9\% & 1.8\% & 8.3\% &  56.3\% \\
Naive CE &  0.073$\pm$0.003 &  0.072$\pm$0.005 &  0.122$\pm$0.009 &  0.112$\pm$0.003 &  1.536$\pm$0.175 & 1.314$\pm$0.11 &  \underline{1.561$\pm$0.036} & 1.53$\pm$0.038 &  1.549$\pm$0.027 \\
Naive BS  &  0.093$\pm$0.006 &  0.093$\pm$0.009 & 0.139$\pm$0.01 &  0.129$\pm$0.004 & 1.407$\pm$0.15 &  1.179$\pm$0.103 &  \underline{1.458$\pm$0.041} &  1.425$\pm$0.029 &  1.394$\pm$0.049 \\
Balanced CE  &  0.332$\pm$0.036 & 0.324$\pm$0.02 &  0.473$\pm$0.021 &  0.435$\pm$0.008 &  0.575$\pm$0.092 &  0.458$\pm$0.051 &  0.605$\pm$0.026 &  \underline{0.613$\pm$0.037} &  0.329$\pm$0.052 \\
Balanced BS &  0.322$\pm$0.026 &  0.317$\pm$0.017 &  0.467$\pm$0.024 &  0.437$\pm$0.007 &  0.593$\pm$0.079 &  0.481$\pm$0.044 &  \underline{0.604$\pm$0.022} &  0.601$\pm$0.034 &  0.332$\pm$0.043 \\
Zafar & 1.71$\pm$0.468 &  1.199$\pm$0.298 &  1.834$\pm$0.261 &  1.363$\pm$0.127 &  14.611$\pm$4.841 &  17.197$\pm$4.31 &  21.829$\pm$2.593 &  \underline{22.653$\pm$1.918} &  21.729$\pm$2.164 \\
Feldman &  0.072$\pm$0.047 &  0.064$\pm$0.054 &  0.138$\pm$0.1 &  0.127$\pm$0.095 & 3.924$\pm$0.962 &  3.475$\pm$0.612 & 3.802$\pm$0.842 &  3.854$\pm$0.74 &  \underline{4.211$\pm$0.75} \\
Kamishima &  0.057$\pm$0.015 &  0.054$\pm$0.011 &  0.128$\pm$0.011 &  0.121$\pm$0.011 & \underline{1.838$\pm$0.176} &  1.561$\pm$0.167 & 1.585$\pm$0.077 & 1.521$\pm$0.049 & 1.808$\pm$0.155 \\
MMPF CE  &  0.378$\pm$0.041 &  0.373$\pm$0.026 &  \underline{0.547$\pm$0.019} &  0.508$\pm$0.012 &  0.501$\pm$0.101 &  0.377$\pm$0.045 &  0.517$\pm$0.023 & 0.517$\pm$0.04 &  0.232$\pm$0.037 \\
MMPF BS &  0.349$\pm$0.022 &  0.352$\pm$0.023 &  0.524$\pm$0.021 &  0.503$\pm$0.008 & \underline{0.532$\pm$0.07} & 0.407$\pm$0.04 &  0.525$\pm$0.025 &  0.509$\pm$0.037 &  0.231$\pm$0.036 \\
MMPF CE P  &  0.438$\pm$0.039 & 0.430$\pm$0.024 & 0.550$\pm$0.014 &  0.516$\pm$0.015 &  0.505$\pm$0.078 &  0.438$\pm$0.029 &  0.542$\pm$0.016 & \underline{0.551$\pm$0.03} & \textbf{0.17$\pm$0.023} \\
MMPF BS P &  0.431$\pm$0.072 &  0.416$\pm$0.038 &  0.541$\pm$0.015 &  0.498$\pm$0.015 & 0.51$\pm$0.104 &  0.412$\pm$0.042 &  0.534$\pm$0.026 &  \textbf{\underline{0.544$\pm$0.038}} &  0.212$\pm$0.053 \\

\bottomrule
\end{tabular}

ECE comparison\\
\begin{tabular}{llllllllll}
\toprule
Group &  A/A/NW &  A/A/W &  A/S/NW &  A/S/W &  D/A/NW &  D/A/W &  D/S/NW &  D/S/W & Disparity \\
\midrule
Ratio  & 5.7\% &  13.3\% &  12.9\% &  56.7\% & 0.4\% & 0.9\% & 1.8\% & 8.3\% &  56.3\% \\
Naive CE &  0.049$\pm$0.006 &  0.046$\pm$0.003 &  0.071$\pm$0.007 &  0.068$\pm$0.004 &  0.519$\pm$0.105 &  0.434$\pm$0.065 &  \underline{0.574$\pm$0.014} &  0.542$\pm$0.025 &  0.537$\pm$0.011 \\
Naive BS  &  0.068$\pm$0.003 &  0.065$\pm$0.006 &  0.088$\pm$0.004 &  0.087$\pm$0.004 &  0.502$\pm$0.106 &  0.386$\pm$0.051 &  \underline{0.576$\pm$0.014} & 0.54$\pm$0.018 &  0.519$\pm$0.005 \\
Balanced CE  &  0.066$\pm$0.015 &  0.049$\pm$0.011 &  0.039$\pm$0.013 &  0.029$\pm$0.008 &  \underline{0.173$\pm$0.048} & 0.1$\pm$0.03 &  0.073$\pm$0.029 &  0.057$\pm$0.022 &  0.148$\pm$0.042 \\
Balanced BS &  0.083$\pm$0.004 &  0.073$\pm$0.011 &  0.042$\pm$0.011 &  0.043$\pm$0.009 &  \underline{0.151$\pm$0.043} &  0.079$\pm$0.041 &  0.069$\pm$0.023 &  0.044$\pm$0.012 &  \textbf{0.126$\pm$0.044} \\
Zafar &  0.079$\pm$0.015 &  0.059$\pm$0.014 &  0.081$\pm$0.008 &  0.065$\pm$0.004 &  0.511$\pm$0.099 &  0.576$\pm$0.089 &  0.666$\pm$0.039 &  \underline{0.671$\pm$0.025} &  0.627$\pm$0.037 \\
Feldman &  0.015$\pm$0.009 &  0.012$\pm$0.011 &  0.023$\pm$0.016 &  0.019$\pm$0.015 &  0.624$\pm$0.152 &  0.565$\pm$0.071 &  0.628$\pm$0.044 &  0.587$\pm$0.034 & \underline{0.668$\pm$0.06} \\
Kamishima &  0.029$\pm$0.005 &  0.021$\pm$0.004 &  0.055$\pm$0.011 & 0.05$\pm$0.007 &  0.543$\pm$0.112 & 0.44$\pm$0.069 &  0.547$\pm$0.037 &  0.486$\pm$0.021 & \underline{0.568$\pm$0.06} \\
MMPF CE  &  0.041$\pm$0.015 &  0.029$\pm$0.015 &  0.055$\pm$0.018 & 0.03$\pm$0.008 &  \underline{0.152$\pm$0.057} & 0.08$\pm$0.021 & 0.076$\pm$0.03 &  0.039$\pm$0.013 & 0.143$\pm$0.04 \\
MMPF BS &  0.064$\pm$0.009 &  0.047$\pm$0.011 &  0.037$\pm$0.011 &  0.021$\pm$0.006 &  \underline{0.178$\pm$0.056} &  0.091$\pm$0.034 &  0.073$\pm$0.016 &  0.038$\pm$0.013 &  0.167$\pm$0.044 \\
MMPF CE P  &  0.105$\pm$0.024 &  0.086$\pm$0.016 &  0.042$\pm$0.016 &  0.049$\pm$0.015 &  \underline{0.177$\pm$0.063} &  0.152$\pm$0.016 &  0.102$\pm$0.018 &  0.059$\pm$0.027 &  0.159$\pm$0.044 \\
MMPF BS P &  0.077$\pm$0.033 & 0.052$\pm$0.02 &  0.041$\pm$0.021 &  0.025$\pm$0.007 &  \textbf{\underline{0.144$\pm$0.037}} &  0.122$\pm$0.027 &  0.065$\pm$0.017 &  0.044$\pm$0.026 &  0.127$\pm$0.027 \\

\bottomrule
\end{tabular}

MCE comparison\\
\begin{tabular}{llllllllll}
\toprule
Group &  A/A/NW &  A/A/W &  A/S/NW &  A/S/W &  D/A/NW &  D/A/W &  D/S/NW &  D/S/W & Disparity \\
\midrule
Ratio  & 5.7\% &  13.3\% &  12.9\% &  56.7\% & 0.4\% & 0.9\% & 1.8\% & 8.3\% &  56.3\% \\
Naive CE &  0.277$\pm$0.094 &  0.231$\pm$0.034 &  0.207$\pm$0.027 &  0.206$\pm$0.011 &  0.835$\pm$0.119 &  \underline{0.878$\pm$0.071} & 0.839$\pm$0.05 &  0.874$\pm$0.046 &  0.729$\pm$0.031 \\
Naive BS  &  0.315$\pm$0.075 &  0.262$\pm$0.046 &  0.247$\pm$0.039 & 0.23$\pm$0.022 &  0.806$\pm$0.122 &  \underline{0.869$\pm$0.063} &  0.822$\pm$0.057 &  0.861$\pm$0.058 &  0.707$\pm$0.028 \\
Balanced CE  &  0.138$\pm$0.024 &  0.087$\pm$0.016 &  0.074$\pm$0.025 &  0.049$\pm$0.013 &  \underline{0.383$\pm$0.072} &  0.225$\pm$0.167 &  0.191$\pm$0.077 &  0.119$\pm$0.043 &  0.393$\pm$0.062 \\

Balanced BS &  0.155$\pm$0.039 &  0.114$\pm$0.034 &  0.077$\pm$0.018 &  0.072$\pm$0.016 & \underline{0.395$\pm$0.29} & 0.14$\pm$0.045 &  0.159$\pm$0.085 &  0.109$\pm$0.047 & 0.36$\pm$0.269 \\
Zafar &  0.483$\pm$0.227 & 0.647$\pm$0.15 &  0.591$\pm$0.192 &  0.445$\pm$0.086 &  0.619$\pm$0.122 &  0.678$\pm$0.164 &  \underline{0.696$\pm$0.059} &  0.696$\pm$0.034 &  0.495$\pm$0.154 \\
Feldman &  0.272$\pm$0.094 &  0.283$\pm$0.222 &  0.245$\pm$0.108 &  0.219$\pm$0.129 &  \underline{0.864$\pm$0.068} &  0.705$\pm$0.052 & 0.736$\pm$0.05 & 0.734$\pm$0.06 &  0.738$\pm$0.142 \\
Kamishima &  0.416$\pm$0.056 &  0.158$\pm$0.026 &  0.191$\pm$0.019 &  0.154$\pm$0.025 &  \underline{0.823$\pm$0.133} &  0.747$\pm$0.162 & 0.787$\pm$0.08 & 0.785$\pm$0.04 & 0.73$\pm$0.067 \\
MMPF CE  &  0.102$\pm$0.043 &  0.059$\pm$0.022 &  0.097$\pm$0.028 & 0.068$\pm$0.02 &  \underline{0.346$\pm$0.119} & 0.24$\pm$0.156 & 0.147$\pm$0.05 &  0.073$\pm$0.027 &  0.352$\pm$0.114 \\

MMPF BS &  0.111$\pm$0.024 &  0.076$\pm$0.024 &  0.076$\pm$0.031 &  0.041$\pm$0.012 &  \underline{0.347$\pm$0.072} & 0.25$\pm$0.133 &  0.149$\pm$0.039 &  0.063$\pm$0.013 &  0.343$\pm$0.069 \\

MMPF CE P  &  0.163$\pm$0.044 &  0.128$\pm$0.014 & 0.078$\pm$0.03 &  0.077$\pm$0.014 &  \underline{0.489$\pm$0.216} &  0.249$\pm$0.053 &  0.177$\pm$0.045 &  0.126$\pm$0.035 &  0.424$\pm$0.219 \\
MMPF BS P &  0.132$\pm$0.047 &  0.091$\pm$0.028 &  0.078$\pm$0.026 &  0.044$\pm$0.012 &  \textbf{\underline{0.327$\pm$0.101}} &  0.253$\pm$0.152 &  0.12$\pm$0.03 &  0.083$\pm$0.033 &  \textbf{0.317$\pm$0.125} \\

\bottomrule
\end{tabular}

\label{table:MIMICperformance}
\end{table*}

%%%%%%%%%%%%%%%%%%%%%%%%%%%%%%%%%%%%%%%%%%%%%%%%%%%%%%%%%%%%%%%%%%%%%%

\begin{table*}
\caption{Adult ethnicity and gender dataset. We underline the worst group metric per method, and bold the one with the best minimax performance. Smallest disparity is also bolded. Standard deviations are computed across 5 splits.}
\label{table:ADULT4OverallTablefull}
\centering
\scriptsize

Acc comparison\\
\begin{tabular}{llllll}
\toprule
type &  Female Other &  Male Other &  Female White &  Male White & disc \\
\midrule
Ratio  & 6.0\% & 7.7\% &  26.1\% &  60.3\% &  54.3\% \\
Naive LR &  94.7$\pm$0.9\% &  84.0$\pm$1.0\% &  91.8$\pm$0.4\% &  \underline{80.6$\pm$0.5\%} &  14.1$\pm$1.0\% \\
Balanced LR &  95.0$\pm$1.0\% &  84.5$\pm$0.7\% &  91.9$\pm$0.4\% &  \underline{80.5$\pm$0.5\%} &  14.5$\pm$1.0\% \\
Zafar  &  95.1$\pm$0.9\% &  84.1$\pm$1.4\% &  92.0$\pm$0.2\% &  \underline{80.6$\pm$0.5\%}&  14.5$\pm$0.9\% \\
Feldman &  95.1$\pm$1.0\% &  83.7$\pm$1.3\% &  91.8$\pm$0.4\% &  \underline{80.4$\pm$0.3\%} &  14.7$\pm$0.9\% \\

Kamishima  &  95.3$\pm$1.0\% &  83.8$\pm$0.6\% &  91.9$\pm$0.4\% &  \underline{80.0$\pm$1.2\%} &  15.2$\pm$1.8\% \\

MMPF LR &  94.6$\pm$0.7\% &  84.7$\pm$1.0\% &  91.3$\pm$0.3\% &  \underline{80.6$\pm$0.5\%} &  14.0$\pm$1.0\% \\
MMPF LR P &  94.6$\pm$0.7\% &  84.0$\pm$1.0\% &  91.4$\pm$0.6\% &  \underline{80.7$\pm$0.5\%} &  13.9$\pm$1.0\% \\
MMPF &  94.6$\pm$1.2\% &  84.4$\pm$0.9\% &  91.5$\pm$0.5\% &  \underline{80.9$\pm$0.6\%} &  13.6$\pm$1.5\% \\
MMPF P &  94.5$\pm$1.1\% &  84.3$\pm$1.5\% &  90.7$\pm$0.5\% &  \textbf{81.0$\pm$0.8\%} &  \textbf{13.4$\pm$1.5\%} \\
\midrule
Naive LR+H &  76.3$\pm$1.0\% &  72.6$\pm$2.2\% &  75.6$\pm$1.6\% &  \underline{71.7$\pm$2.0\%} &  5.6$\pm$0.8\% \\
Balanced LR+H &  76.5$\pm$1.2\% &  72.3$\pm$2.2\% &  75.7$\pm$2.0\% &  \underline{71.6$\pm$2.1\%} &  5.7$\pm$0.6\%\\
Zafar+H &  73.8$\pm$2.5\% &  70.4$\pm$2.3\% &  73.3$\pm$2.5\% &  \underline{69.7$\pm$3.0\%} &  5.0$\pm$0.7\% \\
Feldman+H &  74.8$\pm$2.7\% &  71.2$\pm$2.6\% &  74.2$\pm$2.3\% &  \underline{70.7$\pm$2.6\%} &  5.5$\pm$1.1\%\\

Kamishima+H &  72.3$\pm$3.5\% &  68.5$\pm$3.2\% &  71.4$\pm$3.6\% &  \underline{68.0$\pm$4.0\%} &  \textbf{4.8$\pm$0.7\%}\\

MMPF LR+H & 79.2$\pm$1.4\% &  75.4$\pm$1.5\% &  78.2$\pm$0.6\% &  \textbf{74.4$\pm$1.8\%} &  5.5$\pm$1.3\%  \\

\bottomrule
\end{tabular}

BS comparison\\
\begin{tabular}{llllll}
\toprule
type &  Female Other &  Male Other &  Female White &  Male White & disc \\
\midrule
Ratio  & 6.0\% & 7.7\% &  26.1\% &  60.3\% &  54.3\% \\
Naive LR &  0.079$\pm$0.009 &  0.207$\pm$0.012 &  0.119$\pm$0.004 &  \underline{0.266$\pm$0.005} &  0.187$\pm$0.011 \\
Balanced LR &  0.078$\pm$0.01 &  0.207$\pm$0.011 &  0.119$\pm$0.004 &  \underline{0.267$\pm$0.006} &  0.189$\pm$0.011 \\
Zafar  & 0.076$\pm$0.01 &  0.208$\pm$0.015 &  0.118$\pm$0.003 &  \underline{0.265$\pm$0.005} & 0.189$\pm$0.01 \\
Feldman &  0.082$\pm$0.011 &  0.213$\pm$0.014 &  0.122$\pm$0.003 &  \underline{0.269$\pm$0.004} &  0.187$\pm$0.011 \\

Kamishima  &  0.081$\pm$0.01 &  0.216$\pm$0.008 &  0.119$\pm$0.005 &  \underline{0.271$\pm$0.016} &  0.191$\pm$0.021 \\

MMPF LR &  0.084$\pm$0.009 &  0.206$\pm$0.013 &  0.126$\pm$0.004 &  \underline{0.264$\pm$0.005} &  0.18$\pm$0.01 \\
MMPF LR P &  0.085$\pm$0.007 &  0.209$\pm$0.014 &  0.127$\pm$0.006 &  \underline{0.264$\pm$0.005} &  0.18$\pm$0.007 \\
MMPF &  0.084$\pm$0.012 &  0.209$\pm$0.009 &  0.126$\pm$0.005 &  \textbf{0.261$\pm$0.005} &  0.177$\pm$0.014 \\
MMPF P &  0.086$\pm$0.013 &  0.209$\pm$0.014 &  0.137$\pm$0.007 &  \underline{0.262$\pm$0.006} &  \textbf{0.176$\pm$0.015} \\

\bottomrule
\end{tabular}

CE comparison\\
\begin{tabular}{llllll}
\toprule
type &  Female Other &  Male Other &  Female White &  Male White & disc \\
\midrule
Ratio  & 6.0\% & 7.7\% &  26.1\% &  60.3\% &  54.3\% \\
Naive LR & 0.14$\pm$0.013 &  0.321$\pm$0.017 &  0.204$\pm$0.005 &  \underline{0.408$\pm$0.008} &  0.268$\pm$0.016 \\
Balanced LR &  0.138$\pm$0.013 &  0.322$\pm$0.018 &  0.203$\pm$0.004 &  \underline{0.411$\pm$0.008} &  0.273$\pm$0.015 \\
Zafar  &  0.143$\pm$0.024 &  0.336$\pm$0.027 &  0.204$\pm$0.003 & \underline{0.409$\pm$0.01} & 0.266$\pm$0.03 \\
Feldman &  0.149$\pm$0.015 &  0.332$\pm$0.019 & 0.21$\pm$0.005 &  \underline{0.412$\pm$0.006} &  0.262$\pm$0.016 \\

Kamishima  &  0.146$\pm$0.014 &  0.337$\pm$0.013 &  0.202$\pm$0.006 &  \underline{0.414$\pm$0.023} &  0.269$\pm$0.026 \\

MMPF LR &  0.153$\pm$0.012 &  0.322$\pm$0.018 &  0.218$\pm$0.005 &  \textbf{0.404$\pm$0.007} &  \textbf{0.251$\pm$0.015} \\
MMPF LR P &  0.153$\pm$0.01 &  0.324$\pm$0.019 &  0.218$\pm$0.008 &  \underline{0.405$\pm$0.006} &  0.251$\pm$0.01 \\
MMPF &  0.141$\pm$0.017 &  0.326$\pm$0.015 &  0.219$\pm$0.009 &  \underline{0.404$\pm$0.009} &  0.263$\pm$0.022 \\
MMPF P &  0.151$\pm$0.021 &  0.331$\pm$0.029 &  0.245$\pm$0.007 & \underline{0.41$\pm$0.014} & 0.258$\pm$0.03 \\

\bottomrule
\end{tabular}

ECE comparison\\
\begin{tabular}{llllll}
\toprule
type &  Female Other &  Male Other &  Female White &  Male White & disc \\
\midrule
Ratio  & 6.0\% & 7.7\% &  26.1\% &  60.3\% &  54.3\% \\
Naive LR & 0.02$\pm$0.006 & \underline{0.03$\pm$0.003} &  0.019$\pm$0.003 &  0.014$\pm$0.004 &  0.018$\pm$0.003 \\
Balanced LR &  0.017$\pm$0.004 &  \textbf{0.025$\pm$0.005} &  0.014$\pm$0.004 & 0.02$\pm$0.004 &  \textbf{0.014$\pm$0.006} \\
Zafar  &  0.013$\pm$0.003 &  \underline{0.032$\pm$0.008} &  0.015$\pm$0.003 &  0.012$\pm$0.004 &  0.021$\pm$0.011 \\
Feldman &  0.029$\pm$0.004 &  \underline{0.034$\pm$0.009} &  0.024$\pm$0.001 &  0.011$\pm$0.003 &  0.026$\pm$0.009 \\

Kamishima  &  0.014$\pm$0.003 &  \underline{0.028$\pm$0.005} &  0.009$\pm$0.003 &  0.013$\pm$0.004 &  0.019$\pm$0.006 \\

MMPF LR &  \underline{0.036$\pm$0.007} &  0.028$\pm$0.008 &  0.036$\pm$0.006 &  0.01$\pm$0.003 &  0.031$\pm$0.005 \\
MMPF LR P &  0.029$\pm$0.007 &  \underline{0.033$\pm$0.009} &  0.024$\pm$0.009 &  0.013$\pm$0.002 &  0.025$\pm$0.003 \\
MMPF &  0.022$\pm$0.004 &  \underline{0.031$\pm$0.003} &  0.015$\pm$0.004 &  0.016$\pm$0.004 &  0.018$\pm$0.007 \\
MMPF P &  0.028$\pm$0.005 &  \underline{0.033$\pm$0.007} &  0.018$\pm$0.003 &  0.015$\pm$0.004 &  0.021$\pm$0.006 \\

\bottomrule
\end{tabular}

MCE comparison\\
\begin{tabular}{llllll}
\toprule
type &  Female Other &  Male Other &  Female White &  Male White & disc \\
\midrule

Ratio  & 6.0\% & 7.7\% &  26.1\% &  60.3\% &  54.3\% \\
Naive LR &  \underline{0.146$\pm$0.079} &  0.111$\pm$0.032 &  0.082$\pm$0.028 &  0.036$\pm$0.014 &  0.132$\pm$0.06 \\
Balanced LR &  \underline{0.215$\pm$0.075} &  0.086$\pm$0.017 &  0.062$\pm$0.022 &  0.059$\pm$0.019 &  0.161$\pm$0.088 \\
Zafar  &  \underline{0.174$\pm$0.046} & 0.118$\pm$0.03 &  0.061$\pm$0.038 &  0.033$\pm$0.013 &  0.145$\pm$0.044 \\
Feldman &  \underline{0.226$\pm$0.124} &  0.107$\pm$0.028 &  0.083$\pm$0.018 &  0.027$\pm$0.008 &  0.202$\pm$0.115 \\

Kamishima  &  \underline{0.283$\pm$0.126} &  0.103$\pm$0.027 &  0.081$\pm$0.024 &  0.031$\pm$0.008 & 0.252$\pm$0.12 \\

MMPF LR &  \textbf{0.109$\pm$0.033} &  0.077$\pm$0.019 &  0.085$\pm$0.02 &  0.03$\pm$0.008 &  \textbf{0.088$\pm$0.031} \\
MMPF LR P &  \underline{0.22$\pm$0.078} &  0.118$\pm$0.063 &  0.059$\pm$0.021 &  0.038$\pm$0.005 &  0.184$\pm$0.079 \\
MMPF &  \underline{0.202$\pm$0.051} & 0.076$\pm$0.02 & 0.107$\pm$0.03 &  0.043$\pm$0.013 &  0.166$\pm$0.046 \\
MMPF P & \underline{0.17$\pm$0.078} &  0.087$\pm$0.024 & 0.085$\pm$0.03 &  0.038$\pm$0.017 &  0.151$\pm$0.067 \\

\bottomrule
\end{tabular}
\end{table*}

\begin{table} [ht]
\centering
\caption{Adult gender dataset. We underline the worst group metric per method, and bold the one with the best minimax performance. Smallest disparity is also bolded. Standard deviations are computed across 5 splits.}
\label{table:AdultGenderFullMetrics}
\scriptsize

Adult gender Acc\\
\begin{tabular}{llll}
\toprule
{} &  \makecell{Female} & \makecell{Male} & Disparity \\
\midrule 
Ratio  & 32.1\% &  67.9\% &  35.9\%\\
Naive LR  &  92.3$\pm$0.4 &  \underline{80.5$\pm$0.4} &  11.9$\pm$0.7 \\
Balanced LR  &  92.3$\pm$0.3 & \underline{80.3$\pm$0.7} &  12.0$\pm$0.7 \\

Zafar &  92.5$\pm$0.3 &  \underline{80.9$\pm$0.3} &  11.6$\pm$0.4 \\
Feldman &  92.3$\pm$0.3 &  \underline{80.7$\pm$0.2} &  11.6$\pm$0.1 \\
Kamishima  &  92.6$\pm$0.4 &  \underline{80.9$\pm$0.4} &  11.7$\pm$0.7 \\

MMPF LR & 91.9$\pm$0.4 &  \underline{81.0$\pm$0.4} &  10.9$\pm$0.7\\
MMPF &  92.1$\pm$0.3 &  \underline{81.3$\pm$0.3} &  10.8$\pm$0.5 \\
MMPF LR P & 92.0$\pm$0.4 &  \underline{81.0$\pm$0.5} &  11.0$\pm$0.6 \\
MMPF P &  91.7$\pm$0.3 &  \textbf{81.5$\pm$0.5} &  \textbf{10.1$\pm$0.5} \\
\midrule
Feldman+H &  \underline{72.3$\pm$2.5\%} &  76.5$\pm$2.7\% &  4.1$\pm$0.8\% \\
Kamishima+H &  \underline{73.5$\pm$2.0\%} &  77.8$\pm$1.2\% &  4.3$\pm$1.4\% \\
Zafar+H &  \underline{73.3$\pm$2.7\%} &  77.3$\pm$2.5\% &  4.0$\pm$1.1\%  \\
Naive LR+H &  \underline{74.2$\pm$2.7\%} &  78.6$\pm$2.1\% &  4.4$\pm$0.7\% \\
Balanced LR+H &  \underline{73.8$\pm$3.1\%} &  77.7$\pm$2.9\% &  3.9$\pm$0.9\% \\
MMPF LR+H &  \textbf{76.0$\pm$2.2\%} &  79.8$\pm$1.6\% &  \textbf{3.8$\pm$1.3\%} \\

\bottomrule
\end{tabular}

Adult gender CE\\
\begin{tabular}{llll}
\toprule
{} &  \makecell{Female} & \makecell{Male} & Disparity \\
\midrule 
Ratio  & 32.1\% &  67.9\% &  35.9\% \\
Naive LR  &  .204$\pm$.009 &  \underline{.411$\pm$.006} &  .207$\pm$.007\\
Balanced LR  &  .204$\pm$.011 & \underline{.416$\pm$.011} &  .211$\pm$.005 \\

Zafar &  .202$\pm$.018 &  \underline{.398$\pm$.006} &  .195$\pm$.023 \\
Feldman &  .201$\pm$.004 &  \underline{.403$\pm$.004} &  .203$\pm$.006 \\
Kamishima  &  .189$\pm$.006 &  \textbf{.395$\pm$.004} &  .206$\pm$.007 \\

MMPF LR &  .204$\pm$.008 &  \underline{.395$\pm$.006} & .19$\pm$.011 \\
MMPF & .21$\pm$.019 &  \underline{.403$\pm$.025} &  .193$\pm$.013 \\
MMPF LR P & .208$\pm$.008 &  \underline{.395$\pm$.005} & .187$\pm$.01 \\
MMPF P &  .227$\pm$.019 &  \underline{.403$\pm$.023} &  \textbf{.176$\pm$.014} \\
\bottomrule
\end{tabular}

Adult gender BS\\
\begin{tabular}{llll}
\toprule
{} & \makecell{Female} & \makecell{Male} & Disparity \\
\midrule 
Ratio  & 32.1\% &  67.9\% &  35.9\% \\
Naive LR  &  .116$\pm$.004 &  \underline{.268$\pm$.004}  &  .152$\pm$.005  \\
Balanced LR  &  .117$\pm$.005  & \underline{.272$\pm$.007}  &  .155$\pm$.005  \\

Zafar & .11$\pm$.004 &  \underline{.258$\pm$.003} &  .147$\pm$.005 \\
Feldman &  .115$\pm$.003 &  \underline{.263$\pm$.002} &  .148$\pm$.003 \\
Kamishima  & .11$\pm$.005 &  \underline{.258$\pm$.003} &  .147$\pm$.005 \\

MMPF LR & .117$\pm$.006 &  \underline{.257$\pm$.004} & .14$\pm$.007 \\
MMPF &  .117$\pm$.004 &  \textbf{.255$\pm$.004} &  .138$\pm$.007 \\
MMPF LR P &  .119$\pm$.005 &  \underline{.258$\pm$.004} &  .138$\pm$.007 \\
MMPF P &  .127$\pm$.003 & \underline{.256$\pm$.005} &  \textbf{.129$\pm$.003} \\
\bottomrule
\end{tabular}

Adult gender ECE\\
\begin{tabular}{llll}
\toprule
{} &  \makecell{Female} & \makecell{Male} & Disparity \\
\midrule 
Ratio  & 32.1\% &  67.9\% &  35.9\% \\
Naive LR  &  \underline{.026$\pm$.008} &  .013$\pm$.004 &  .013$\pm$.004 \\
Balanced LR  & \underline{.023$\pm$.007} & .014$\pm$.005 &  .01$\pm$.006 \\

Zafar & \textbf{.01$\pm$.002} & .01$\pm$.003 &  \textbf{.003$\pm$.001} \\
Feldman &  \underline{.026$\pm$.003} & .01$\pm$.005 &  .016$\pm$.006 \\
Kamishima  &  \underline{.012$\pm$.003} &  .012$\pm$.002 &  .003$\pm$.003 \\

MMPF LR &  \underline{.032$\pm$.005} &  .011$\pm$.003 &  .021$\pm$.005 \\
MMPF &  .009$\pm$.002 &  \underline{.015$\pm$.003} &  .006$\pm$.004 \\
MMPF LR P &  \underline{.028$\pm$.002} &  .009$\pm$.002 &  .019$\pm$.002 \\
MMPF P & \underline{.02$\pm$.006} &  .015$\pm$.001 &  .006$\pm$.005 \\

\bottomrule
\end{tabular}

Adult gender MCE\\
\begin{tabular}{llll}
\toprule
{} &  \makecell{Female} & \makecell{Male} & Disparity \\
\midrule 
Ratio  & 32.1\% &  67.9\% &  35.9\% \\
Naive LR  &  \underline{.064$\pm$.012} &  .027$\pm$.013 &  .037$\pm$.019\\
Balanced LR  & \underline{.065$\pm$.034} &  .031$\pm$.009 &  .034$\pm$.027 \\

Zafar &  \underline{.058$\pm$.013} & .032$\pm$.01 &  .027$\pm$.008 \\
Feldman &  \underline{.071$\pm$.017} &  .024$\pm$.013 &  .047$\pm$.012 \\
Kamishima  &  \underline{.073$\pm$.008} &  .031$\pm$.009 &  .042$\pm$.002 \\

MMPF LR &  \underline{.072$\pm$.017} &  .033$\pm$.006 &  .039$\pm$.021 \\
MMPF &  \textbf{.057$\pm$.021} &  .031$\pm$.003 &  \textbf{.026$\pm$.019} \\
MMPF LR P &  \underline{.064$\pm$.004} & .03$\pm$.004 &  .034$\pm$.004 \\
MMPF P &  \underline{.085$\pm$.023} & .047$\pm$.01 &  .039$\pm$.024 \\

\bottomrule
\end{tabular}

\end{table}

%%%%%%%%%%%%% GERMAN!! %%%%%%%%%%%%%%%%%%%%%%%%%%%%%%%%%

\begin{table} [ht]
\centering
\caption{German dataset. We underline the worst group metric per method, and bold the one with the best minimax performance. Smallest disparity is also bolded. Standard deviations are computed across 5 splits.}
\label{table:GermanGenderFullMetrics}
\scriptsize

German Acc\\
\begin{tabular}{llll}
\toprule
{} &  \makecell{Female} & \makecell{Male} & Disparity \\
\midrule 
Ratio  & 29.5\% &  70.5\% & 41.0\% \\
Naive LR  &  \underline{70.7$\pm$7.3} &  71.2$\pm$4.5 &  8.8$\pm$4.7 \\
Balanced LR  &  71.6$\pm$5.9 &  \underline{70.9$\pm$4.1} &  5.8$\pm$3.6 \\

Zafar  & 73.0$\pm$5.6 &  \underline{71.0$\pm$3.5} &  5.8$\pm$3.5 \\

Feldman  &  73.5$\pm$8.6 &  \textbf{71.9$\pm$4.3} &  7.9$\pm$4.4 \\
Kamishima &  \underline{68.8$\pm$6.8} &  72.7$\pm$2.6 &  6.0$\pm$4.4 \\

MMPF LR & 72.5$\pm$5.5 &  \underline{71.6$\pm$2.8} &  5.0$\pm$2.6 \\
MMPF LR P&  \underline{70.7$\pm$4.5} &  71.5$\pm$3.6 &  \textbf{4.4$\pm$0.5} \\
\midrule
Naive LR+H &  \underline{57.5$\pm$1.7} &  57.8$\pm$1.8 &  5.7$\pm$3.6 \\
Balanced LR+H &  \underline{60.5$\pm$4.2} &  60.9$\pm$4.5 &  4.6$\pm$3.3 \\
Feldman+H &  \underline{61.6$\pm$4.7} &  62.2$\pm$5.0 &  7.1$\pm$3.9 \\
Kamishima+H &  \underline{61.7$\pm$4.0} &  61.3$\pm$4.2 &  4.5$\pm$2.2 \\
Zafar+H &  \underline{59.8$\pm$4.0} &  60.5$\pm$4.9 &  6.6$\pm$4.9 \\
MMPF LR+H &  \textbf{65.7$\pm$4.7} &  65.9$\pm$4.7 &  \textbf{3.6$\pm$1.7} \\

\bottomrule
\end{tabular}

German CE\\
\begin{tabular}{llll}
\toprule
{} &  \makecell{Female} & \makecell{Male} & Disparity \\
\midrule 
Ratio  &  29.5\% &  70.5\% &  41.0\% \\
Naive LR  &  \underline{.607$\pm$.1} &  .559$\pm$.069 &  .127$\pm$.064 \\
Balanced LR  &  \underline{.594$\pm$.082} &  .568$\pm$.068 & .096$\pm$.05 \\
Zafar  &  .567$\pm$.09 &  \underline{.735$\pm$.205} &  .273$\pm$.151 \\
Feldman  & \underline{.564$\pm$.096} &  .551$\pm$.063 &  .091$\pm$.068 \\
Kamishima &  \underline{.62$\pm$.064} &  .545$\pm$.062 &  .075$\pm$.067 \\

MMPF LR&  \underline{.565$\pm$.04} &  .544$\pm$.046 &  \textbf{.048$\pm$.041} \\
MMPF LR P & \textbf{.563$\pm$.043} &  .537$\pm$.051 &  .057$\pm$.034 \\

\bottomrule
\end{tabular}

German BS\\
\begin{tabular}{llll}
\toprule
{} &  \makecell{Female} & \makecell{Male} & Disparity \\
\midrule 
Ratio  &  29.5\% &  70.5\% &  41.0\% \\
Naive LR  &  \underline{.404$\pm$.077} & .379$\pm$.05 &  .094$\pm$.043 \\
Balanced LR  &  \underline{.393$\pm$.069} & .38$\pm$.043 &  .071$\pm$.036 \\

Zafar  &  .379$\pm$.07 &  \underline{.383$\pm$.052} & .072$\pm$.05 \\
Feldman  &  \textbf{.375$\pm$.079} &  .371$\pm$.045 &  .07$\pm$.05 \\
Kamishima &  \underline{.413$\pm$.051} &  .368$\pm$.044 &  .047$\pm$.048 \\

MMPF LR & \underline{.379$\pm$.038} &  .368$\pm$.039 & \textbf{.044$\pm$.03} \\
MMPF LR P & \underline{.381$\pm$.039} &  .363$\pm$.041 &  .051$\pm$.025 \\

\bottomrule
\end{tabular}

German ECE\\
\begin{tabular}{llll}
\toprule
{} &  \makecell{Female} & \makecell{Male} & Disparity \\
\midrule 
Ratio  & 29.5\% &  70.5\% &  41.0\% \\
Naive LR  &  \underline{.136$\pm$.047} &  .099$\pm$.029 &  .051$\pm$.036 \\
Balanced LR  & \underline{.136$\pm$.032} &  .107$\pm$.032 &  \textbf{.038$\pm$.025} \\

Zafar  &  \underline{.117$\pm$.039} &  .097$\pm$.038 &  .047$\pm$.022 \\
Feldman  & \underline{.129$\pm$.045} & .091$\pm$.04 &  .052$\pm$.043 \\
Kamishima & \underline{.125$\pm$.067} &  .082$\pm$.029 & .052$\pm$.04 \\

MMPF LR & \underline{.096$\pm$.036} &  .046$\pm$.019 &  .056$\pm$.046 \\
MMPF LR P & \textbf{.088$\pm$.039} &  .049$\pm$.008 &  .047$\pm$.034 \\

\bottomrule
\end{tabular}

German MCE\\
\begin{tabular}{llll}
\toprule
{} &  \makecell{Female} & \makecell{Male} & Disparity \\
\midrule 
Ratio  & 29.5\% &  70.5\% &  41.0\% \\
Naive LR  & \underline{.308$\pm$.11} & .23$\pm$.023 &  .095$\pm$.091 \\
Balanced LR  & \underline{.322$\pm$.127} &  .216$\pm$.088 &  .106$\pm$.046 \\

Zafar  &  \underline{.255$\pm$.135} & .206$\pm$.08 &  .138$\pm$.072 \\
Feldman  &  \underline{.285$\pm$.096} &  .166$\pm$.048 &  .137$\pm$.083 \\
Kamishima & \underline{.212$\pm$.084} &  .157$\pm$.052 &  \textbf{.062$\pm$.062} \\

MMPF LR & \underline{.18$\pm$.067} & .11$\pm$.046 &  .093$\pm$.075 \\
MMPF LR P &  \textbf{.172$\pm$.073 }& .11$\pm$.029 &  .106$\pm$.046 \\

\bottomrule
\end{tabular}

\end{table}

\end{document}